\theoremstyle{plain}
\newtheorem{theorem}{Theorem}[section]
\newtheorem{prop}[theorem]{Proposition}
\theoremstyle{definition}
\newtheorem{defn}[theorem]{Definition}
\theoremstyle{remark}
\newcommand{\iirchi}[2]{\raisebox{0.8\depth}{$#1\chi$}}
\DeclareRobustCommand{\lchi}{{\mathpalette\iirchi\relax}}
\DeclareMathOperator{\diag}{diag}
\newcommand{\abs}[1]{\left\lvert#1\right\rvert}
\newcommand{\norm}[1]{\left\|#1\right\|}
\newcommand{\al}{\alpha}
\newcommand{\la}{\lambda}
\newcommand{\T}{\theta}
\newcommand{\R}{\mathbb{R}}
\icmltitlerunning{COLA: Consistent Learning with Opponent-Learning Awareness}
\begin{document}

\twocolumn[
\icmltitle{COLA: Consistent Learning with Opponent-Learning Awareness}



\icmlsetsymbol{equal}{*}

\begin{icmlauthorlist}
\icmlauthor{Timon Willi}{equal,ox}
\icmlauthor{Alistair Letcher}{equal}
\icmlauthor{Johannes Treutlein}{equal,ut,vi}
\icmlauthor{Jakob Foerster}{ox}
\end{icmlauthorlist}

\icmlaffiliation{ox}{Department of Engineering Science, University of Oxford, United Kingdom}
\icmlaffiliation{vi}{Vector Institute, Toronto, Canada}
\icmlaffiliation{ut}{Department of Computer Science, University of Toronto, Canada}

\icmlcorrespondingauthor{}{timon.willi@eng.ox.ac.uk}

\icmlkeywords{Machine Learning, ICML}

\vskip 0.3in
]



\printAffiliationsAndNotice{\icmlEqualContribution} 

\begin{abstract}
Learning in general-sum games is unstable and frequently leads to socially undesirable (Pareto-dominated) outcomes. To mitigate this, Learning with Opponent-Learning Awareness (LOLA) introduced \emph{opponent shaping} to this setting, by accounting for each agent's influence on their opponents' anticipated learning steps. However, the original LOLA formulation (and follow-up work) is \emph{inconsistent} because LOLA models other agents as \emph{naive learners} rather than LOLA agents.
In previous work, this inconsistency was suggested as a cause of LOLA's failure to preserve stable fixed points (SFPs). First, we formalize consistency and show that higher-order LOLA (HOLA) solves LOLA's inconsistency problem if it converges. Second, we \textit{correct a claim} made in the literature by Schäfer and Anandkumar (2019), proving that Competitive Gradient Descent (CGD) does \emph{not} recover HOLA as a series expansion (and fails to solve the consistency problem).
Third, we propose a new method called Consistent LOLA (COLA), which \emph{learns} update functions that are consistent under mutual opponent shaping. It requires no more than second-order derivatives and learns consistent update functions even when HOLA fails to converge. However, we also prove that even consistent update functions do not preserve SFPs, contradicting the hypothesis that this shortcoming is caused by LOLA's inconsistency.
Finally, in an empirical evaluation on a set of general-sum games, we find that COLA finds prosocial solutions and that it converges under a wider range of learning rates than HOLA and LOLA. We support the latter finding with a theoretical result for a simple game.
\end{abstract}

\section{Introduction}
Much research in deep multi-agent reinforcement learning (MARL) has focused on zero-sum games like Starcraft and Go \citep{silver2017go, vinyals2019alphastar} or fully cooperative settings \citep{oroojlooyjadid2019review}. However, many real-world problems, e.g. self-driving cars, contain both cooperative and competitive elements, and are thus better modeled as general-sum games. One such game is the famous Prisoner's Dilemma \citep{Axelrod84}, in which agents have an individual incentive to defect against their opponent, even though they would prefer the outcome in which both cooperate to the one where both defect. A strategy for the infinitely iterated version of the game (IPD) is tit-for-tat, which starts out cooperating and otherwise mirrors the opponent's last move. It achieves mutual cooperation when chosen by both players and has proven to be successful at IPD tournaments \citep{Axelrod84}. If MARL algorithms are deployed in the real world, it is essential that they are able to cooperate with others and entice others to cooperate with them, using strategies such as tit-for-tat \citep{dafoe2020open}. However, naive gradient descent and other more sophisticated methods~\cite{eg_original,mescheder_numerics_2018,balduzzi_mechanics_2018,mazumdar_finding_2019,schafer_competitive_2020} converge to the mutual defection policy under random initialization \citep{letcher_stable_2019}.
%

An effective paradigm to improve learning in general-sum games is \emph{opponent shaping}, where agents take into account their influence on the anticipated learning step of the other agents. LOLA \citep{foerster_learning_2018} was the first work to make explicit use of opponent shaping and is one of the only general learning methods designed for general-sum games that obtains mutual cooperation with the tit-for-tat strategy in the IPD. 
While LOLA discovers these prosocial equilibria, the original LOLA formulation is inconsistent because LOLA agents assume that their opponent is a \emph{naive learner}. This assumption is clearly violated if two LOLA agents learn together. It has been suggested that this inconsistency is the cause for LOLA's main shortcoming, which is not maintaining the stable fixed points (SFPs) of the underlying game, even in some simple quadratic games (\citeauthor{letcher2018thesis}~\citeyear{letcher2018thesis},~pp.~2,~26; see also \citeauthor{letcher_stable_2019}~\citeyear{letcher_stable_2019}).

\paragraph{Contributions.}
To address LOLA's inconsistency, we first revisit the concept of \emph{higher-order} LOLA (HOLA) \citep{foerster_learning_2018} in Section~\ref{conv-and-consistency-LOLA}. For example, \emph{second-order} LOLA assumes that the opponent is a \emph{first-order} LOLA agent (which in turn assumes the opponent is a naive learner) and so on. Supposing that HOLA converges with increasing order, we define \emph{infinite-order} LOLA (iLOLA) as the limit. Intuitively, two iLOLA agents have a \textit{consistent} view of each other since they accurately account for the learning behavior of the opponent under mutual opponent shaping. Based on this idea, we introduce a formal definition of \textit{consistency} and prove that, if it exists, iLOLA is indeed consistent (Proposition~\ref{ilola-consistent}). 

Second, in Section~\ref{subsec:CGD}, we correct a claim made in previous literature, which would have provided a closed-form solution of the iLOLA update. According to \citet{schafer_competitive_2020}, the Competitive Gradient Descent (CGD) algorithm recovers HOLA as a series expansion. If true, this would imply that CGD coincides with iLOLA, thus solving LOLA's inconsistency problem. We prove that this is untrue: CGD's series expansion does, in general, not recover HOLA, CGD does not correspond to iLOLA, and CGD does not solve the inconsistency problem (Proposition~\ref{cgd-ilola}).

In lieu of a closed-form solution, a naive way of computing the iLOLA update is to iteratively compute higher orders of LOLA until convergence. However, there are two main problems with addressing consistency using a limiting update: the process may diverge and typically requires arbitrarily high derivatives. To address these, in Section~\ref{subsec:COLA}, we propose Consistent LOLA (COLA) as a more robust and efficient alternative. COLA learns a pair of consistent update functions by explicitly minimizing a differentiable measure of consistency inspired by our formal definition. 
We use the representational power of neural networks and gradient based optimization to minimize this loss, resulting in \textit{learned} update functions that are mutually consistent. By reframing the problem as such, we only require up to second-order derivatives. 

In Section~\ref{theoretical-results-cola}, we prove initial results about COLA. First, we show that COLA's solutions are not necessarily unique. Second, despite being consistent, COLA does not recover SFPs, contradicting the prior belief that this shortcoming is caused by inconsistency. Third, to show the benefit of additional consistency, we prove that COLA converges under a wider range of look-ahead rates than LOLA in a simple general-sum game.

Finally, in Sections~\ref{section-experiments} and \ref{results}, we report our experimental setup and results, investigating COLA and HOLA and comparing COLA to LOLA and CGD in a range of games. We experimentally confirm our theoretical result that CGD does not equal iLOLA. 
Moreover, we show that COLA converges under a wider range of look-ahead rates than HOLA and LOLA, and that it is generally able to find socially desirable solutions. It is the only algorithm consistently converging to the fair solution in the Ultimatum game, and while it does not find tit-for-tat in the IPD (unlike LOLA), it does learn policies with near-optimal total payoff. We find that COLA learns consistent update functions even when HOLA diverges with higher order and its updates are similar to iLOLA when HOLA converges. Although COLA solutions are not unique in theory, COLA empirically tends to find similar solutions over different runs.

\section{Related work}

\begin{table*}[hbt!]
\small
\centering
\caption{On the Tandem game: (a) Log of the squared consistency loss, where e.g. HOLA6 is sixth-order higher-LOLA. (b) Cosine similarity between COLA and LOLA, HOLA3, and HOLA6 over different look-ahead rates. The values represent the mean of a 1,000 samples, uniformly sampled from the parameter space $\Theta$. Error bars represent one standard deviation over 10 COLA training runs.}
\subfloat[]{\begin{tabular}{l|l l l l}
\label{tab:beta_hola}$\alpha$ & LOLA  & HOLA3 & HOLA6  & \multicolumn{1}{c}{COLA}            \\ \hline
1.0                                             & 128.0 & 512   & 131072 & 3e-14$\pm$2e-15 \\ \hline
0.5                                             & 12.81 & 14.05 & 12.35  & 2e-14$\pm$5e-15 \\ \hline
0.3                                             & 2.61  & 2.05  & 0.66   & 4e-14$\pm$3e-15 \\ \hline
0.1                                             & 0.08  & 9e-3  & 2e-6   & 6e-14$\pm$9e-15 \\ \hline
0.01                                            & 1e-5  & 2e-8  & 4e-14  & 1e-14$\pm$4e-14 \\ \hline
\end{tabular}}
\quad
\subfloat[]{\begin{tabular}{l|l l l}
\label{tab:tandem_cos}
$\alpha$ & \multicolumn{1}{c}{LOLA} & \multicolumn{1}{c}{HOLA3} & \multicolumn{1}{c}{HOLA6} \\ \hline
1.0      & 0.94$\pm$0.04             & 0.94$\pm$0.04              & 0.94$\pm$0.04             \\ \hline
0.5      & 0.88$\pm$0.12             & 0.88$\pm$0.12              & 0.08$\pm$0.13             \\ \hline
0.3      & 0.92$\pm$0.01             & 0.91$\pm$0.01              & 0.80$\pm$0.01             \\ \hline
0.1      & 0.95$\pm$0.01             & 0.99$\pm$0.01              & 0.99$\pm$0.01             \\ \hline
0.01     & 0.99$\pm$0.01             & 1.00$\pm$0.00                & 1.00$\pm$0.00               \\ \hline
\end{tabular}}
\end{table*}
\label{appendix:related_work}
General-sum learning algorithms have been investigated from different perspectives in the reinforcement learning, game theory, and GAN literature \citep{Schmidhuber:90sab,barto2002hierarchical, goodfellow2014gans, racaniere2017imagination}. Next, we will highlight a few of the approaches to the mutual opponent shaping problem.

Opponent modeling maintains an explicit belief of the opponent, allowing to reason over their strategies and compute optimal responses. Opponent modeling can be divided into different subcategories: There are classification methods, classifying the opponents into pre-defined types \citep{weber2009datamining, synnaeve2011bayesian}, or policy reconstruction methods, where we explicitly predict the actions of the opponent \citep{mealing2017opponent}. Most closely related to opponent shaping is recursive reasoning, where methods model nested beliefs of the opponents \citep{he2016opponent, albrecht2019reasoning, wen2019probabilistic}.

In comparison, COLA assumes that we have access to the ground-truth model of the opponent, e.g., the opponent's payoff function, parameters, and gradients, putting COLA into the framework  of differentiable games \citep{balduzzi_mechanics_2018}. Various methods have been proposed, investigating the local convergence properties to different solution concepts \citep{mescheder_numerics_2018, mazumdar_finding_2019, letcher_stable_2019, schafer_competitive_2020, azizian2020tight,  schafer2020competitive, hutter2020learning}. Most of the work in differentiable games has not focused on opponent shaping or consistency. \citet{mescheder_numerics_2018} and \citet{mazumdar_finding_2019} focus solely on zero-sum games without shaping. To improve upon LOLA, \citet{letcher_stable_2019} suggested Stable Opponent Shaping (SOS), which applies ad-hoc corrections to the LOLA update, leading to theoretically guaranteed convergence to SFPs. However, despite its desirable convergence properties, SOS still does not solve the conceptual issue of inconsistent assumptions about the opponent. CGD \citep{schafer_competitive_2020} addresses the inconsistency issue for zero-sum games but not for general-sum games. The exact difference between CGD, LOLA and our method is addressed in Section~\ref{subsec:CGD}. Model-Free Opponent Shaping (M-FOS) \citep{lu2022mfos} frames opponent shaping as a meta-learning problem requiring only first-order derivatives. M-FOS is consistent in that it does not make any assumption about the learning algorithm of the opponent. However, the work does not investigate consistency specifically.

\section{Background}

\begin{figure*}[htp!]
 \centering
 \begin{subfigure}[]{0.32\linewidth}
     \centering
	\includegraphics[width=\linewidth]{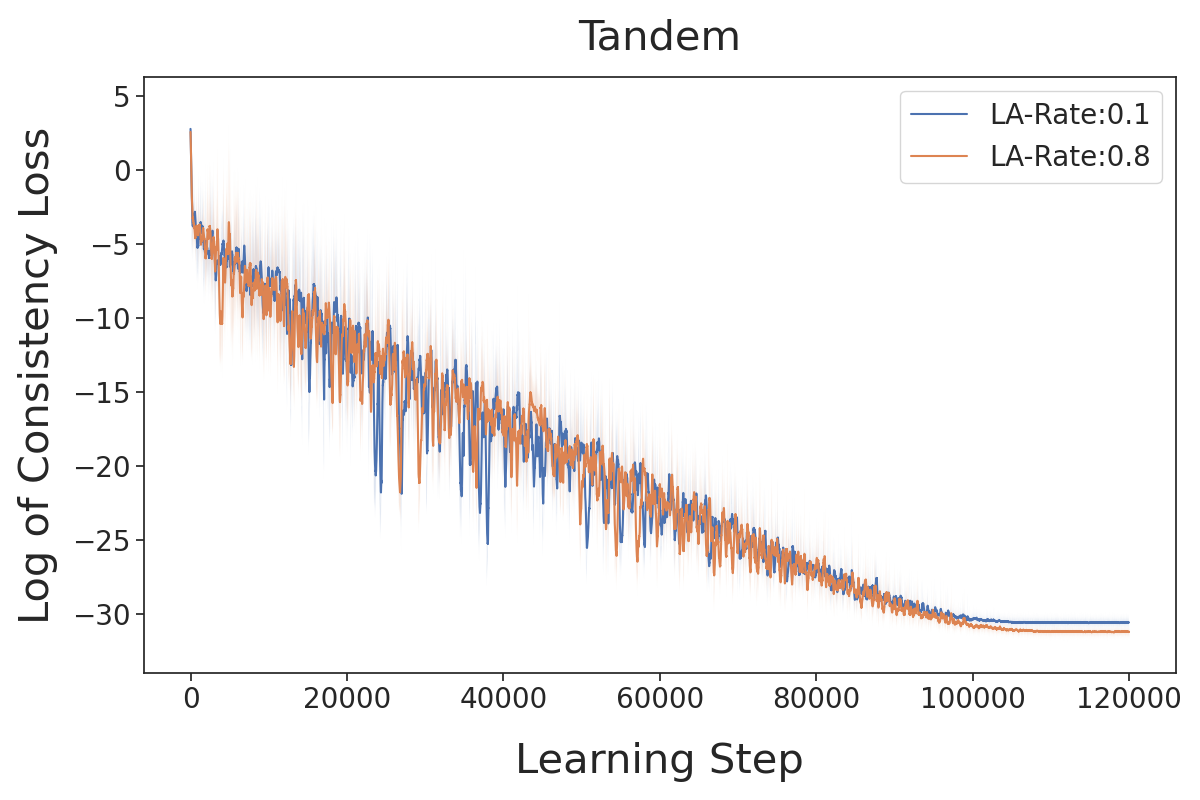}
	\caption{}
	\label{fig:tandem_cons}
 \end{subfigure}
 \begin{subfigure}[]{0.32\linewidth}
    \centering
	\includegraphics[width=\linewidth]{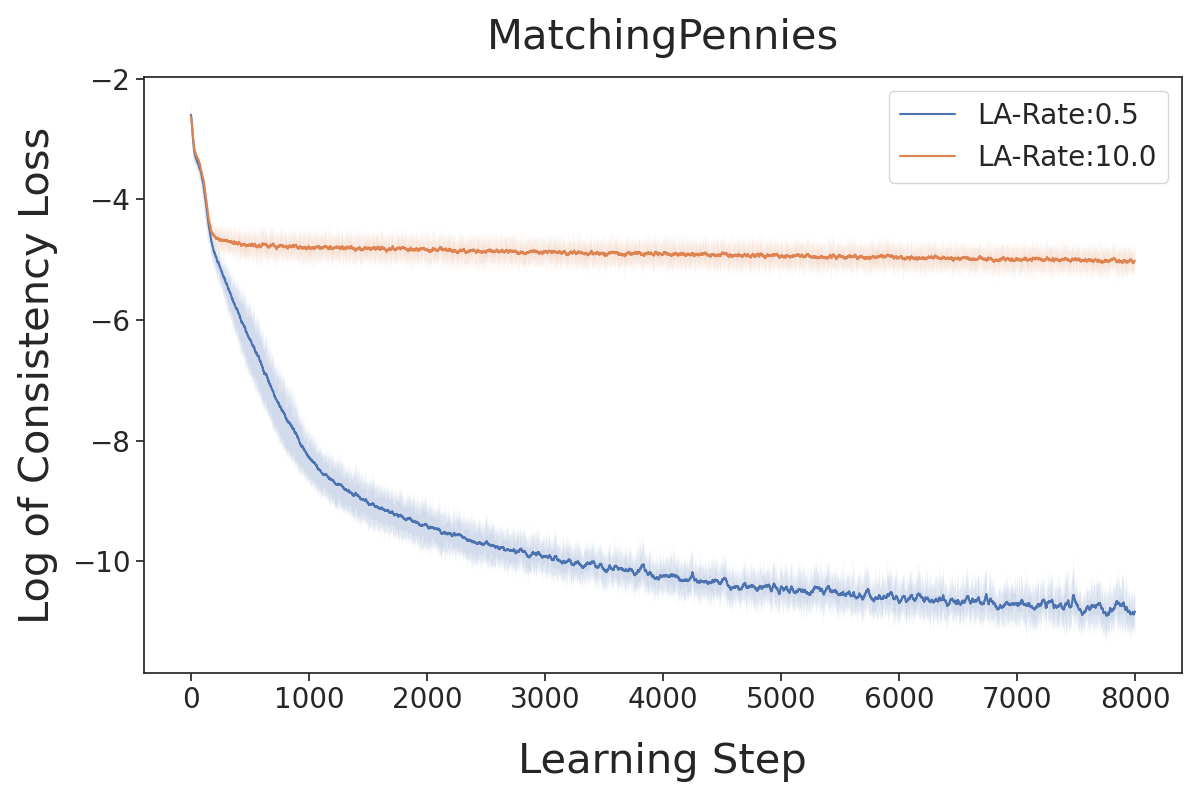}
	\caption{}
	\label{fig:imp_cons}
 \end{subfigure}
 \begin{subfigure}[]{0.32\linewidth}
	\includegraphics[width=\linewidth]{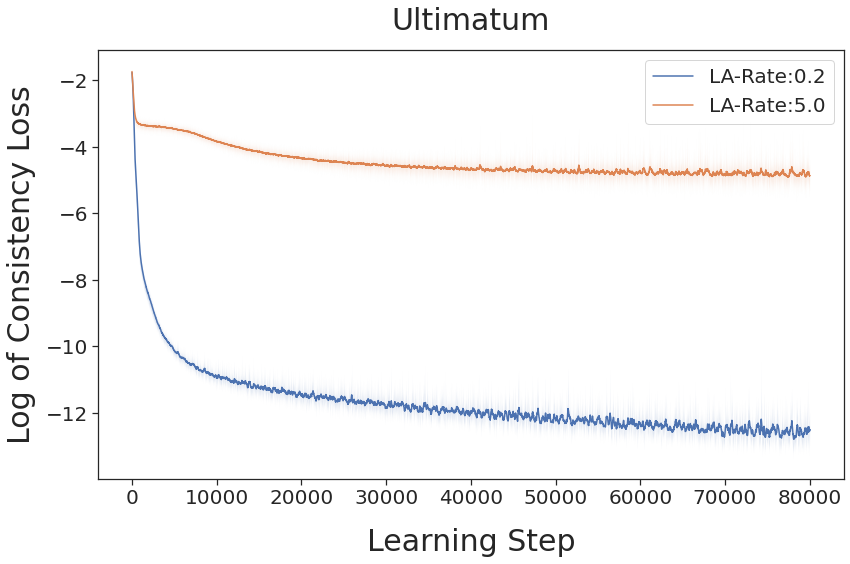}
	\caption{}
	\label{fig:ult_cons}
 \end{subfigure}
\begin{subfigure}[]{0.32\linewidth}
    \centering
  	\includegraphics[width=\linewidth]{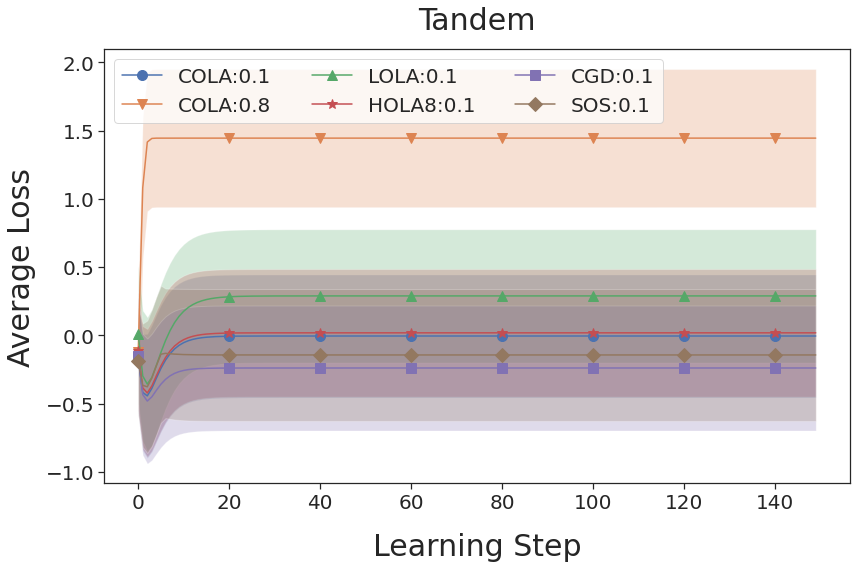}
  	  	\caption{} 
  	\label{fig:tandem_play}
 \end{subfigure}
\begin{subfigure}[]{0.32\linewidth}
    \centering
  	\includegraphics[width=\linewidth]{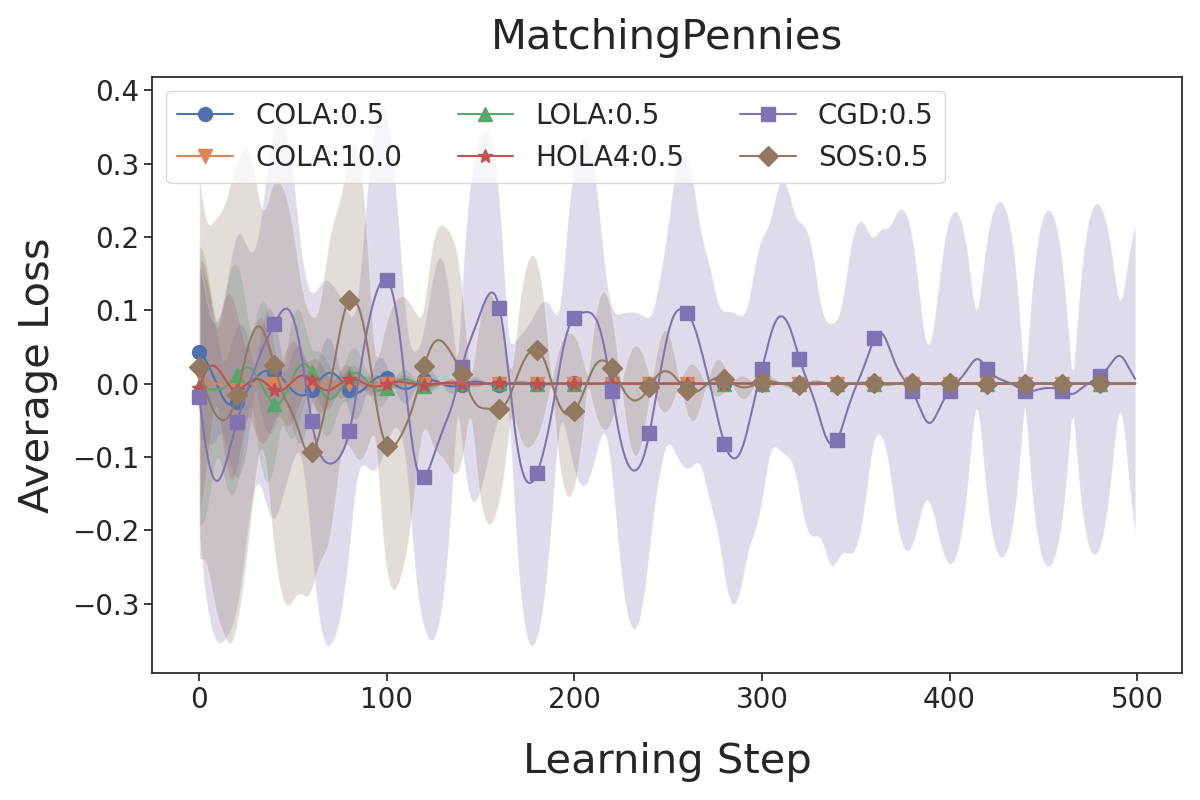}
  	  	\caption{}
  	\label{fig:imp_play}
 \end{subfigure}
\begin{subfigure}[]{0.32\linewidth}
  	\includegraphics[width=\linewidth]{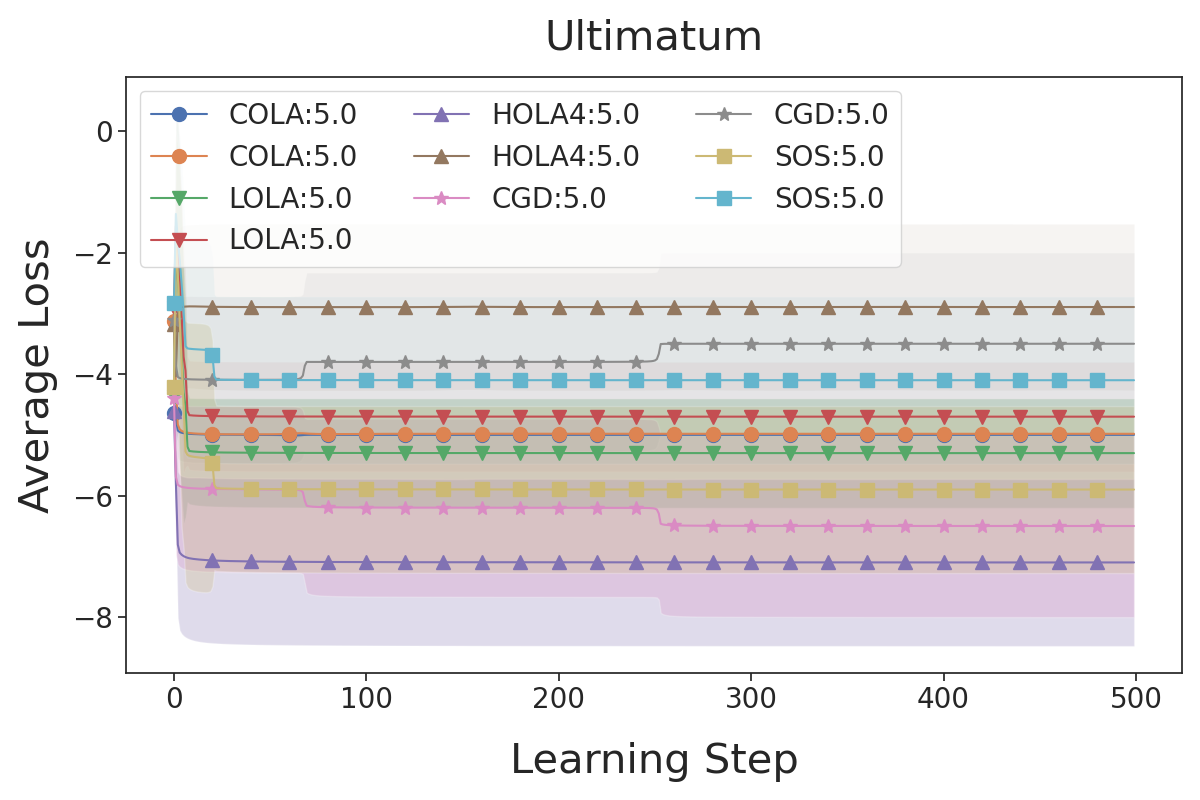}
  	  	\caption{}
  	\label{fig:ult_play}
 \end{subfigure}
 \caption{Subfigures (a), (b) and (c): Log of the consistency loss with standard error of 10 independent training runs over the training of the update functions for the Tandem, MP and Ultimatum games. Subfigures (d), (e) and (f): Learning outcomes for the respective games. E.g., ``COLA:0.1'' is COLA at a look-ahead rate of 0.1. Lines represent payoff means and shaded areas standard deviations over 10 runs.}
 \label{fig:full_game}
\end{figure*}
\subsection{Differentiable games}
The framework of differentiable games has become increasingly popular to model multi-agent learning. Whereas stochastic games are limited to parameters such as action-state probabilities, differentiable games generalize to any real-valued parameter vectors and differentiable loss functions \citep{balduzzi_mechanics_2018}. We restrict our attention to two-player games, as is standard in much of the literature \citep{foerster_learning_2018, foerster_dice_2018, schafer_competitive_2020}.

\begin{defn}[Differentiable games]In a two-player differentiable game, players \(i=1,2\) control parameters $\theta_i\in \mathbb{R}^{d_i}$ to minimize twice continuously differentiable losses $L^{i}: \mathbb{R}^{d_1+d_2} \rightarrow \mathbb{R}$. We adopt the convention to write \(-i\) to denote the opponent of player $i$.
\end{defn}

A fundamental challenge of the multi-loss setting is finding a good solution concept. Whereas in the single loss setting the typical solution concept are local minima, in multi-loss settings there are different sensible solution concepts. Most prominently, there are Nash Equilibria \citep{osborne_gametheory_1994}. However, Nash Equilibria include unstable saddle points that cannot be reasonably found via gradient-based learning algorithms \citep{letcher_stable_2019}. A more suitable concept are stable fixed points (SFPs), which could be considered a differentiable game analogon to local minima in single loss optimization. We will omit a formal definition here for brevity and point the reader to previous work on the topic \citep{letcher_differentiable_2019}.

\subsection{LOLA}
\label{lola-and-sos}
Consider a differentiable game with two players. A LOLA agent $\theta_{1}$ uses its access to the opponent's parameters $\theta_{2}$ to differentiate through a learning step of the opponent. That is, agent 1 reformulates their loss to
$\tilde{L}^1=L^{1}\left(\theta_{1}, \theta_{2}+\widehat{\Delta \theta_{2}}\right)$, where $\widehat{\Delta \theta_{2}}$ represents the assumed learning step of the opponent. In first-order LOLA we assume the opponent to be a naive learner: $\widehat{\Delta \theta_{2}}=-\alpha\nabla_{2} L^{2}$. This assumption makes LOLA inconsistent when the opponent is any other type of learner. Here, $\nabla_{2}$ denotes the gradient with respect to $\theta_{2}$, and $\alpha$ represents the \textit{look-ahead} rate, which is the \textit{assumed} learning rate of the opponent. This rate may differ from the opponent's actual learning rate, but we will only consider equal learning rates and look-ahead rates across opponents for simplicity. In the original paper the loss was approximated using a Taylor expansion,
$\tilde{L}^{1}\approx L^{1}+(\nabla_{2} L^{1})^\top \widehat{\Delta \theta_{2}}$. For agent 1, their first-order \emph{Taylor} LOLA update is then\[\Delta\theta_1= -\alpha\left(\nabla_{1} L^{1}+\nabla_{12} L^{1} \widehat{\Delta \theta_{2}}+\left(\nabla_{1} \widehat{\Delta \theta_{2}}\right)^{\top} \nabla_{2} L^{1}\right).\]
Alternatively, in \emph{exact} LOLA, the derivative is taken directly with respect to the reformulated loss, yielding the update
\[\Delta\theta_1=-\alpha\nabla_1 \left(L^{1}\left(\theta_{1}, \theta_{2}+\widehat{\Delta \theta_{2}}\right)\right).\]

LOLA has had some empirical success, being one of the first general learning methods to discover tit-for-tat in the IPD. However, later work showed that LOLA does not preserve SFPs $\bar{\theta}$, e.g., the rightmost term in the equation for Taylor LOLA can be nonzero at $\bar{\theta}$. In fact, LOLA agents show ``arrogant'' behavior: they assume they can shape the learning of their \emph{naive} opponents without having to adapt to the shaping of the opponent. Prior work hypothesized that this arrogant behavior is due to LOLA's inconsistent formulation and may be the cause for LOLA's failure to preserve SFPs (\citet{letcher2018thesis},~pp.~2,~26; \citet{letcher_stable_2019})

\subsection{CGD}
CGD \citep{schafer_competitive_2020} proposes updates that are themselves Nash Equilibra of a local bilinear approximation of the game. It stands out by its robustness to different look-ahead rates and its ability to find SFPs. However, CGD does not find tit-for-tat on the IPD, instead converging to mutual defection (see Figure \ref{fig:ipd_play}).
CGD's update rule is given by
\begin{equation*}
\left(\begin{array}{c}
\Delta \theta_{1} \\
\Delta \theta_{2}
\end{array}\right) =-\al\left(\begin{array}{cc}
\mathrm{Id} & \alpha \nabla_{1 2} L^{1} \\
\alpha \nabla_{21}L^{2} & \mathrm{Id}
\end{array}\right)^{-1}\left(\begin{array}{c}
\nabla_{1} L^{1} \\
\nabla_{2} L^{2}
\end{array}\right) \,.
\end{equation*}
One can recover different orders of CGD by approximating the inverse matrix via the series expansion ${(\mathrm{Id}-A)^{-1}=\lim _{N \rightarrow \infty} \sum_{k=0}^{N} A^{k}}$ for $\Vert A\Vert<1$. 
For example, at N=1, we recover a version called Linearized CGD (LCGD), defined via \(\Delta \theta_{1}:=-\al\nabla_{1} L^{1}+\alpha^2\nabla_{12} L^{1} \nabla_{2} L^{2}\).

\section{Method and theory}
In this section, we formally define iLOLA and consistency under mutual opponent shaping and show that iLOLA is consistent, thus in principle addressing LOLA's inconsistency problem. We then clear up the relation between CGD and iLOLA, \textit{correcting a false claim} in \citeauthor{schafer_competitive_2020} (\citeyear{schafer_competitive_2020}). Lastly, we introduce COLA as an alternative to iLOLA and present some initial theoretical analysis, including the result that, contrary to prior belief, even consistent update functions do not recover SFPs.

\begin{table*}[hbt!]
\small
\caption{On the MP game: Over different look-ahead rates we compare (a) the consistency losses and (b) the cosine similarity between COLA and LOLA, HOLA2, and HOLA4. The values represent the mean over 1,000 samples, uniformly sampled from the parameter space $\Theta$. The error bars represent one standard deviation and capture the variance over 10 different COLA training runs.}
\centering

\subfloat[]{\begin{tabular}{l|l l l l}
\label{tab:imp_cons}$\alpha$ & \multicolumn{1}{c}{LOLA} & \multicolumn{1}{c}{HOLA2} & \multicolumn{1}{c}{HOLA4} & \multicolumn{1}{c}{COLA} \\ \hline
10       & 0.06                      & 0.70                       & 6.56                       & 2e-3$\pm$3e-4                \\ \hline
5        & 5e-3                   & 0.03                       & 0.15                       & 5e-4$\pm$5e-5                \\ \hline
1.0      & 9e-6                   & 3e-8                    & 4e-9                    & 3e-6$\pm$1e-6                \\ \hline
0.5      & 5e-7                   & 3e-10                   & 5e-12                   & 3e-6$\pm$3e-6                \\ \hline
0.01     & 1e-13                  & 6e-17                   & 5e-17                   & 2e-6$\pm$2e-6                \\ \hline
\end{tabular}}
\quad
\subfloat[]{\begin{tabular}{l|l l l}
\label{tab:imp_sim}$\alpha$ & LOLA      & HOLA2     & HOLA4     \\ \hline
10       & 0.90$\pm$0.01 & 0.84$\pm$0.01 & 0.74$\pm$0.02 \\ \hline
5        & 0.98$\pm$0.01 & 0.97$\pm$0.01 & 0.92$\pm$0.01 \\ \hline
1.0      & 1.00$\pm$0.00 & 1.00$\pm$0.00 & 1.00$\pm$0.00 \\ \hline
0.5      & 1.00$\pm$0.00 & 1.00$\pm$0.00 & 1.00$\pm$0.00 \\ \hline
0.01     & 1.00$\pm$0.00 & 1.00$\pm$0.00 & 1.00$\pm$0.00 \\ \hline
\end{tabular}}
\end{table*}
\subsection{Convergence and consistency of higher-order LOLA}
\label{conv-and-consistency-LOLA}
The original formulation of LOLA is inconsistent when two LOLA agents learn together, because LOLA agents assume their opponent is a naive learner. To address this problem, we define and analyze iLOLA. In this section, we focus on exact LOLA, but we provide a version of our analysis for Taylor LOLA in Appendix~\ref{appendix-Taylor-iLOLA}. HOLA$n$ is defined by the recursive relation
\begin{align*}
h_1^{n+1} &\coloneqq -\al \nabla_1 \left( L^1(\T_1, \T_2 + h_2^{n}) \right) \\
h_2^{n+1} &\coloneqq -\al \nabla_2 \left( L^2(\T_1+h_1^{n}, \T_2) \right)
\end{align*}
with $h_1^{-1} = h_2^{-1} = 0$, omitting arguments $(\T^1, \T^2)$ for convenience. In particular, HOLA\(0\) coincides with simultaneous gradient descent while HOLA\(1\) coincides with LOLA.

\begin{defn}[iLOLA]
If $\textup{HOLA}n$ = $(h_1^n, h_2^n)$ converges pointwise as $n \to \infty$, define
\[ \textup{iLOLA} \coloneqq \lim_{n\to\infty} \begin{pmatrix}
h_1^n \\ h_2^n
\end{pmatrix} \textup{as the limiting update.}\]

\end{defn}

We show in Appendix \ref{appendix:nonconvergence} that HOLA does not always converge, even in simple quadratic games. But, unlike LOLA, iLOLA satisfies a criterion of \textit{consistency} whenever HOLA does converge (under some assumptions), formally defined as follows:
\begin{defn}[Consistency] \label{defn-consistency} Any update functions $f_1\colon \mathbb{R}^{d}\rightarrow \mathbb{R}^{d_1}$ and $f_2\colon \mathbb{R}^d\rightarrow\mathbb{R}^{d_2}$ are \textit{consistent} (under mutual opponent shaping with look-ahead rate \(\alpha\)) if for all \(\theta_1\in\mathbb{R}^{d_1},\theta_2\in\mathbb{R}^{d_2}\), they satisfy
\begin{align}\label{eq:cons1}
    f_1(\theta_1,\theta_2) &= -\alpha\nabla_{1}(L^1(\theta_1, \theta_2 +  f_2(\theta_1,\theta_2)))\\\label{eq:cons2}
    f_2(\theta_1,\theta_2) &= -\alpha\nabla_{2}(L^2(\theta_1 + f_1(\theta_1,\theta_2), \theta_2))
\end{align}
\end{defn}

\begin{prop}\label{ilola-consistent}Let \(\mathrm{HOLA}n=(h_1^n,h_2^n)\) denote both players' exact \(n\)-th order LOLA updates.
Assume that ${\lim_{n\rightarrow\infty}h_i^n(\T)=h_i(\T)}$ and ${\lim_{n\rightarrow\infty}\nabla_{i} h_{-i}^n(\T)=\nabla_ih_{-i}(\T)}$ exist for all $\T\in\mathbb{R}^d$ and ${i \in \{1, 2\}}$. Then iLOLA is consistent under mutual opponent shaping.
\end{prop}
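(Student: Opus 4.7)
The plan is to take the limit $n\to\infty$ on both sides of the HOLA recursion
\[ h_1^{n+1} = -\alpha\,\nabla_1\!\left( L^1(\theta_1,\theta_2 + h_2^n(\theta_1,\theta_2)) \right) \]
and verify that the resulting equation coincides with the first consistency condition. The left-hand side converges to $h_1$ by the hypothesis $h_1^n\to h_1$, so the real content of the proof lies in showing that the total derivative $\nabla_1$ commutes with the limit on the right-hand side.

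First I would expand the right-hand side via the chain rule. Writing $D_i L^1$ for the partial of $L^1$ with respect to its $i$-th block of arguments (treated as independent), one has
\[ \nabla_1\!\left(L^1(\theta_1,\theta_2+h_2^n)\right) = D_1 L^1(\theta_1,\theta_2+h_2^n) + (\nabla_1 h_2^n)^\top\, D_2 L^1(\theta_1,\theta_2+h_2^n). \]
Each factor now converges pointwise: $h_2^n\to h_2$ and $\nabla_1 h_2^n\to \nabla_1 h_2$ by hypothesis, while $D_1 L^1$ and $D_2 L^1$ are continuous because $L^1$ is twice continuously differentiable, so their values at $(\theta_1,\theta_2+h_2^n(\theta))$ tend to the values at $(\theta_1,\theta_2+h_2(\theta))$. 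Passing to the limit term by term and repackaging via the chain rule applied to the limiting composition $L^1(\theta_1,\theta_2+h_2(\theta_1,\theta_2))$ yields
\[ \lim_{n\to\infty}\nabla_1\!\left(L^1(\theta_1,\theta_2+h_2^n)\right) = \nabla_1\!\left(L^1(\theta_1,\theta_2+h_2)\right). \]
Multiplying by $-\alpha$ and combining with $h_1^{n+1}\to h_1$ gives Equation~\ref{eq:cons1}, and the argument for Equation~\ref{eq:cons2} is obtained by swapping the player indices.

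The main obstacle is exactly this interchange of differentiation and pointwise limit, which in general fails. The proposition sidesteps the difficulty by promoting the convergence of the Jacobians $\nabla_i h_{-i}^n \to \nabla_i h_{-i}$ to a \emph{hypothesis}: the chain-rule expansion above isolates $\nabla_1 h_2^n$ as the single factor not automatically controlled by continuity of $\nabla L^1$, so once that convergence is assumed the rest is continuity. A minor technicality is that the symbol $\nabla_1 h_2$ appearing in the limit should be identifiable with the Jacobian of the limit $h_2$ so that the reverse chain rule is valid; I would either fold this into the interpretation of the assumption or note it as an immediate consequence, rather than belabor it.
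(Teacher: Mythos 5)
Your proposal is correct and follows essentially the same route as the paper's proof: expand the right-hand side of the recursion by the chain rule, pass to the limit factor by factor using continuity of the derivatives of $L^1$ together with the assumed convergence of $h_2^n$ and $\nabla_1 h_2^n$, and identify the limit of the left-hand side $h_1^{n+1}$ with $h_1$. The paper packages the final step as a three-term sandwich estimate via a ``LOLA operator'' $\Psi$ rather than directly equating limits of the two sides of the recursion, but the mathematical content is identical, and your closing remark about identifying $\nabla_1 h_2$ with the Jacobian of the limit is precisely what the paper's hypothesis builds in.
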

\begin{proof}
In Appendix \ref{appendix:ilolaconv}.
\end{proof}

\subsection{CGD does not recover higher-order LOLA}
\label{subsec:CGD}
\citeauthor{schafer_competitive_2020} (\citeyear{schafer_competitive_2020}) claim that ``LCGD [Linearized CGD] coincides with first order LOLA'' (page 6), and moreover that the higher-order ``series-expansion [of CGD] would recover higher-order LOLA'' (page 4). If this were correct, it would imply that \textit{full} CGD is equal to iLOLA and thus provides a convenient closed-form solution. We prove that this is false in general games: 

\begin{prop}\label{cgd-ilola}
CGD is inconsistent and does not coincide with iLOLA. In particular, Linearized CGD (LCGD) does not coincide with LOLA and the series-expansion of CGD does not recover HOLA (neither exact nor Taylor). Instead, LCGD coincides with LookAhead \citep{Zha}, an algorithm that \textit{lacks} opponent shaping, and the series-expansion of CGD recovers higher-order LookAhead.
\end{prop}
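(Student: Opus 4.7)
The plan is to exploit the Taylor-LOLA formula in Section~\ref{lola-and-sos} to directly compare LCGD and LOLA at first order, then promote the comparison to all orders by matching the CGD Neumann series against an unrolled LookAhead recursion. First I would substitute $\widehat{\Delta\theta_2} = -\alpha\nabla_2 L^2$ into the Taylor LOLA update, obtaining $-\alpha \nabla_1 L^1 + \alpha^2 \nabla_{12} L^1 \nabla_2 L^2 + \alpha^2 (\nabla_{12} L^2)^\top \nabla_2 L^1$, and compare it with LCGD $=-\alpha \nabla_1 L^1 + \alpha^2 \nabla_{12} L^1 \nabla_2 L^2$. The discrepancy is the ``opponent shaping'' term $\alpha^2 (\nabla_{12} L^2)^\top \nabla_2 L^1$, arising from differentiating the anticipated update $\widehat{\Delta\theta_2}$ with respect to $\theta_1$; any bilinear game in which $(\nabla_{12} L^2)^\top \nabla_2 L^1 \neq 0$ makes the two updates provably unequal. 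The same argument, applied to exact LOLA after Taylor expanding $\nabla_1 L^1(\theta_1, \theta_2 - \alpha\nabla_2 L^2)$ in $\alpha$, exposes the identical shaping term missing from LCGD. I would then observe that LCGD coincides with first-order LookAhead, which by definition freezes $\widehat{\Delta\theta_2}$ when taking $\nabla_1$ and thereby discards exactly this chain-rule contribution.

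Next I would define higher-order LookAhead by the recursion $\ell_1^{n+1} := -\alpha \nabla_1 L^1(\theta_1, \theta_2 + \ell_2^n)$ with $\ell_2^n$ treated as a constant under $\nabla_1$ (and symmetrically for $\ell_2^{n+1}$), and prove by induction on $n$ that the $n$-th partial sum of the Neumann series $M^{-1} = \sum_{k\geq 0}(-B)^k$ coincides with $\ell^{n+1}$. The inductive step is transparent: multiplying the previous partial sum by one extra factor of $-B$ prepends exactly one mixed-Hessian block $\alpha\nabla_{12} L^1$ (respectively $\alpha\nabla_{21}L^2$) acting on the opponent's most recent truncated LookAhead, which is the same combinatorial effect as one further unrolling of the LookAhead recursion. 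By contrast HOLA applies the \emph{full} gradient of the inner $L^i(\theta_1 + h_1^{n}, \theta_2)$, whose chain rule through $\theta_1$ continually generates cross-Hessian factors of the form $(\nabla_{12} L^2)^\top \nabla_2 L^1$ — precisely the shaping contributions absent from the Neumann series.

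The remaining claims then follow at once. Every partial sum of the Neumann series expresses agent $i$'s update as a polynomial in $\nabla_{12} L^1$ and $\nabla_{21} L^2$ applied to the own-gradient sources $\nabla_j L^j$; no term of the form $(\nabla_{12} L^{-i})^\top \nabla_{-i} L^i$ can ever appear, so CGD fails the consistency conditions \eqref{eq:cons1}--\eqref{eq:cons2} in any game where such cross terms are required. By Proposition~\ref{ilola-consistent}, iLOLA is consistent whenever it exists, so CGD cannot equal iLOLA. The main obstacle will be the bookkeeping of the induction: one must carefully track the signs of $(-B)^k$, the alternation of $\nabla_{12} L^1$ and $\nabla_{21} L^2$ in the iterated matrix products, and the placement of the source vector $(\nabla_1 L^1, \nabla_2 L^2)^\top$ — verifying that neither side of the identification silently picks up a stray shaping term. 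Once the matching is pinned down, inconsistency and non-coincidence with iLOLA are immediate structural consequences.
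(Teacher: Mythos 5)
Your architecture matches the paper's for three of the four claims: identifying the first-order discrepancy with the shaping term $\al^2\lchi$, observing that LCGD $=-\al(I-\al H_o)\xi$ is exactly LookAhead, and the induction matching the partial sums $-\al\sum_{i=0}^n(-\al H_o)^i\xi$ of the Neumann series against the stop-gradient LookAhead recursion are precisely the paper's steps. One small omission: for LCGD $\neq$ LOLA you still need to instantiate a concrete game where $\lchi\neq 0$ on a non-negligible set (the paper uses the Tandem game, where $\lchi=4(x+y)\,(1,1)^\top$); "any bilinear game in which the cross term is nonzero" is a correct recipe but not yet a counterexample.

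The substantive gap is in your argument that the CGD expansion does not recover HOLA. You argue syntactically: no term of the form $(\nabla_{12}L^{-i})^\top\nabla_{-i}L^i$ "can ever appear" in the Neumann partial sums, "so CGD fails the consistency conditions in any game where such cross terms are required." Syntactic absence of a term does not show that two functions differ, nor that a consistency residual is nonzero — differently written expressions can coincide as functions (they do so on measure-zero sets in Tandem, and everywhere in any game with $\nabla_2L^1\equiv 0$), and "games where such cross terms are required" begs exactly the question being asked. To close this you must do what the paper does: fix a game (Tandem), compute the closed-form CGD update $-\al(I+\al H_o)^{-1}\xi=\tfrac{-2\al(x+y-1)}{1+2\al}(1,1)^\top$, substitute it into the right-hand side of \eqref{eq:cons1}, and check that the residual ($\tfrac{4\al^2(x+y+2\al)}{(1+2\al)^2}$ in the exact case) is nonzero outside a measure-zero set. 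Relatedly, your appeal to Proposition~\ref{ilola-consistent} requires its hypotheses — pointwise convergence of $h^n$ \emph{and} of $\nabla_i h^n_{-i}$ — to hold in the counterexample; the paper secures this by noting that, under the assumption $\mathrm{CGD}n=\mathrm{HOLA}n$, convergence of the Neumann series for $\al<1/\Vert H_o\Vert=1/2$ together with the constancy of $H_o$ in Tandem yields both limits. Without that verification, "CGD is inconsistent" does not yet contradict "CGD $=$ iLOLA," because iLOLA must be shown to exist and be consistent in the game at hand.
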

\begin{proof}
In Appendix \ref{appendix:cgd-ilola}. For the negative results, it suffices to construct a single counterexample: we show that LCGD and LOLA differ almost everywhere in the Tandem game (excluding a set of measure zero). We prove by contradiction that the series-expansion of CGD does not recover HOLA. If it did, CGD would equal iLOLA, and by Proposition~\ref{ilola-consistent}, CGD would satisfy the consistency equations. However, this fails almost everywhere in the Tandem game, concluding the contradiction.
\end{proof}

\subsection{COLA}
\label{subsec:COLA}
iLOLA is consistent under mutual opponent shaping. However, HOLA does not always converge and, even when it does, it may be expensive to recursively compute HOLA$n$ for sufficiently high $n$ to achieve convergence.
As an alternative, we propose COLA.

COLA learns consistent update functions and avoids infinite regress by directly solving the equations in Definition~\ref{defn-consistency}. We define the \emph{consistency losses} for learned update functions \(f_1, f_2\) parameterized by \(\phi_1, \phi_2\), obtained for a given \(\T\) as the difference between RHS and LHS in Definition \ref{defn-consistency}:
\begin{align*}
C_1(\phi_1,\phi_2,\theta_1, \theta_2) &= \left\| f_1 + \alpha\nabla_{1}(L^1(\theta_1, \theta_2 +  f_2)) \right\| \\
C_2(\phi_1,\phi_2,\theta_1, \theta_2) &= \left\| f_2 + \alpha\nabla_{2}(L^2(\theta_1 + f_1, \theta_2)) \right\| \,.
\end{align*}

If both losses are $0$ for all \(\T\), then the two update functions defined by \(\phi_1,\phi_2\) are consistent. For this paper, we parameterise $f_1$, $f_2$ as neural networks with parameters $\phi_1$, $\phi_2$ respectively, and numerically minimize the sum of both losses over a region of interest.

The parameter region of interest $\Theta$ depends on the game being played. For games with probabilities as actions, we select an area that captures most of the probability space (e.g. we sample a pair of parameters $\theta_{1}, \theta_{2} \sim [-7, 7]$, since $\sigma(7)\approx1$ where $\sigma$ is the sigmoid function).

We optimize the mean of the sum of consistency losses, 
\begin{multline*}
    C( \phi_1,\phi_2):=    \mathbb{E}_{(\theta_1, \theta_2)\sim\mathcal{U}(\Theta)}\big[ C_1(\phi_1,\phi_2,\theta_1, \theta_2) \\+    C_2(\phi_1,\phi_2,\theta_1, \theta_2) \big],
\end{multline*}
by sampling parameter pairs $(\theta_1, \theta_2)$ uniformly from $\Theta$ and feeding them to the neural networks \(f_1, f_2\), each outputting an agent's parameter update. The weights \(\phi_1,\phi_2\) are then updated by taking a gradient step to minimize $C$. We train the update functions until the loss has converged and use the learned update functions to train a pair of agent policies in the given game.

\begin{figure*}[hbt!]
\centering
\begin{subfigure}{.33\textwidth}
  \centering
  \includegraphics[width=.99\linewidth]{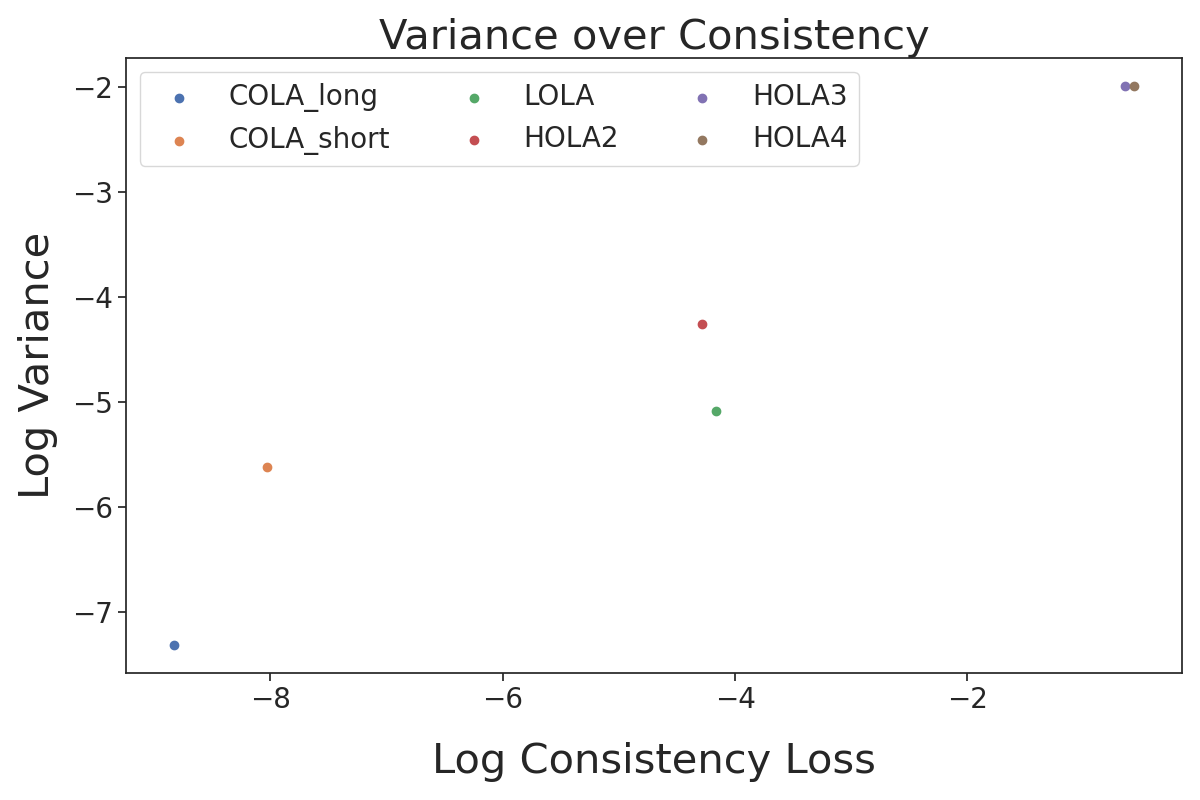}
  \caption{}
  \label{fig:var_cons_high}
\end{subfigure}
\begin{subfigure}{.33\textwidth}
  \centering
  \includegraphics[width=.99\linewidth]{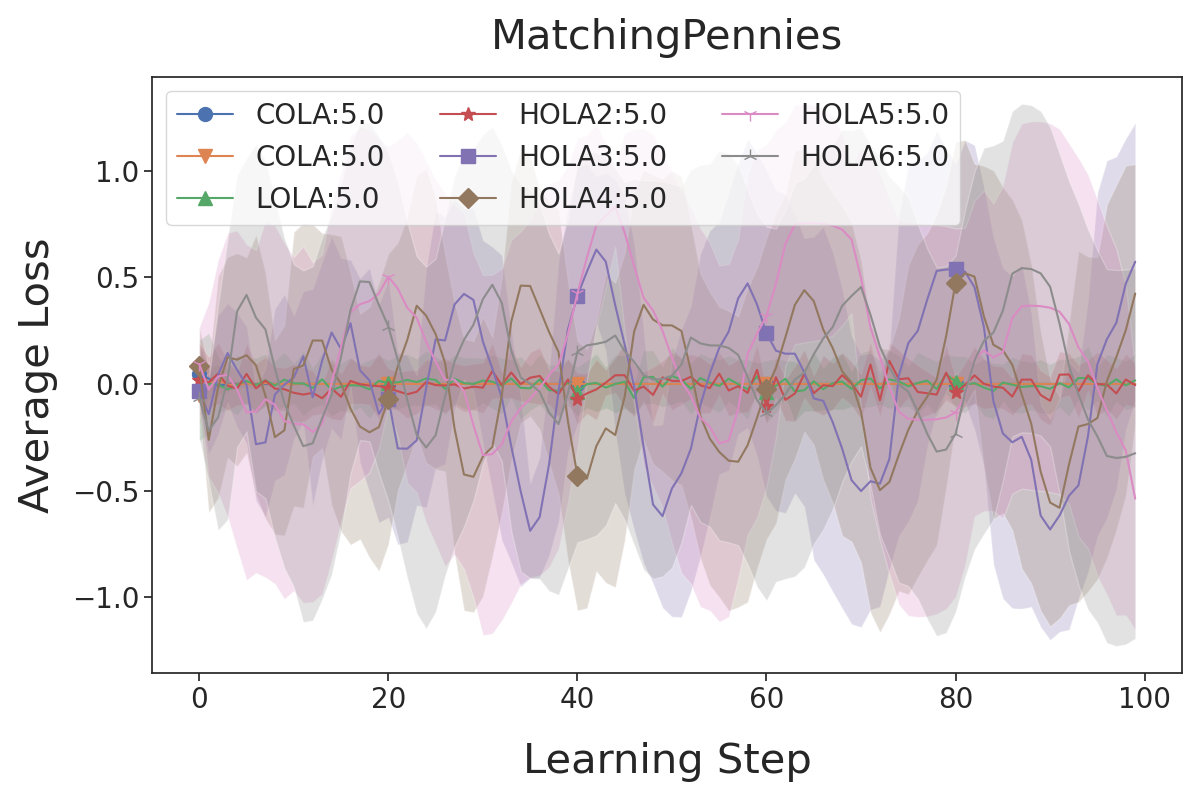}
  \caption{}
  \label{fig:mp_div}
\end{subfigure}
\begin{subfigure}[]{0.33\linewidth}
  	\includegraphics[width=\linewidth]{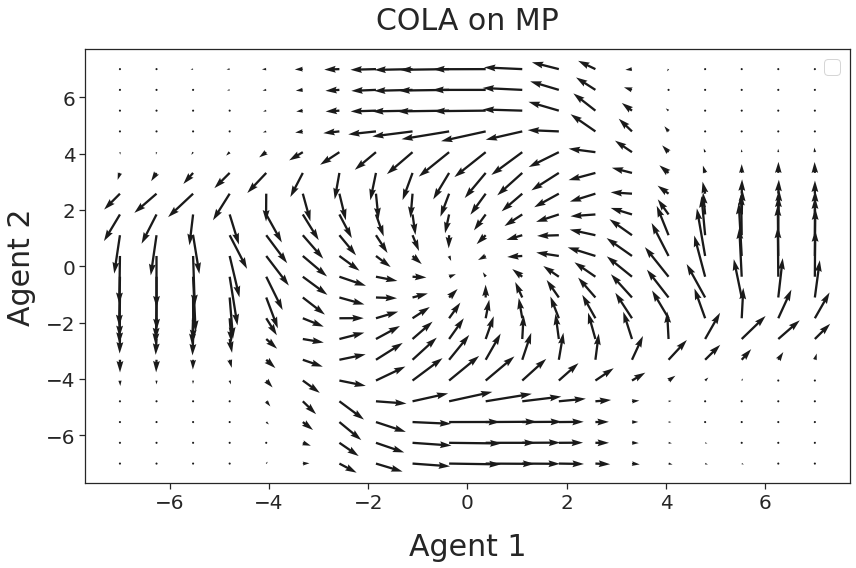}
  	  	\caption{}
  	\label{fig:imp_grad_field2}
 \end{subfigure}
\caption{Training in MP at look-ahead rate of \(\alpha=10\). (a) Axes are on a log-scale. Shown is the mean variance and consistency over 10 different runs. Each run, COLA was retrained. COLA\_long was trained for 80k steps and COLA\_short for 800 steps. (b) LOLA and HOLA find non-convergent or even diverging solutions, while COLA's converge. (c) Gradient field learned by COLA on MP at a look-ahead rate of 10.}
\label{fig:var_cons}
\end{figure*}
\subsection{Theoretical results for COLA}
\label{theoretical-results-cola}
In this section, we provide some initial theoretical results for COLA's uniqueness and convergence behavior, using the Tandem game \citep{letcher_stable_2019} and the Hamiltonian game \citep{balduzzi_mechanics_2018} as examples. These are simple polynomial games, with losses given in Section~\ref{section-experiments}. Proofs for the following propositions can be found in Appendices~\ref{proof:cons_not_unqiue}, \ref{proof:cons_sfps} and \ref{appendix:hamiltonian_theory}, respectively.

First, we show that solutions to the consistency equations are in general not unique, even when restricting to linear update functions in the Tandem game. Interestingly, empirically, COLA does seem to consistently converge to similar solutions regardless (see Table \ref{tab:self_sim} in Appendix \ref{appendix:mp}).
\begin{prop}\label{consistency_not_unique}
Solutions to the consistency equations are not unique, even when restricted to linear solutions; more precisely, there exist several linear consistent solutions to the Tandem game.
\end{prop}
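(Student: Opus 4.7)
The plan is to exhibit two distinct affine consistent update functions for the Tandem game, whose losses we take in their standard form $L^i(\theta_1,\theta_2) = (\theta_1+\theta_2)^2 - 2\theta_i$ for $i\in\{1,2\}$. The crucial observation is that in Definition~\ref{defn-consistency} the symbol $\nabla_1$ acts as a \emph{total} derivative of the composed expression $L^1(\theta_1, \theta_2 + f_2(\theta_1,\theta_2))$, so it produces a chain-rule contribution through $f_2$. Since $\partial_1 L^1(\theta_1,y)=2(\theta_1+y)-2$ and $\partial_2 L^1(\theta_1,y)=2(\theta_1+y)$, the first consistency equation specialises to
\begin{equation*}
f_1 = -2\alpha\bigl(\theta_1+\theta_2+f_2\bigr)\bigl(1 + \partial_1 f_2\bigr) + 2\alpha,
\end{equation*}
with a symmetric expression for $f_2$ obtained by interchanging indices.

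Next, I would restrict attention to the symmetric ansatz $f_1=f_2 = g(s)$ with $s=\theta_1+\theta_2$, and further to affine $g(s)=As+C$, so that $\partial_1 f_2 = \partial_2 f_1 = A$. By the player-swap symmetry of the Tandem losses, the two consistency equations collapse to the same pair of scalar conditions: a quadratic constraint
\begin{equation*}
2\alpha A^{2} + (1+4\alpha)A + 2\alpha = 0,
\end{equation*}
together with a linear relation $C\bigl(1 + 2\alpha(1+A)\bigr) = 2\alpha$ which determines $C$ once $A$ is fixed. The quadratic has discriminant $(1+4\alpha)^2 - 16\alpha^2 = 1 + 8\alpha$, strictly positive for every look-ahead rate $\alpha > -1/8$. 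This yields two distinct real roots $A_\pm$ and, via the linear relation, two distinct constants $C_\pm$, hence two genuinely different affine consistent update functions $f_\pm(\theta_1,\theta_2) = A_\pm(\theta_1+\theta_2) + C_\pm$.

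The main obstacle is the careful handling of the chain-rule term in $\nabla_1$. Without that term, the consistency equations would be pointwise-linear in $(f_1,f_2)$ and would admit a unique affine solution away from a single critical value of $\alpha$; it is precisely the chain-rule nonlinearity that lifts the constraint on $A$ to genuine quadratic degree and opens up the second solution branch. A secondary bookkeeping step is to verify that $A_+\neq A_-$ really forces $C_+\neq C_-$, which is immediate because the map $A\mapsto 2\alpha/(1+2\alpha(1+A))$ is strictly monotone wherever defined, and one must also check that the singular value $A=-1-1/(2\alpha)$ never solves the quadratic (a short substitution yields the contradiction $-1=0$), so both $C_\pm$ are well defined.
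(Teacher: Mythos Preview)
Your argument is correct and follows essentially the same route as the paper: take the symmetric affine ansatz $f_1=f_2=A(\theta_1+\theta_2)+C$, use the chain rule in $\nabla_1$ to obtain the quadratic $2\alpha A^{2}+(1+4\alpha)A+2\alpha=0$ with discriminant $1+8\alpha>0$, and read off two distinct solutions (the paper simply verifies the two explicit solutions for $\alpha=1$ and then states the same general formula). One small remark: you do not need $C_+\neq C_-$ for distinctness of the update functions, since $A_+\neq A_-$ already suffices; your monotonicity argument for $C$ is therefore an unnecessary (and slightly delicate, since the map has a pole between the two roots) extra step.
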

Second, we show that consistent solutions do not, in general, preserve SFPs, contradicting the hypothesis that LOLA's failure to preserve SFPs is due to its inconsistency (\citet{letcher2018thesis},~pp.~2,~26; \citet{letcher2018thesis}).
We experimentally confirm this result in Section~\ref{results}.
\begin{prop}\label{consistency_sfps}
Consistency does not imply preservation of SFPs: there is a consistent solution to the Tandem game with $\al=1$ that fails to preserve \textbf{any} SFP. Moreover, for any $\al > 0$, there are \textbf{no} linear consistent solutions to the Tandem game that preserve more than one SFP.
\end{prop}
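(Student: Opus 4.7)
The plan is to work concretely in the Tandem game, whose losses $L^1=(\theta_1+\theta_2)^2-2\theta_1$ and $L^2=(\theta_1+\theta_2)^2-2\theta_2$ are quadratic, so that $\nabla_i L^i=2(\theta_1+\theta_2)-2$ is affine in $\theta$ and the fixed-point set of the game is the affine line $\mathcal{S}=\{\theta_1+\theta_2=1\}$ (which contains the SFPs). An update pair $(f_1,f_2)$ preserves an SFP $\bar\theta\in\mathcal{S}$ iff $f_1(\bar\theta)=f_2(\bar\theta)=0$. For a linear ansatz $f_i(\theta_1,\theta_2)=a_i\theta_1+b_i\theta_2+c_i$, the zero locus of each $f_i$ is either empty, a single affine line, or all of $\mathbb{R}^2$.

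For the first claim, I would substitute the linear ansatz into the exact consistency equations of Definition~\ref{defn-consistency}, applying the chain rule through $f_{-i}$ as in the exact LOLA formulation of Section~\ref{lola-and-sos}. Matching the coefficients of $\theta_1$, $\theta_2$, and the constant term yields a finite polynomial system in $(a_i,b_i,c_i)_{i=1,2}$. Specializing to symmetric solutions $a_1=b_1=a_2=b_2=:\lambda$ and $c_1=c_2=:c$ reduces this to a single quadratic in $\lambda$ together with a linear equation giving $c$ in terms of $\lambda$ and $\alpha$. At $\alpha=1$ one can then read off two explicit symmetric consistent solutions and verify that for each of them $f_i$ restricted to $\mathcal{S}$ is a nonzero constant, so neither preserves any SFP.

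For the second claim I would argue by contradiction: assume a linear consistent $(f_1,f_2)$ preserves two distinct points of $\mathcal{S}$. Each affine $f_i$ vanishing at two points of the line $\mathcal{S}$ must vanish on all of $\mathcal{S}$, and since its zero locus is itself an affine line (or all of $\mathbb{R}^2$), it must either coincide with $\mathcal{S}$ or be identically zero. In the nondegenerate case we get $f_i=\lambda_i(\theta_1+\theta_2-1)$ for some $\lambda_i\in\mathbb{R}$. Substituting back into the consistency equations and matching the coefficients of $\theta_1$ and of the constant term separately gives two independent conditions on $\lambda_{-i}$, which force $\lambda_1=\lambda_2=0$. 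But $f_1\equiv f_2\equiv 0$ fails consistency, since the right-hand side of \eqref{eq:cons1} then reduces to $-\alpha\nabla_1L^1(\theta)=-\alpha(2(\theta_1+\theta_2)-2)$, which is not identically zero for $\alpha>0$. This contradiction rules out every linear consistent pair that preserves more than one SFP.

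The main obstacle is the second claim: unlike the first, one cannot escape via a single counterexample and must carry out the complete case analysis of the linear coefficient system. A minor subtlety is that the chain-rule term in exact LOLA makes the consistency equations nonlinear in $(a_i,b_i,c_i)$, so reducing the system requires some care; the degenerate subcase $f_i\equiv 0$ also has to be handled separately, as above. Once those cases are dispatched, the qualitative obstruction becomes transparent: forcing both $f_i$ to vanish on all of $\mathcal{S}$ is too restrictive to be compatible with the nonvanishing of $\nabla_i L^i$ transverse to $\mathcal{S}$, and this is exactly what prevents more than one SFP from being preserved by any linear consistent update.
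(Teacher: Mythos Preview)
Your plan is correct and follows essentially the same route as the paper's proof: for the first claim you exhibit the symmetric linear solution $f_1=f_2=-2(x+y+1)$ at $\alpha=1$ and evaluate it on $\mathcal{S}$, and for the second you reduce any linear consistent pair preserving two SFPs to the form $f_i=\lambda_i(\theta_1+\theta_2-1)$ and derive a contradiction from the coefficient comparison. One small refinement worth anticipating when you carry out the computation: matching the $s$-coefficient and constant in the first consistency equation yields $\lambda_1=-2\alpha(1+\lambda_2)^2$ and $\lambda_1=-2\alpha(1+\lambda_2+\lambda_2^2)$, which forces $\lambda_2=0$ \emph{and} $\lambda_1=-2\alpha$; the contradiction then comes immediately from the symmetric pair of equations (which force $\lambda_1=0$), so you need not pass through the separate check that $f_1\equiv f_2\equiv 0$ fails consistency.
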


Third, we show that COLA can have more robust convergence behavior than LOLA and SOS:
\begin{prop}\label{prop:hamiltonian_theory}
For any non-zero initial parameters and any $\al > 1$, LOLA and SOS have divergent iterates in the Hamiltonian game. By contrast, any linear solution to the consistency equations converges to the origin for any initial parameters and \textbf{any} look-ahead rate $\al > 0$; moreover, the speed of convergence strictly increases with $\al$.
\end{prop}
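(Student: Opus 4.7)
The plan is to exploit the fact that, in the Hamiltonian game with losses $L^1(\theta_1,\theta_2) = \theta_1\theta_2$ and $L^2(\theta_1,\theta_2) = -\theta_1\theta_2$, every update considered (LOLA, SOS, and any linear consistent solution) is linear in the joint parameters, so the iterates obey $\theta^{n+1}=A(\alpha)\,\theta^n$ and convergence is controlled entirely by the spectral radius of $A(\alpha)$.

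For the divergence claim, first I would compute the exact LOLA update directly from the losses, obtaining $\Delta\theta_1 = -\alpha\theta_2 - 2\alpha^2\theta_1$ and $\Delta\theta_2 = \alpha\theta_1 - 2\alpha^2\theta_2$. The resulting iteration matrix
\[ I+M_{\text{LOLA}} \;=\; \begin{pmatrix}1-2\alpha^2 & -\alpha\\ \alpha & 1-2\alpha^2\end{pmatrix} \]
has complex eigenvalues $(1-2\alpha^2)\pm i\alpha$ of modulus $\sqrt{1 - 3\alpha^2 + 4\alpha^4}$, which exceeds $1$ whenever $\alpha^2 > 3/4$ and, in particular, for every $\alpha > 1$. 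Since these eigenvalues are genuinely complex, the iterates spiral outward and diverge in norm from any non-zero $\theta^0$. For SOS I would perform the analogous computation, using that the diagonal Hessian blocks $\nabla_{ii}L^i$ vanish in the Hamiltonian game, and then check that for any admissible value of the SOS interpolation parameter $p\in[0,1]$ the resulting spectral radius still exceeds $1$ once $\alpha > 1$.

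For the consistency claim, I would substitute the linear ansatz $f_1(\theta) = a\theta_1 + b\theta_2$ and $f_2(\theta) = c\theta_1 + d\theta_2$ (a preliminary check shows any additive constants must vanish) into equations (\ref{eq:cons1}) and (\ref{eq:cons2}). Matching coefficients gives the linear system $a = -\alpha c$, $b = -\alpha(1+d)$, $c = \alpha(1+a)$, $d = \alpha b$, with unique solution $a=d=-\alpha^2/(1+\alpha^2)$, $b=-\alpha/(1+\alpha^2)$, $c=\alpha/(1+\alpha^2)$. The iteration matrix is
\[ I+M_{\text{COLA}} \;=\; \frac{1}{1+\alpha^2}\begin{pmatrix}1 & -\alpha\\ \alpha & 1\end{pmatrix}, \]
with eigenvalues $(1\pm i\alpha)/(1+\alpha^2)$ of modulus $1/\sqrt{1+\alpha^2}<1$ for every $\alpha>0$. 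Hence the iterates converge geometrically to the origin from any initialization, and since $\alpha\mapsto 1/\sqrt{1+\alpha^2}$ is strictly decreasing on $(0,\infty)$, the convergence rate strictly improves with $\alpha$, as claimed.

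The main obstacle will be the SOS analysis, because SOS applies a data-dependent interpolation parameter $p\in[0,1]$ that blends LOLA with a LookAhead-style correction, so one cannot simply quote the LOLA calculation. I would address this by evaluating the SOS correction in closed form on the Hamiltonian game, exploiting that $\nabla_{ii}L^i\equiv 0$ makes several correction terms collapse, and then showing that the spectral radius of the resulting one-parameter family of iteration matrices remains strictly greater than $1$ for \emph{every} $p\in[0,1]$ as soon as $\alpha > 1$.
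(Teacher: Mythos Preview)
Your overall strategy---reduce each update to a linear map $\theta\mapsto A(\alpha)\theta$ and study its spectral radius---is exactly what the paper does (the paper phrases it as computing $\norm{(x+h_1,y+h_2)}^2$, which for these rotation-scaling matrices is just $|\lambda|^2\norm{(x,y)}^2$). Your LOLA computation is correct and matches the paper's $p=1$ case, and handling SOS via the $p$-LOLA family is precisely the paper's approach as well.

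The gap is in the consistency part. Your coefficient system $a=-\alpha c$, $b=-\alpha(1+d)$, $c=\alpha(1+a)$, $d=\alpha b$ is wrong: the first and last equations should read $a=-2\alpha c$ and $d=2\alpha b$. The operator $\nabla_1$ in Definition~\ref{defn-consistency} is the \emph{total} derivative with respect to $\theta_1$, so when you differentiate $L^1(\theta_1,\theta_2+f_2(\theta_1,\theta_2))=\theta_1\bigl(\theta_2+c\theta_1+d\theta_2\bigr)$ you pick up an extra $c\theta_1$ from the $\theta_1$-dependence of $f_2$, giving $\nabla_1(\cdots)=2c\theta_1+(1+d)\theta_2$. (Dropping this term is exactly the LookAhead/CGD simplification the paper takes pains to distinguish from LOLA/COLA.) With the correct system one obtains
\[
f_1=\frac{-\alpha}{1+2\alpha^2}(2\alpha\theta_1+\theta_2),\qquad f_2=\frac{-\alpha}{1+2\alpha^2}(-\theta_1+2\alpha\theta_2),
\]
with contraction factor $\lambda=1-\alpha^2(3+4\alpha^2)/(1+2\alpha^2)^2$, not $1/(1+\alpha^2)$. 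Your claimed pair $(f_1,f_2)$ therefore does \emph{not} satisfy the consistency equations, so as written the second half of the argument proves convergence for the wrong dynamical system. The fix is straightforward once the missing chain-rule term is restored, and the qualitative conclusions (contraction for all $\alpha>0$, rate improving in $\alpha$) survive.
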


\section{Experiments}
\label{section-experiments}
We perform experiments on a set of games from the literature \citep{balduzzi_mechanics_2018, letcher_stable_2019} using LOLA, SOS and CGD as baselines.
For details on the training procedure of COLA, we refer the reader to Appendix~\ref{appendix:cola_training}.

First, we compare HOLA and COLA on polynomial general-sum games, including the Tandem game \citep{letcher_stable_2019}, where LOLA fails to converge to SFPs. Second, we investigate non-polynomial games, specifically the zero-sum Matching Pennies (MP) game, the general-sum Ultimatum game \citep{hutter2020learning} and the IPD \citep{Axelrod84, Harper_2017}.


\paragraph{Polynomial games.}
\label{experiments-tandem-game}
Losses in the {\bf Tandem game} \citep{letcher_stable_2019} are given by $L^{1}(x,y)=(x+y)^{2}-2 x$ and $L^{2}(x, y)=(x+y)^{2}-2 y$ for agent 1 and 2 respectively.
The Tandem game was introduced to show that LOLA fails to preserve SFPs at $x + y = 1$ and instead converges to Pareto-dominated solutions \citep{letcher_stable_2019}. Additionally to the Tandem game, we investigate the algorithms on the {\bf Hamiltonian game}, \(L^{1}(x, y)=xy\) and \(L^{2}(x, y)=-xy\); and the {\bf Balduzzi game}, where \(L^{1}(x,y)=\frac{1}{2}x^2+10xy\) and \(L^{2}(x,y)=\frac{1}{2}y^2-10xy\) \citep{balduzzi_mechanics_2018}.

\begin{figure*}[hbt!]
 \centering
 \begin{subfigure}[]{0.32\linewidth}
     \centering
	\includegraphics[width=\linewidth]{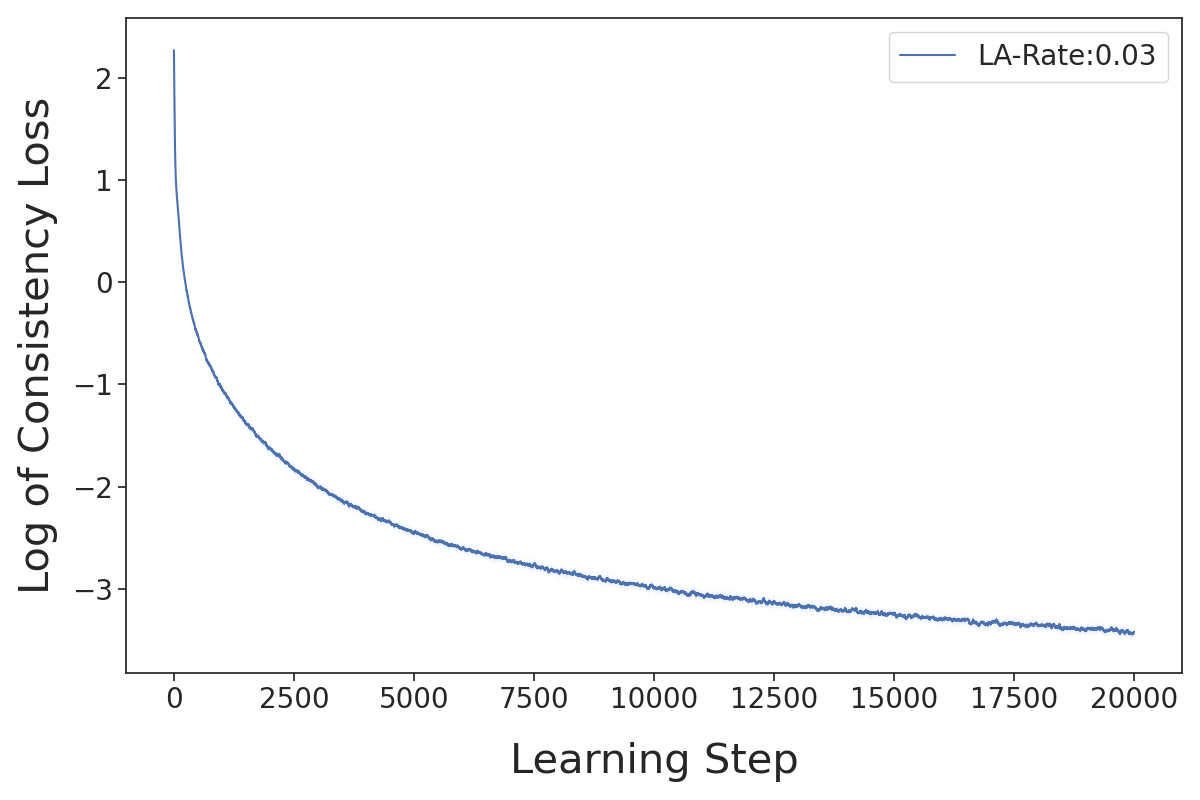}
	\caption{}
	\label{fig:ipd_consloss_low}
 \end{subfigure}
 \begin{subfigure}[]{0.32\linewidth}
    \centering
	\includegraphics[width=\linewidth]{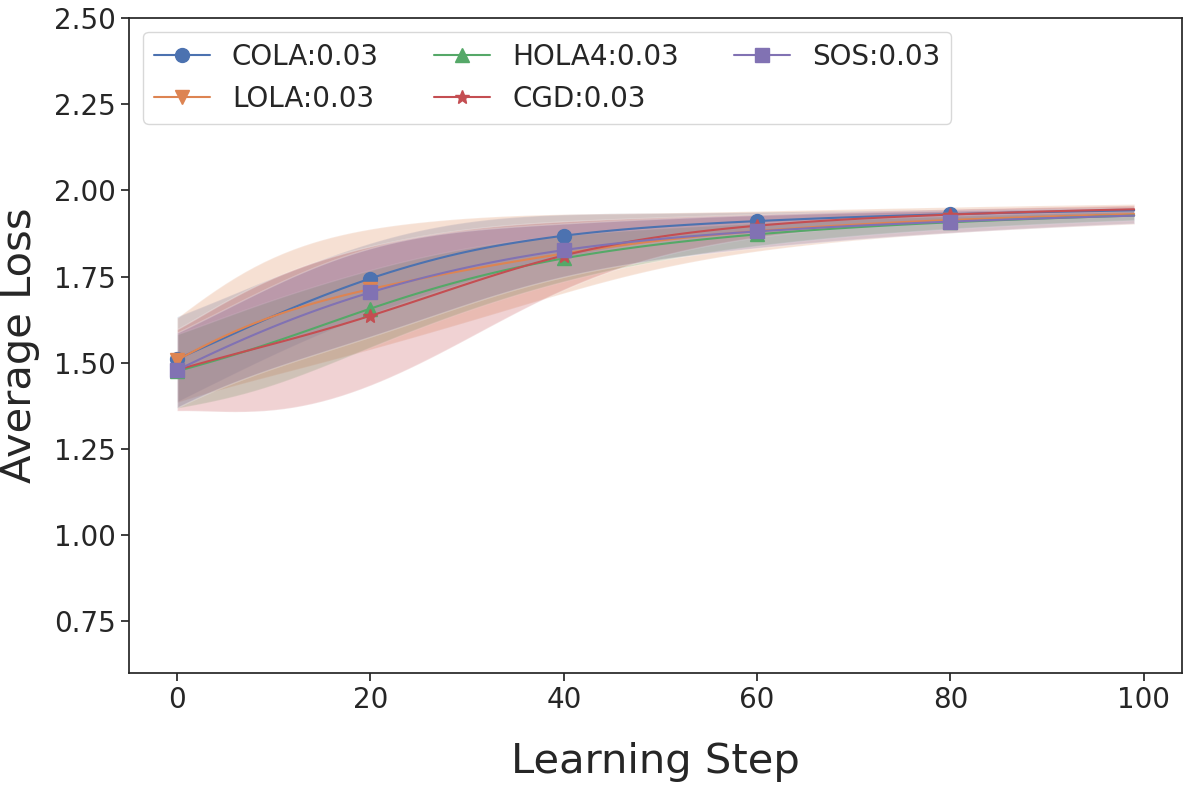}
	\caption{}
	\label{fig:ipd_play_low}
 \end{subfigure}
 \begin{subfigure}[]{0.32\linewidth}
	\includegraphics[width=\linewidth]{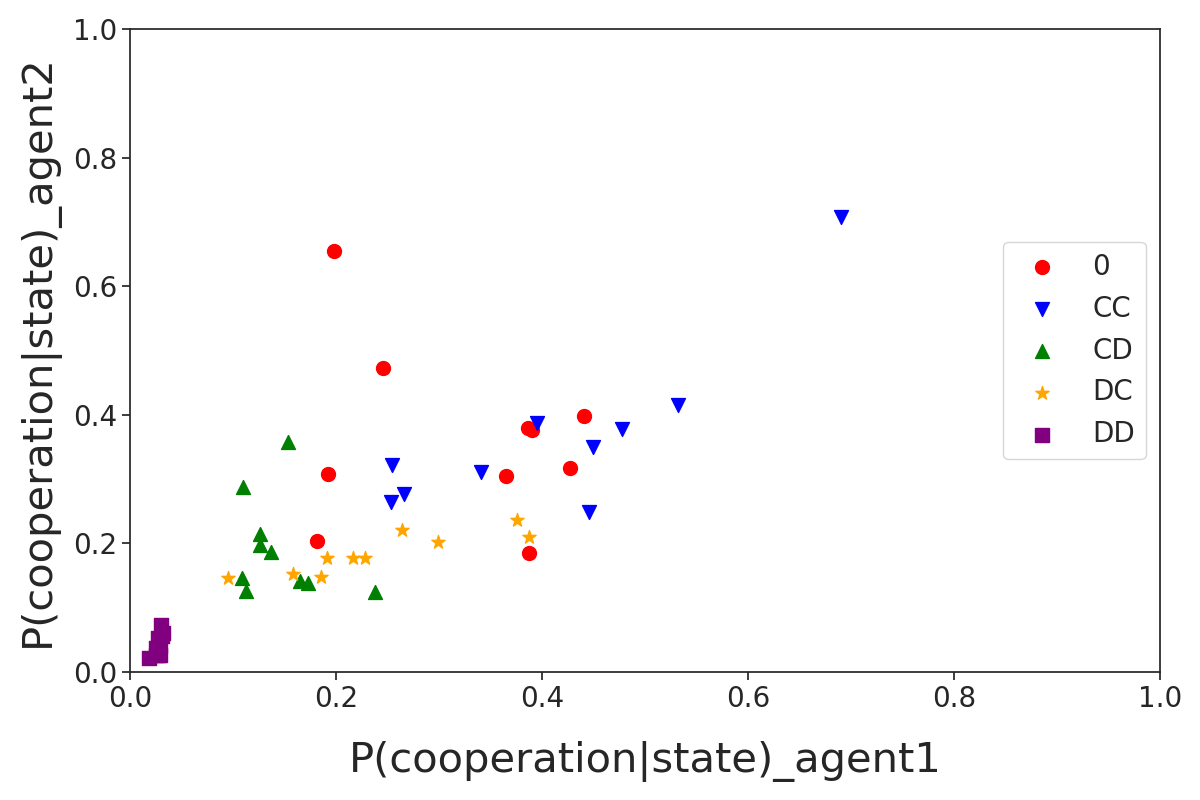}
	\caption{}
	\label{fig:ipd_policy_low}
 \end{subfigure}
 \begin{subfigure}[]{0.32\linewidth}
     \centering
	\includegraphics[width=\linewidth]{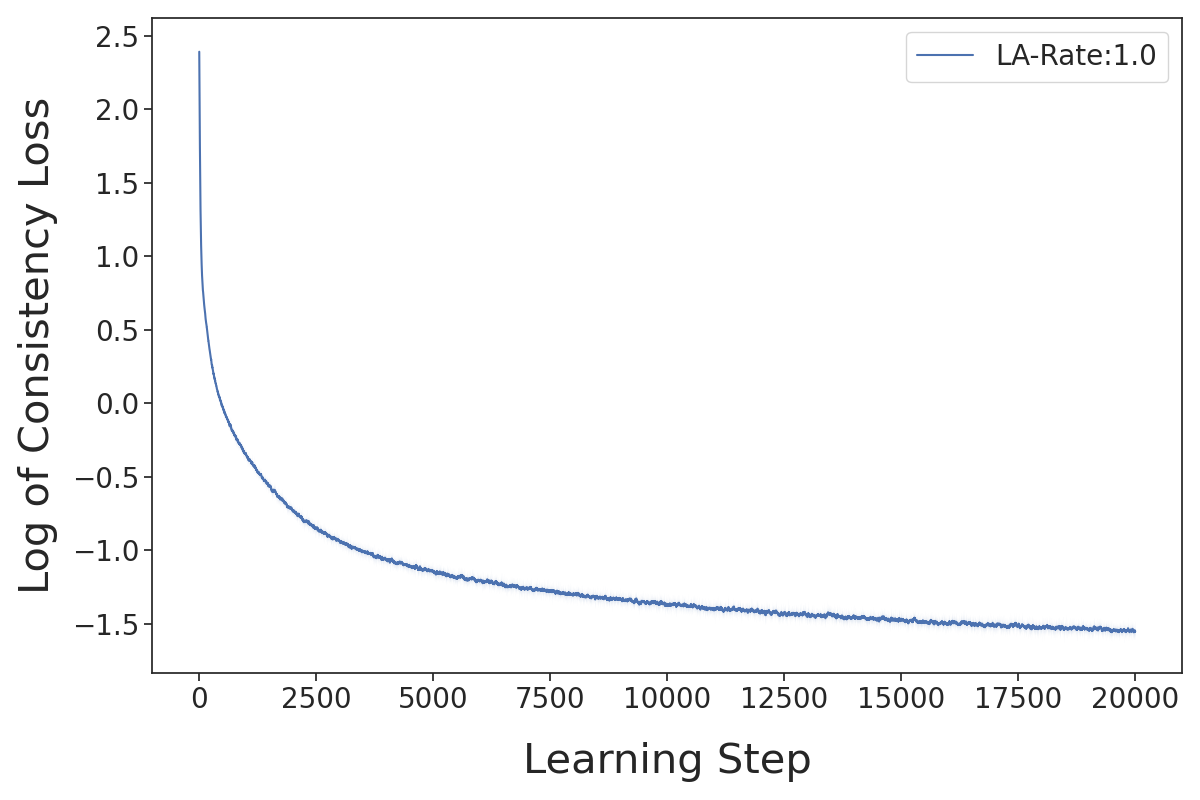}
	\caption{}
	\label{fig:ipd_consloss}
 \end{subfigure}
 \begin{subfigure}[]{0.32\linewidth}
    \centering
	\includegraphics[width=\linewidth]{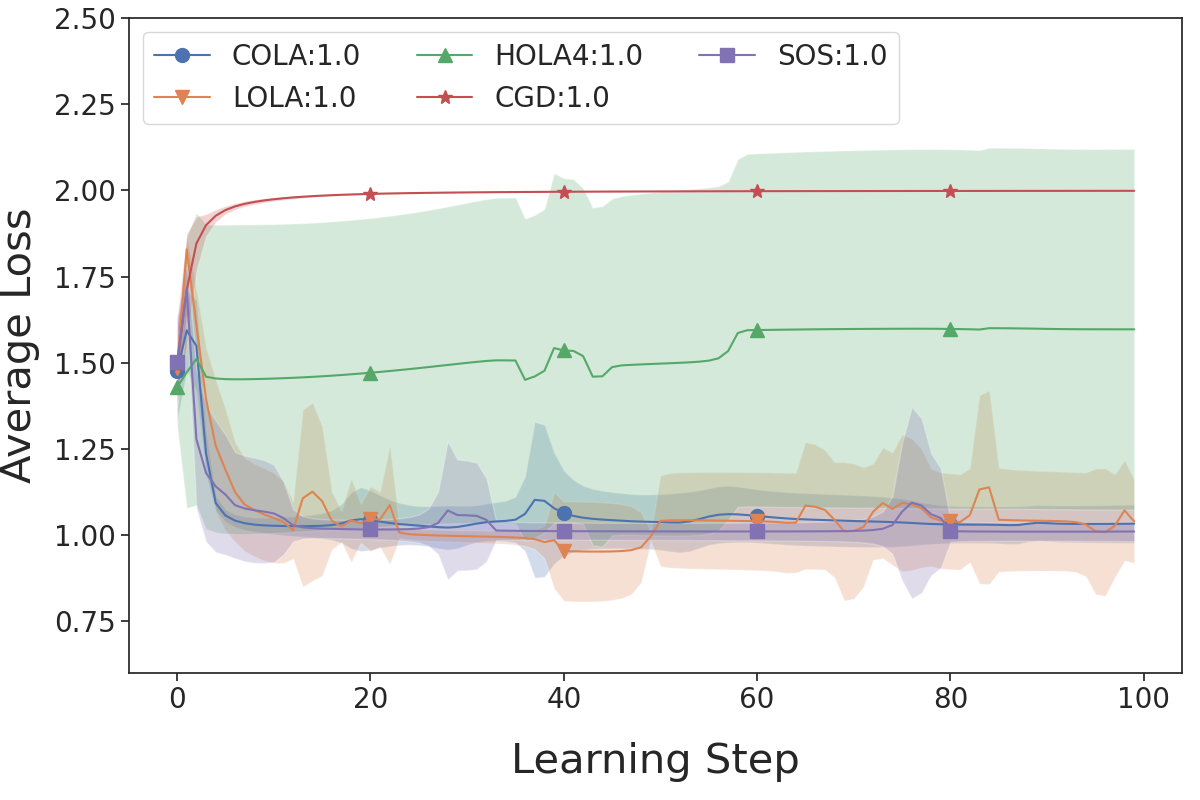}
	\caption{}
	\label{fig:ipd_play}
 \end{subfigure}
 \begin{subfigure}[]{0.32\linewidth}
	\includegraphics[width=\linewidth]{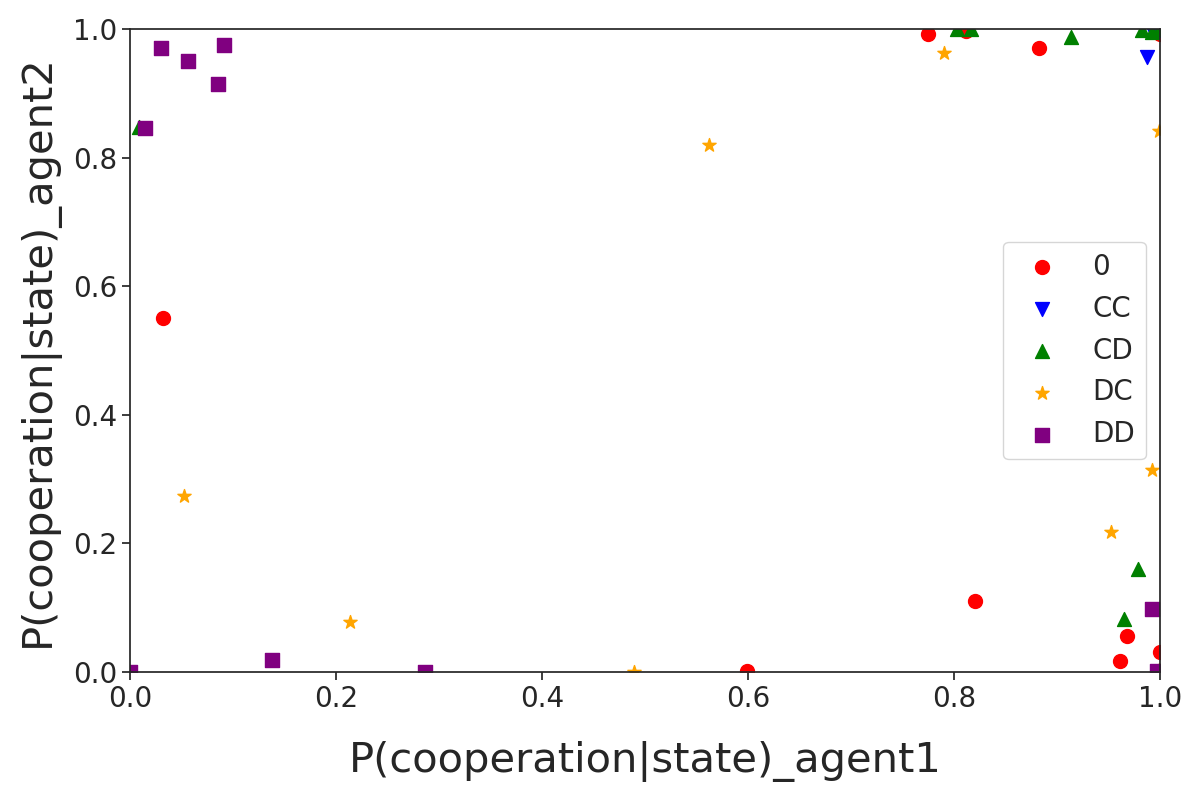}
	\caption{}
	\label{fig:ipd_policy}
 \end{subfigure}
  \caption{IPD results: Subfigure (a) / (d) show the consistency loss with shaded standard error over 10 independent training runs for look-ahead rates of 0.03 / 1.0, (b) / (e) the average loss and (c) / (f) the policy for the first player, both for the same pair of look-ahead rates. At low look-ahead HOLA defects and at high ones it diverges, also leading to high loss.} 
 \label{fig:ipd_game}
\end{figure*}

\paragraph{Matching Pennies.}
The payoff matrix for MP \citep{lee_application_1967} is shown in Appendix \ref{appendix:mp} in Table \ref{tab:imp}. Each policy is parameterized with a single parameter, the log-odds of choosing heads $p_{\text{heads}} = \sigma(\theta_{A})$. In this game, the unique Nash equilibrium is playing heads half the time.


\paragraph{Ultimatum game.}
The binary, single-shot Ultimatum game \citep{guth_experimental_1982, sanfey_neural_2003, oosterbeek_cultural_2004, henrich_foundations_2006} is set up as follows. Player 1 has access to $\$10$. They can split the money fairly with player 2 (\(\$5\) for each player) or they can split it unfairly (\(\$8\) for player 1, \(\$2\) for player 2). Player 2 can either accept or reject the proposed split. If player 2 rejects, the reward is 0 for both players. If player 2 accepts, the reward follows the proposed split. Player 1's parameter is the log-odds of proposing a fair split $p_{\text{fair}} = \sigma(\theta_{1})$. Player 2's parameter is the log-odds of accepting the unfair split (assuming that player 2 always accepts fair splits) $p_{\text{accept}} = \sigma(\theta_{2})$. In terms of losses, we have
\begin{align*}L_1 &= -\left(5 p_{\text{fair}}+8(1-p_{\text{fair}})p_{\text{accept}}\right)\\L_{2} &= -\left(5 p_{\text{fair}}+2(1-p_{\text{fair}})p_{\text{accept}}\right).\end{align*}

\paragraph{IPD.}
We investigate the infinitely iterated Prisoner's Dilemma (IPD) \citep{Axelrod84, Harper_2017} with discount factor $\gamma=0.96$ and the usual payout function (see Appendix \ref{appendix:ipd}). An agent $i$ is defined through 5 parameters, the log-odds of cooperating in the first time step and across each of the four possible tuples of past actions of both players in the later steps. 

\section{Results}
\label{results}
First, we report and compare the learning outcomes achieved by COLA and our baselines. We find that COLA update functions converge even under high look-ahead rates and learn socially desirable solutions. 
We also confirm our theoretical result (Proposition~\ref{cgd-ilola}) that \textit{CGD does not equal iLOLA}, contradicting \citeauthor{schafer_competitive_2020} (\citeyear{schafer_competitive_2020}), and that COLA does not, in general, maintain SFPs (Proposition~\ref{consistency_sfps}), contradicting the prior belief that this shortcoming is caused by inconsistency.

Second, we provide a more in-depth empirical analysis and comparison of the COLA and HOLA update functions, showing that COLA and HOLA tend to coincide when the latter converges, and that COLA is able to find consistent solutions even when HOLA diverges. Moreover, while COLA's solutions are not unique in theory (Proposition~\ref{consistency_not_unique}), we empirically find that in our examples COLA tends to find similar solutions across different independent training runs. 
Additional results supporting the above findings are reported in Appendix~\ref{appendix-further-results}.

\paragraph{Learning Outcomes.}


\begin{table*}[hbt!]
\small
\centering
\caption{IPD: Over multiple look-ahead rates we compare (a) the consistency losses and (b) the cosine similarity between COLA and LOLA, HOLA2, and HOLA4. The values represent the mean over 250 samples, uniformly sampled from the parameter space $\Theta$. The error bars represent one standard deviation and capture the variance over 10 different COLA training runs.}
\quad
\subfloat[]{\begin{tabular}{l|l l l l}
\label{tab:ipdcons}
$\alpha$ & LOLA    & HOLA2   & HOLA4   & \multicolumn{1}{c}{COLA} \\ \hline
1.0      & 39.56   & 21.16   & 381.21  & 1.06$\pm$0.09 \\ \hline
0.03     & 2e-3 & 5e-6 & 9e-8 & 0.16$\pm$0.02 \\ \hline
\end{tabular}
}
\quad
\subfloat[]{\begin{tabular}{l|l l l}
\label{tab:ipdsim}
$\alpha$ & LOLA & HOLA2 & HOLA4 \\ \hline
1.0      & 0.73$\pm$0.02 & 0.63$\pm$0.01  & 0.46$\pm$0.03  \\ \hline
0.03     & 0.97$\pm$0.01 & 0.97$\pm$0.01  & 0.97$\pm$0.01  \\ \hline
\end{tabular}
}
\end{table*}

In the Tandem game (Figure \ref{fig:tandem_play}), we see that COLA and HOLA8 converge to similar outcomes in the game, whereas CGD does not. This supports our theoretical result that CGD does not equal iLOLA (Proposition~\ref{cgd-ilola}). We also see that COLA does not recover SFPs, thus experimentally confirming Proposition~\ref{consistency_sfps}. In contrast to LOLA, HOLA and SOS, COLA finds a convergent solution even at a high look-ahead rate (see COLA:0.8 in Figure~\ref{fig:tandem_play} and Figure \ref{fig:tandem_play_div} in Appendix~\ref{appendix:tandem}). CGD is the only other algorithm in the comparison that also shows robustness to high look-ahead rates in the Tandem game. 

On the IPD, all algorithms find the defect-defect strategy on low look-ahead rates (Figure \ref{fig:ipd_play_low}). At high look-ahead rates, COLA finds a strategy qualitatively similar to tit-for-tat, as displayed in Figure \ref{fig:ipd_policy}, though more noisy. However, COLA still achieves close to the optimal total loss, in contrast to CGD, which finds defect-defect even at a high look-ahead rate (see Figure \ref{fig:cgd_on_ipd} in Appendix \ref{appendix:ipd}). The fact that, unlike HOLA and COLA, CGD finds defect-defect, further confirms that CGD does not equal iLOLA.

On MP at high look-ahead rates, SOS and LOLA mostly don’t converge, whereas COLA converges\textit{ even faster }with a high look-ahead rate (see Figure \ref{fig:var_cons_high}), confirming Proposition \ref{prop:hamiltonian_theory} experimentally (also see Figure \ref{fig:bal_play} and \ref{fig:ham_play} in Appendix~\ref{appendix:hamiltonian}). To further investigate the influence of consistency on learning behavior, we plot the consistency of an update function against the variance of the losses across learning steps achieved by that function, for different orders of HOLA and for COLA (Figure~\ref{fig:mp_div}). At a high look-ahead rate in Matching Pennies, we find that more consistent update functions tend to lead to lower variance across training, demonstrating a potential benefit of increased consistency at least at high look-ahead rates.

For the Ultimatum game, we find that COLA is the only method that finds the fair solution consistently at a high look-ahead rate, whereas SOS, LOLA, and CGD do not (Figure \ref{fig:ult_play}). At low look-ahead rates, all algorithms find the unfair solution (see Figure \ref{fig:ult_play_low} in Appendix \ref{appendix:ult}). This demonstrates an advantage of COLA over our baselines and shows that higher look-ahead rates can lead to better learning outcomes.

Lastly, we introduce the Chicken game in Appendix \ref{appendix:chicken_game}. Both Taylor LOLA and SOS crash, whereas COLA, HOLA, CGD, and exact LOLA swerve at high look-ahead rates (Figure \ref{fig:chicken_play_high}). Crashing in Chicken results in a catastrophic payout for both agents, whereas swerving results in a jointly preferable outcome.\footnote{Interestingly, in contrast to Taylor LOLA, exact LOLA swerves. The Chicken game is the only game where we found a difference in learning behavior between exact LOLA and Taylor LOLA.}


\paragraph{Update functions.}


Turning to our analysis of COLA and HOLA update functions, we first investigate how increasing the order of HOLA affects the consistency of its updates. As shown in Table \ref{tab:beta_hola}, \ref{tab:imp_cons} and \ref{tab:ipdcons}, HOLA's updates become more consistent with increasing order, but only below a certain, game-specific look-ahead rate threshold. Above that threshold, HOLA's updates become less consistent with increasing order. 

Second, we compare the consistency losses of COLA and HOLA. In the aforementioned tables, we observe that COLA achieves low consistency losses on most games. Below the threshold, COLA finds similarly low consistency losses as HOLA, though there HOLA's are lower in the non-polynomial games. Above the threshold, COLA finds consistent updates, even when HOLA does not. A visualization of the update function learned by COLA at a high look-ahead rate on the MP is given in Figure \ref{fig:imp_grad_field2}.

For the IPD, COLA's consistency losses are high compared to other games, but much lower than HOLA's consistency losses at high look-ahead rates. We leave it to future work to find methods that obtain more consistent solutions. 

Third, we are interested whether COLA and HOLA find similar solutions. We calculate the cosine similarity between the respective update functions over $\Theta$. As we show in Table \ref{tab:tandem_cos}, \ref{tab:imp_sim} and \ref{tab:ipdsim}, COLA and HOLA find very similar solutions when HOLA's updates converge, i.e., when the look-ahead rate is below the threshold. Above the threshold, COLA's and HOLA's updates unsurprisingly become less similar with increasing order, as HOLA's updates diverge with increasing order.

Lastly, we investigate Proposition \ref{consistency_not_unique} empirically and find that COLA finds similar solutions in Tandem and MP over 5 training runs (see Table \ref{tab:self_sim} in Appendix \ref{appendix:mp}). Moreover, the small standard deviations in Table \ref{tab:ipdsim} indicate that COLA also finds similar solutions over different runs in the IPD.

\section{Conclusion and Future Work}
In this paper, we corrected a claim made in prior work \citep{schafer_competitive_2020}, clearing up the relation between the CGD and LOLA algorithms. We also showed that iLOLA solves part of the consistency problem of LOLA. We introduced COLA, which finds consistent solutions without requiring many recursive computations like iLOLA. It was believed that inconsistency leads to arrogant behaviour and lack of preservation of SFPs. We showed that even with consistency, opponent shaping behaves \textit{arrogantly}, pointing towards a fundamental open problem for the method.

In a set of games, we found that COLA tends to find prosocial solutions. Although COLA's solutions are not unique in theory, empirically, COLA tends to find similar solutions in different runs. It coincides with iLOLA when HOLA converges and finds consistent update functions even when HOLA fails to converge with increasing order. Moreover, we showed empirically (and in one case theoretically) that COLA update functions converge under a wider range of look-ahead rates than HOLA and LOLA update functions.

This work raises many questions for future work, such as the existence of solutions to the COLA equations in general games and general properties of convergence and learning outcomes. 
Moreover, additional work is needed to scale COLA to large settings such as GANs or Deep RL, or settings with more than two players. Another interesting axis is addressing further inconsistent aspects of LOLA as identified in \citet{letcher_stable_2019}.

\section*{Acknowledgements}
 A part of this work was done while Timon Willi and Jakob Foerster were at the Vector Institute, University of Toronto. They are grateful for the access to the Vector Institute's compute infrastructure. They are also grateful for the access to the Advanced Research Computing (ARC) infrastructure. Johannes Treutlein is grateful for support by the Center on Long-Term Risk. 

\bibliography{references}
\bibliographystyle{icml2022}

\newpage
\appendix
\onecolumn


\section{Nonconvergence of HOLA in the Tandem Game}
\label{appendix:nonconvergence}
In the following, we show that for the choice of look-ahead rate \(\alpha=1\), HOLA does not converge in the Tandem game. This shows that given a large enough look-ahead rate, even in a simple quadratic game, HOLA need not converge.
\begin{prop}
Let \(L^1,L^2\) be the two players' loss functions in the Tandem game as defined in Section~\ref{experiments-tandem-game}:
\begin{equation}
    L^{1}(x,y)=(x+y)^{2}-2 x \quad \text{and} \quad L^{2}(x, y)=(x+y)^{2}-2y,
\end{equation}
and let \(h^n_i\) denote the \(n\)-th order exact LOLA update for player \(i\) (where \(n=0\) denotes naive learning). Consider the look-ahead rate \(\alpha:=1\). Then the functions \((h^n_i)_{n\in\mathbb{N}}\) for \(i=1,2\) do not converge pointwise.
\end{prop}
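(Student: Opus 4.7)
The plan is to guess a linear ansatz for $h_i^n(x,y)$ and verify it by induction on $n$, thereby reducing the question of convergence of $(h_i^n)$ to that of a single scalar recurrence.

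First, I would compute the base case. Since $\nabla_i L^i(x,y) = 2(x+y) - 2$ for both $i=1,2$, the update with $h_{-i}^{-1} \equiv 0$ gives $h_i^0(x,y) = 2 - 2(x+y)$, exhibiting the form $a_0 - 2(x+y)$ with $a_0 = 2$. I would also note the symmetry $L^1(x,y) = L^2(y,x)$, together with $h_i^{-1} \equiv 0$, which by induction forces $h_1^n$ and $h_2^n$ to be the same function of $(x,y)$. Thus it suffices to track one sequence.

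Second, I would carry out the inductive step. Assume $h_{-i}^n(x,y) = a_n - 2(x+y)$, so that $\partial_1 h_{-i}^n = \partial_2 h_{-i}^n = -2$ and $x + y + h_2^n(x,y) = a_n - (x+y)$. Writing $L^1(u,v) = (u+v)^2 - 2u$ and applying the chain rule to $L^1(x, y + h_2^n(x,y))$ gives
\begin{equation*}
\nabla_1 L^1 = 2\bigl(a_n - (x+y)\bigr)\bigl(1 + \partial_1 h_2^n\bigr) - 2 = -2\bigl(a_n - (x+y)\bigr) - 2 = -2 a_n + 2(x+y) - 2,
\end{equation*}
so $h_1^{n+1}(x,y) = -\nabla_1 L^1 = (2 a_n + 2) - 2(x+y)$, preserving the ansatz with $a_{n+1} = 2 a_n + 2$. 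Solving the recurrence $a_{n+1} = 2 a_n + 2$ with $a_0 = 2$ yields $a_n = 2^{n+2} - 2$.

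Finally, since $a_n \to \infty$, we have $h_i^n(x,y) = (2^{n+2} - 2) - 2(x+y) \to \infty$ for every $(x,y) \in \mathbb{R}^2$, so pointwise convergence fails everywhere. The only real obstacle is bookkeeping: because this is \emph{exact} LOLA, one must remember that $h_{-i}^n$ is a function of $(\theta_1, \theta_2)$ and differentiate through it via the chain rule; the Jacobian term $\partial_1 h_2^n = -2$ is precisely what flips a sign and produces the doubling in the recurrence $a_{n+1} = 2 a_n + 2$ that drives divergence.
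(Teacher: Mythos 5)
Your proof is correct and follows essentially the same route as the paper's: both establish by induction the explicit form $h_i^n(x,y) = 2^{n+2} - 2(1+x+y)$ (your $a_n = 2^{n+2}-2$ solves the recurrence $a_{n+1}=2a_n+2$, $a_0=2$, to give exactly this) and conclude divergence since the constant term blows up. Your framing via the scalar recurrence and the symmetry observation is a minor repackaging, not a different argument.
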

\begin{proof}
We will prove the auxiliary statement that
\[h_i^n(x,y)=2^{n+2}-2(1+x+y)\]
for \(i=1,2\).
It then follows trivially that the \(h_i^n\) cannot converge.

The auxiliary result can be proven by induction. The base case \(n=0\) follows from \[{\nabla_iL^i(x,y)=2-2(x+y)=2^2-2(1+x+y)}\] 
for \(i=1,2\). Next, for the inductive step, we have to show that
\begin{align}
h_1^{n}(x,y)&=-\nabla_1( L^1(x,y+h_{2}^{n-1}(x,y)))\\
h_2^{n}(x,y)&=-\nabla_2( L^2(x+h_{1}^{n-1}(x,y),y))
\end{align}
for any \(n>0\). Substituting the inductive hypothesis in the second step, we have
\begin{align}&\phantom{=}-\nabla_1( L^1(x,y+h_{2}^{n-1}(x,y)))\\
&=-\nabla_1( L^1(x,y+2^{n+1}-2(1+x+y)))\\
&=-\nabla_1\left((x+y+2^{n+1}-2-2x-2y)^2 -2x\right)\\
&=-\nabla_1\left((-x-y+2^{n+1}-2)^2 -2x\right)\\
&=2(-x-y+2^{n+1}-2) +2\\
&=2^{n+2} - 2(1+x+y)\\
&=h_1^n(x,y).
\end{align}
The derivation for \(h_2^n(x,y)\) is exactly analogous.
This shows the inductive step and thus finishes the proof.

\end{proof}

\section{Proof of Proposition~\ref{ilola-consistent}}
\label{appendix:ilolaconv}
%
To begin, recall that some differentiable game with continuously differentiable loss functions \(L^1,L^2\) is given, and that \(h^n=(h_1^n,h_2^n)\) denotes the \(n\)-th order exact LOLA update function. We assume that the iLOLA update function \(h\) exists, defined via
\[h_i(\theta):=
\lim_{n\rightarrow\infty}h^n_i(\theta),
\]
for all \(\theta\in\mathbb{R}^d\).

To prove Proposition~\ref{ilola-consistent}, we need to show that \(h_1,h_2\) are consistent, i.e., satisfy Definition~\ref{defn-consistency}, under the assumption that 

\[\lim_{n\rightarrow\infty}\nabla_ih_{-i}^n(\theta)=
\nabla_ih_{-i}(\theta)
\]
for \(i=1,2\) and any \(\theta\).

To that end, define the \emph{(exact) LOLA operator} \(\Psi\) as the function mapping a pair of update functions \(f:=(f_1,f_2)\) to the RHS of Equations~\ref{eq:cons1} and \ref{eq:cons2},
\begin{align}\Psi_1(f)(\theta)&:=-\alpha\nabla_1(L^1(\theta_1,\theta_2 + f_2(\theta_1,\theta_2)))
\\
\Psi_2(f)(\theta)&:=-\alpha\nabla_2(L^2(\theta_1+\ f_1(\theta_1,\theta_2),\theta_2))
\end{align}
for any \(\theta\).
Note that then we have \(h_i^{n+1}=\Psi_i(h^n)\), i.e., \(\Psi\) maps \(n\)-th order LOLA to \(n+1\)-order LOLA.

In the following, we show that iLOLA is a fixed point of the LOLA operator, i.e., \(\Psi(h)=h\). It follows from the definition of \(\Psi\) that then \(h\) is consistent. We denote by \(\Vert\cdot\Vert\) the Euclidean norm or the induced operator norm for matrices. We focus on showing \(\Psi_1(h)=h_1\). The case \(i=2\) is exactly analogous.

For arbitrary \(\theta\) and \(n\), define \(\hat{\theta}_2:=\theta_2 + h_2(\theta)\) and \(\hat{\theta}_2^n:=\theta_2 + h^2_n(\theta)\) as the updated parameter of player \(2\). First, it is helpful to show that \(\Psi_1(h^n)(\theta)\) converges to \(\Psi_1(h)(\theta)\):
\begin{align}
0
&\leq\Vert \Psi_1(h)(\theta)-\Psi_1(h^n)(\theta)\Vert\\
&=\alpha
\Vert \nabla_1 (L^1(\theta_1,\theta_2 + h_2(\theta)))-\nabla_1 (L^1(\theta_1,\theta_2 + h_2^n(\theta))\Vert\\
&=\alpha
\Vert (\nabla_1h_2(\theta))^\top \nabla_2L^1(\theta_1,\hat{\theta}_2) - (\nabla_1h_2^n(\theta))^\top \nabla_2L^1(\theta_1,\hat{\theta}_2^n)
+ \nabla_1 L^1(\theta_1, \hat{\theta}_2)-\nabla_1 L^1(\theta_1, \hat{\theta}_2^n)\Vert
\\
&\leq
\alpha\Vert (\nabla_1h_2(\theta))^\top \nabla_2L^1(\theta_1,\hat{\theta}_2) - (\nabla_1h_2^n(\theta))^\top \nabla_2L^1(\theta_1,\hat{\theta}_2^n)\Vert + \alpha\Vert\nabla_1 L^1(\theta_1, \hat{\theta}_2)-\nabla_1 L^1(\theta_1, \hat{\theta}_2^n)\Vert
\\
&=
\alpha\Vert (\nabla_1h_2(\theta))^\top (\nabla_2L^1(\theta_1,\hat{\theta}_2) - \nabla_2L^1(\theta_1,\hat{\theta}_2^n))
\\
&\quad+\nonumber
 (\nabla_1h_2(\theta)-\nabla_1h_2^n(\theta))^\top\nabla_2L^1(\theta_1,\hat{\theta}_2^n)\Vert
+ \alpha\Vert\nabla_1 L^1(\theta_1, \hat{\theta}_2)-\nabla_1 L^1(\theta_1,\hat{\theta}_2^n)\Vert
\\
&\leq\nonumber
\alpha\Vert (\nabla_1h_2(\theta))^\top \Vert\Vert\nabla_2L^1(\theta_1,\hat{\theta}_2) - \nabla_2L^1(\theta_1,\hat{\theta}_2^n)\Vert
\\
&\quad+
\alpha\Vert (\nabla_1h_2(\theta)-\nabla_1h_2^n(\theta))^\top\Vert\Vert\nabla_2L^1(\theta_1,\hat{\theta}_2^n)\Vert
+ \alpha\Vert\nabla_1 L^1(\theta_1, \hat{\theta}_2)-\nabla_1 L^1(\theta_1,\hat{\theta}_2^n)\Vert\label{eq:123}
\\
&\overset{n\rightarrow\infty}{\longrightarrow} 0.
\end{align}
In the last step, we used the following two facts. First, since \(\nabla_iL^1(\theta)\) is assumed to be continuous in \(\theta_2\), and \(\lim_{n\rightarrow\infty}\hat{\theta}_2^n = \theta_2 + \lim_{n\rightarrow\infty}h_2^n(\theta)=\theta_2 + h_2(\theta)=\hat{\theta}_2\) by assumption, it follows that \(\lim_{n\rightarrow \infty}\nabla_2L^1(\theta_1,\hat{\theta}_2^n)=\nabla_2 L^1(\theta_1,\hat{\theta}_2)\) and \(\lim_{n\rightarrow \infty}\nabla_1L^1(\theta_1,\hat{\theta}_2^n)=\nabla_1 L^1(\theta_1,\hat{\theta}_2)\). Second, by assumption, \(\lim_{n\rightarrow\infty}\nabla h_2^n(\theta)=\nabla h_2(\theta)\). In particular, \(\Vert\nabla_2L^1(\theta_1,\hat{\theta}_2^n)\Vert\) must be bounded, and thus the three terms in (\ref{eq:123}) must all converge to \(0\) as \(n\rightarrow\infty\). It follows by the sandwich theorem that \(\lim_{n\rightarrow\infty}\Psi_1(h^n)(\theta)=\Psi_1(h)(\theta)\).

Now we can directly prove that \(\Psi_1(h)(\theta)=h_1(\theta)\). It is
\begin{align}\label{eq:sandwich1}0&\leq 
    \Vert\Psi_1(h)(\theta)-h_1(\theta)\Vert
    \\
       & = \Vert \Psi_1(h)(\theta)-\Psi_1(h^n)(\theta)
    + \Psi_1(h^n)(\theta) - h_1^n(\theta)
    +  h_1^n(\theta) - h_1(\theta)\Vert\\
   & \leq \Vert \Psi_1(h)(\theta)-\Psi_1(h^n)(\theta)\Vert
    + \Vert \Psi_1(h^n)(\theta) - h_1^n(\theta)\Vert
    + \Vert h_1^n(\theta) - h_1(\theta)\Vert\\
    &= \Vert \Psi_1(h)(\theta)-\Psi_1(h^n)(\theta)\Vert
    + \Vert h_1^{n+1}(\theta) - h_1^n(\theta)\Vert
    + \Vert h_1^n(\theta) - h_1(\theta)\Vert\\
    &\overset{n\rightarrow\infty}{\longrightarrow} 0\label{eq:sandwich2}
,\end{align}
where in the last step we have used the above result, as well as the assumption that \(h^n_1(\theta)\) converges pointwise, and thus must also be a Cauchy sequence, so the last and the middle term both converge to zero as well.

It follows by the sandwich theorem that \(\Psi_1(h)(\theta)=h_1(\theta)\). Since \(\theta\) was arbitrary, this concludes the proof.\hfill \qedsymbol

\section{Infinite-order Taylor LOLA}\label{appendix-Taylor-iLOLA}
In this Section, we repeat the analysis of iLOLA from Section~\ref{conv-and-consistency-LOLA} for infinite-order Taylor LOLA (Taylor iLOLA). I.e., we define Taylor consistency, and show that Taylor iLOLA satisfies this consistency equation under certain assumptions. This result will be needed for our proof of Proposition~\ref{cgd-ilola}.

To begin, assume that some differentiable game with continuously differentiable loss functions \(L^1,L^2\) is given. Define the Taylor LOLA operator \(\Phi\) that maps pairs of update functions \((f_1,f_2)\) to the associated Taylor LOLA update
\begin{equation}\label{itlola-operator}\Phi_i(f)
:= -\alpha \nabla_i ( L^i + (\nabla_{-i} L^i)^\top f_{-i} )
\end{equation}
for \(i=1,2\).

We then have the following definition.
\begin{defn}[Taylor consistency]Two update functions \(f_1,f_2\) are called Taylor consistent if for any \(i=1,2\), we have
\[\Phi(f_1,f_2)=(f_1,f_2).\]
\end{defn}

Next, let \(h^n_i\) denote \(i\)'s \(n\)-th order Taylor LOLA update. I.e., \(h_i^n:=\Phi_i(h^{n-1})\) for \(n\geq 0\), where we let \(h^{-1}_i:=0\). Then we define
\begin{defn}[Taylor iLOLA] 
If $(h_1^n, h_2^n)$ converges pointwise as $n \to \infty$, define Taylor iLOLA as the limiting update
\[h \coloneqq \lim_{n\to\infty} \begin{pmatrix}
h_1^n \\ h_2^n.
\end{pmatrix} \]
\end{defn}

Finally, we provide a proof that Taylor iLOLA is Taylor consistent; i.e., we give a Taylor version of Proposition~\ref{ilola-consistent}.

\begin{prop}\label{itlola-consistent}Let \(h^n_i\) denote player \(i\)'s \(n\)-th order Taylor LOLA update. 
Assume that ${\lim_{n\rightarrow\infty}h_i^n(\T)=h_i(\T)}$ and ${\lim_{n\rightarrow\infty}\nabla_{i} h_{-i}^n(\T)=\nabla_ih_{-i}(\T)}$ for all $\T$ and $i \in \{1, 2\}$. Then Taylor iLOLA is Taylor consistent.
\end{prop}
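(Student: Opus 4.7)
The plan is to mirror the structure of the proof of Proposition~\ref{ilola-consistent} given in Appendix~\ref{appendix:ilolaconv}, but with the Taylor LOLA operator $\Phi$ replacing the exact LOLA operator $\Psi$. The goal is to show $\Phi(h)=h$, which by definition of $\Phi$ is exactly Taylor consistency. Since $h_i^{n+1}=\Phi_i(h^n)$ by construction, this reduces to a continuity statement: we want to pass to the limit inside $\Phi$.

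First, I would write out $\Phi_i(f)(\theta)$ in a form that exposes its dependence on $f$ and $\nabla f$ explicitly. Applying the product rule to (\ref{itlola-operator}) gives
\[\Phi_i(f)(\theta)=-\alpha\,\nabla_i L^i(\theta)-\alpha\,(\nabla_i\nabla_{-i}L^i(\theta))\,f_{-i}(\theta)-\alpha\,(\nabla_i f_{-i}(\theta))^\top\nabla_{-i}L^i(\theta).\]
This is the key structural observation: $\Phi_i(f)(\theta)$ depends only on $f_{-i}(\theta)$, on $\nabla_i f_{-i}(\theta)$, and on first and second partial derivatives of $L^i$ at $\theta$. The former two quantities converge to $h_{-i}(\theta)$ and $\nabla_i h_{-i}(\theta)$ respectively by the hypotheses of the proposition, and the latter are fixed continuous functions of $\theta$ (here we use twice continuous differentiability of $L^i$).

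Second, using this expansion I would bound $\|\Phi_i(h)(\theta)-\Phi_i(h^n)(\theta)\|$ by three terms, one for each place where $h^n$ or $\nabla h^n$ appears, and apply the triangle inequality together with the operator-norm bound $\|Ax\|\le\|A\|\,\|x\|$. Each of the resulting terms is a product of a bounded factor (continuous $L^i$-derivatives at the fixed point $\theta$) and a factor of the form $\|h_{-i}(\theta)-h_{-i}^n(\theta)\|$ or $\|\nabla_i h_{-i}(\theta)-\nabla_i h_{-i}^n(\theta)\|$, both of which tend to $0$ as $n\to\infty$ by assumption. Hence $\Phi_i(h^n)(\theta)\to\Phi_i(h)(\theta)$ pointwise. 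This step is strictly easier than the analogous step for exact iLOLA, because there is no outer $L^i$ evaluated at shifted arguments — the shift has already been linearized away in the Taylor formulation.

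Third, I would close the argument by the same sandwich trick as in the exact case:
\[0\le\|\Phi_i(h)(\theta)-h_i(\theta)\|\le\|\Phi_i(h)(\theta)-\Phi_i(h^n)(\theta)\|+\|h_i^{n+1}(\theta)-h_i^n(\theta)\|+\|h_i^n(\theta)-h_i(\theta)\|.\]
The first term on the right vanishes by the continuity step above; the third vanishes by the pointwise convergence hypothesis; and the second vanishes because every pointwise convergent sequence is pointwise Cauchy. Since $\theta$ was arbitrary and $i\in\{1,2\}$ was arbitrary, we conclude $\Phi(h)=h$, i.e., $h$ is Taylor consistent. The main obstacle is only bookkeeping — the correct product-rule expansion of $\Phi_i(f)$ and the matching of matrix/vector dimensions in the norm bound — and no new analytic ideas beyond those already used for Proposition~\ref{ilola-consistent} are required.
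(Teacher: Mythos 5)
Your proposal is correct and follows essentially the same route as the paper's proof in Appendix~\ref{appendix-Taylor-iLOLA}: expand $\Phi_i$ by the product rule, bound $\Vert\Phi_i(h)-\Phi_i(h^n)\Vert$ via the triangle inequality and operator norms using the two convergence hypotheses (the paper gets two difference terms, since the constant $-\alpha\nabla_iL^i$ cancels), and close with the identical sandwich argument. Your observation that the Taylor case is strictly easier because the argument shift has been linearized away is exactly the point the paper makes.
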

\begin{proof}
The proof is exactly analogous to that of Proposition~\ref{ilola-consistent}, but easier.
We show \(\Phi(h)=h\). It follows from the definition of \(\Phi\) in Equation~\ref{itlola-operator} that then \(h\) is Taylor consistent. We focus on showing \(\Phi_1(h)=h_1\), and the case \(i=2\) is exactly analogous.

First, we show that \(\Phi_1(h^n)(\theta)\) converges to \(\Phi_1(h)(\theta)\) for all \(\theta\). Letting \(n\) be arbitrary and omitting \(\theta\) in the following for clarity, it is
\begin{align}0&\leq\Vert \Phi_1(h)-\Phi_1(h^n)\Vert\\
&=
\Vert -\alpha \nabla_1 ( L^1 + (\nabla_{2} L^1)^\top h_2 )+\alpha \nabla_1 ( L^1 + (\nabla_{2} L^1)^\top h^n_2 )\Vert\\
&=\alpha
\Vert -\nabla_{12} L^1 h_2  -(\nabla_2L^1)^\top(\nabla_1h_2)^\top + \nabla_{12} L^1h^n_2 + (\nabla_2L^1)^\top(\nabla_1h_2^n)^\top \Vert\\
&\leq\alpha
\Vert\nabla_{12} L^1(h^n_2- h_2)\Vert+\alpha\Vert
(\nabla_2L^1)^\top(\nabla_1h_2^n -\nabla_1h_2)^\top\Vert\\
&\leq\alpha
\Vert\nabla_{12} L^1\Vert \Vert h^n_2 -h_2\Vert +\alpha \Vert\nabla_2L^1\Vert\Vert\nabla_1h_2^n -\nabla_1h_2\Vert\\
&\overset{n\rightarrow\infty}{\longrightarrow} 0.
\end{align}
In the last step, we used the assumptions that \(\lim_{n\rightarrow\infty}h^n_2=h_2\) and \(\lim_{n\rightarrow\infty}\nabla h^n_2 = \nabla h_2\). 
It follows by the sandwich theorem that \(\lim_{n\rightarrow\infty}\Phi_1(h^n)(\theta)=\Phi_1(h)(\theta)\).

It follows from the above that \(\Phi_1(h)(\theta)=h_1(\theta)\), using exactly the same argument as in Equations~\ref{eq:sandwich1}-\ref{eq:sandwich2} with \(\Phi\) instead of \(\Psi\). Since \(\theta\) was arbitrary, this concludes the proof.

\end{proof}

\section{Proof of Proposition \ref{cgd-ilola}}
\label{appendix:cgd-ilola}

We begin by proving that LCGD does not coincide with Taylor LOLA and CGD neither coincides with exact nor Taylor iLOLA. It is sufficient to manifest a single counter-example: we consider the Tandem game given by $L^1 = (x+y)^2-2x$ and $L^2 = (x+y)^2-2y$ (using $x,y$ instead of $\T_1, \T_2$ for simplicity). Throughout this proof we use the notation introduced by \citet{balduzzi_mechanics_2018} and \citet{letcher_stable_2019} including the \textit{simultaneous gradient}, the \textit{off-diagonal Hessian} and the \textit{shaping term} of the game as
\[ \xi = \begin{pmatrix}
\nabla_1 L^1 \\ \nabla_2 L^2
\end{pmatrix} \qquad \text{and} \qquad H_o = \begin{pmatrix}
0 & \nabla_{12} L^2 \\
\nabla_{21} L^2 &  0
\end{pmatrix} \qquad \text{and} \qquad \lchi = \diag \left(H_o^T \nabla L\right) \]
respectively. Note that in two-player games, Taylor LOLA's shaping term reduces to
\[ \lchi = \begin{pmatrix}
\nabla_{12} L^2 \nabla_2 L^1 \\ \nabla_{21} L^1 \nabla_1 L^2 
\end{pmatrix} \,. \]

\paragraph{LCGD $\neq$ LOLA.} Following \citet{schafer_competitive_2020}, LCGD is given by
\[ \text{LCGD} = -\al \begin{pmatrix}
\nabla_x f - \al D^2_{xy} f \nabla_y g \\
\nabla_y g - \al D^2_{yx} g \nabla_x f
\end{pmatrix} = -\al\begin{pmatrix}
I & - \al D^2_{xy} f \\
- \al D^2_{yx} g & I
\end{pmatrix} \begin{pmatrix}
\nabla_x f \\ \nabla_y g
\end{pmatrix} = -\al(I-\al H_o)\xi \]
while Taylor LOLA is given \citep{letcher_stable_2019} by
\begin{align*}
\text{LOLA} &= -\al(I-\al H_o)\xi + \al^2 \lchi \,.
\end{align*}
Any game with $\lchi \neq 0$ will yield a difference between LCGD and LOLA; in particular,
\[ \lchi = 4(x+y) \begin{pmatrix}
1 \\ 1
\end{pmatrix} \]
in the Tandem game implies that LCGD $\neq$ LOLA whenever parameters lie outside the measure-zero set $\{x+y = 0\} \subset \mathbb{R}^2$.

\paragraph{CGD does not recover HOLA.} Since CGD is obtained through a bilinear approximation (Taylor expansion) of the loss functions, one would expect that the authors' claim of recovering HOLA is with regards to Taylor (not exact) HOLA. For completeness, and to avoid any doubts for the reader, we prove that CGD neither corresponds to exact nor Taylor HOLA.

Following \citet{schafer_competitive_2020}, the series-expansion of CGD is given by
\[ \text{CGD}n =  -\al \sum_{i=0}^{n} \begin{pmatrix}
0 & -\al D^2_{xy} f \\ -\al D^2_{yx} g & 0
\end{pmatrix}^i \begin{pmatrix}
D_x f \\ D_y g
\end{pmatrix} = -\al \sum_{i=0}^{n} (-\al H_o)^i \xi \]
and converges to CGD whenever $\al < 1/\Vert H_o\Vert $ (where \(\Vert \cdot\Vert\) denotes the operator norm induced by the Euclidean norm on the space). Assume for contradiction that the series-expansion of CGD recovers HOLA, i.e. that CGD$n$ = HOLA$n$ for all $n$. In particular, we must have
\[ \text{CGD} = \lim_{n \to \infty} \text{CGD}n = \lim_{n \to \infty} \text{HOLA}n = \text{iLOLA} \]
whenever $\al < 1/\Vert  H_o \Vert$. In the tandem game, we have
\[ H_o = 2 \begin{pmatrix}
0 & 1 \\ 1 & 0
\end{pmatrix} \]
with $\Vert H_o \Vert = 2$, so CGD = iLOLA whenever $\al < 1/2$. Moreover, $H_o$ being constant implies that
\[ \nabla \text{HOLA}n = \nabla \text{CGD}n = -\al \sum_{i=0}^n (-\al H_o)^i \nabla \xi \,, \]
so gradients of HOLA also converge pointwise for all $\al < 1/2$. In particular, CGD = iLOLA must satisfy the (exact or Taylor) consistency equations by Proposition \ref{ilola-consistent} or Proposition \ref{itlola-consistent}. However, the update for CGD is given by
\[ \begin{pmatrix}
f_1 \\ f_2
\end{pmatrix} = -\al(I+\al H_o)^{-1}\xi = -2\al(x+y-1)\begin{pmatrix}
1 & 2\al \\ 2\al & 1
\end{pmatrix}^{-1} \begin{pmatrix}
1 \\ 1
\end{pmatrix} = \frac{-2\al(x+y-1)}{1+2\al}\begin{pmatrix}
1 \\ 1
\end{pmatrix} \,. \]

For the exact case, the RHS of the first consistency equation is
\begin{align*}
-\al \nabla_x \left( (x+y+f_2)^2-2x \right) &= -2\al\left((1+\nabla_x f_2)(x+y+f_2)-1\right) \\
&= \frac{-2\al}{1+2\al}\left(x+y+\frac{-2\al(x+y-1)}{1+2\al}-1-2\al \right) \\
&= f_1 + \frac{4\al^2(x+y+2\al)}{(1+2\al)^2}
\end{align*}
which does not coincide with the LHS of the consistency equation ($=f_1$) whenever parameters lie outside the measure-zero set $\{x+y+2\al = 0\} \subset \R^2$. Similarly for Taylor iLOLA, the RHS of the first consistency equation is
\begin{align*}
-\al \nabla_x \left( (x+y)^2-2x + 2(x+y)f_2 \right) &= -2\al \left( x+y-1+\frac{-2\al(x+y-1)}{1+2\al}+\frac{-2\al(x+y)}{1+2\al} \right) \\
&= f_1 + \frac{4\al^2(x+y)}{(1+2\al)^2}
\end{align*}
which does not coincide with the LHS of the consistency equation ($=f_1$) whenever parameters lie outside the measure-zero set $\{x+y = 0\} \subset \mathbb{R}^2$. This is a contradiction to consistency; we are done.

\paragraph{LCGD = LookAhead.} We have already shown that LCGD is given by $-\al(I-\al H_o)\xi$ in the proof that LCGD $\neq$ LOLA. This coincides exactly with LookAhead following \citet{letcher_stable_2019}.

\paragraph{CGD recovers higher-order LookAhead.} The series expansion of CGD is given by
\[ \text{CGD}n = -\al \sum_{i=0}^{n} (-\al H_o)^i \xi \,. \]
Also, higher-order (Taylor) LookAhead is defined recursively by expanding
\begin{align*}
f_1^{n+1} &= -\al \nabla_1 \left( L^1(\T^1, \T^2 + \bot f_2^{n}) \right) \approx -\al \left( \nabla_1 L^1 + \nabla_{12} L^1 f_2^{n} \right) \\
f_2^{n+1} &= -\al \nabla_2 \left( L^2(\T^1+\bot f_1^{n}, \T^2) \right) \approx -\al \left( \nabla_2 L^2 + \nabla_{21} L^2 f_1^{n} \right) \,,
\end{align*}
where $\bot$ is the stop-gradient operator (see \citep{balduzzi_mechanics_2018} for details on this operator) and $f_1^{-1} = f_2^{-1} = 0$. This can be written more succinctly as
\[ \begin{pmatrix}
f_1^{n+1} \\
f_2^{n+1}
\end{pmatrix} = -\al \begin{pmatrix}
\nabla_1 L^1 + \nabla_{12} L^1 f_2^{n}  \\ \nabla_2 L^2 + \nabla_{21} L^2 f_1^{n}
\end{pmatrix} = -\al \xi - \al H_o \begin{pmatrix}
f_1^{n} \\
f_2^{n}
\end{pmatrix} \,. \]
We prove by induction that
\[ \begin{pmatrix}
f_1^n \\ f_2^n
\end{pmatrix} = -\al \sum_{i=0}^n (-\al H_o)^i \xi \]
for all $n \geq 0$. The base case is trivial; assume the statement holds for any fixed $n \geq 0$. Then
\begin{align*}
\begin{pmatrix}
f_1^{n+1} \\ f_2^{n+1}
\end{pmatrix} &= -\al \xi -\al H_o \left( -\al \sum_{i=0}^{n} (-\al H_o)^i \xi \right) = -\al \xi -\al \sum_{i=1}^{n+1} (-\al H_o)^i \xi = -\al \sum_{i=0}^{n+1} (-\al H_o)^i \xi
\end{align*}
as required. Finally we conclude
\[ \text{LookAhead}n = \begin{pmatrix}
f_1^n \\ f_2^n
\end{pmatrix} = -\al \sum_{i=0}^n (-\al H_o)^i \xi = \text{CGD}n \]
as required. \hfill \qedsymbol

\section{Proof of Proposition~\ref{consistency_not_unique}}
\label{proof:cons_not_unqiue}

We prove that the two pairs of linear functions
\[ f_1 = f_2 = -2(x+y+1) \]
and
\[ f_1 = f_2 = -\frac{1}{2}(x+y-2) \]
are solutions to the consistency equations in the Tandem game with $\al = 1$. (See below for a generalization to any $\al > 0$.) For the first pair of functions, we have
\[ -\nabla_x \left( L^1(x, y+f_2) \right) = -\nabla_x \left( (x+y+2)^2 - 2x \right) = -2(x+y+1) = f_1 \]
for the first consistency equation and similarly
\[ -\nabla_y \left( L^2(x+f_1, y) \right) = -\nabla_x \left( (x+y+2)^2 - 2y \right) = -2(x+y+1) = f_2 \]
for the second. For the second pair of functions we similarly obtain
\[ -\nabla_x \left( L^1(x, y+f_2) \right) = -\nabla_x \left( \frac{1}{4}(x+y+2)^2 - 2x \right) = -\frac{1}{2}(x+y-2) = f_1 \]
for the first consistency equation and
\[ -\nabla_y \left( L^2(x+f_1, y) \right) = -\nabla_x \left( \frac{1}{4}(x+y+2)^2 - 2y \right) = -\frac{1}{2}(x+y-2) = f_2 \]
for the second. This shows that both functions are solutions to the consistency equations. For general $\al > 0$, we can similarly show that $f_1 = f_2 = ax+by+c$ with
\begin{align*}
a = \frac{\pm\sqrt{1+8\al}-1-4\al}{4\al} \quad ; \quad b = \frac{-2\al(1+a)}{1+2\al(1+a)} \quad ; \quad c= \frac{2\al}{1+2\al(1+a)}
\end{align*}
are two distinct solutions (depending on $\pm$) to the consistency equations in the Tandem game, noting that the denominators cannot be $0$ for $\al > 0$ (otherwise leading to a contradiction in the expression for $a$). This is left to the reader, noting that the proof for $\al = 1$ is sufficient to establish that consistent solutions are not always unique. \hfill \qedsymbol

\section{Proof of Proposition~\ref{consistency_sfps}}
\label{proof:cons_sfps}

Recall from the proof of Proposition \ref{consistency_not_unique} that the linear functions
\[ f_1 = f_2 = -2(x+y+1) \]
are consistent solutions to the Tandem game with $\al = 1$. The SFPs of the Tandem game are $(x,1-x)$ for each $x\in \R$, but none of these are preserved by the consistent solutions above since
\[ f_1(x,1-x) = f_2(x, 1-x) = -4 \neq 0 \,. \]
We conclude that consistency does not imply preservation of SFPs.

Moreover, we prove that any (non-zero) linear solution to the consistency equations cannot preserve more than one SFP in the Tandem game, for any opponent shaping rate \(\alpha\).
Assuming it did, we must have linear functions
\begin{align*}
f_1 &= ax+by+c \\
f_2 &= a'x+b'y+c'
\end{align*}
satisfying
\[ f_1(x, 1-x) = 0 = f_1(x', 1-x') \]
for some $x \neq x' \in \R$. Subtracting RHS from LHS we obtain $(x-x')(a-b) = 0$ hence $a = b$, which substituted again into the LHS yields $b = -c$. Applying the same method for $f_2$ we obtain $a' = b' = -c'$ and so $f_1,f_2$ take the form
\begin{align*}
f_1 &= a(x+y-1) \\
f_2 &= a'(x+y-1).
\end{align*}
Note that since \(f_1,f_2\) were assumed to be nonzero, it follows that $a, a' \neq 0$. Plugging these into the first consistency equation, we obtain
\[ a(x+y-1) = -2\al \left[ (1+a')\left((x+y)(1+a')-a'\right)-1\right] \,. \]
Comparing $x$ terms and constant terms yields
\[ a = -2\al(1+a')^2 \qquad \text{and} \qquad a = -2\al\left(1+a'+{a'}^2\right) \]
which concludes the contradiction $a' = 0$.\hfill \qedsymbol

\section{Proof of Proposition \ref{prop:hamiltonian_theory}}\label{appendix:hamiltonian_theory}

\paragraph{LOLA and SOS diverge.}
Assume $(x_0, y_0) \neq 0$ and $\al > 1$. We prove the more general claim that $p$-LOLA diverges for any $0 \leq p \leq 1$ (where $p$ may take a different value at each learning step), recalling that LOLA and SOS are both special cases of $p$-LOLA \citep{letcher_stable_2019}. Indeed, the $p$-LOLA gradient update is given by
\[ \begin{pmatrix}
h_1 \\ h_2
\end{pmatrix} = -\al(I-\al H_o)\xi + p \al^2 \lchi = -\al \begin{pmatrix}
y+\al x(1+p) \\ -x+\al y(1+p)
\end{pmatrix} \]
and we show that each update leads to increasing distance from the origin as follows:
\begin{align*}
\norm{(x+h_1, y+h_2)}^2 &= x^2-2x\al(y+\al x(1+p))+\al^2\left(y^2+\al^2 x^2(1+p)^2+2\al xy(1+p)\right) + \\
&\phantom{{}={}} y^2-2y\al(-x+\al y(1+p))+\al^2\left(x^2+\al^2 y^2(1+p)^2-2\al xy(1+p) \right) \\
&= \left(x^2+y^2\right)\left(1-\al^2(2p+1)+\al^4(1+p)^2\right) \\
&\geq \left(x^2+y^2\right) \left(1-\al^2+\al^4\right) \coloneqq \norm{(x,y)}^2\la
\end{align*}
where the inequality follows because the final expression in $p$ has positive derivative for $\al > 1$, hence minimized at $p=0$. Now $\la > 1$ for any $\al > 1$, so we conclude by induction that
\[ \norm{(x_n, y_n)}^2 \geq \la^n \norm{(x_0, y_0)}^2 \to \infty \]
as $n \to \infty$, provided $(x_0, y_0) \neq 0$, as required.

\paragraph{Consistent solution converges.} We begin by showing that the following linear functions satisfy the consistency equations for the Hamiltonian game:
\[ \begin{pmatrix}
f_1 \\ f_2 
\end{pmatrix} = \frac{-\al}{1+2\al^2} \begin{pmatrix} y+2\al x \\ -x+2\al y
\end{pmatrix} \,. \]
Indeed, the RHS of the first consistency equation is
\begin{align*}
-\al \nabla_{x} \left( x\left(y-\al\frac{-x+2\al y}{1+2\al^2} \right) \right) &= \frac{-\al}{1+2\al^2} \Big( y(1+2\al^2)-\al(-x+2\al y)+\al x \Big) \\
&= \frac{-\al}{1+2\al^2} \Big( y+2\al x \Big) = f_1
\end{align*}
and similarly for the second equation.

To prove uniqueness, assume there is a second pair of linear update functions \(\hat{f}_1,\hat{f}_2\) also satisfying consistency. Let  \(a,b,c\in\mathbb{R}\) such that \(\hat{f}_1(x,y)=ax+by+c\). Note that substituting the second equation into the first yields
\begin{align*} \hat{f}_1(x,y) &= -\al \nabla_x \left(L^1(x, y-\al \nabla_y \left(L^2(x+\hat{f}_1(x,y), y)\right))\right)\\
&= -\al \left(y+\al\left(2x+\hat{f}_1(x,y)+x\nabla_x\hat{f}_1(x,y)+y\nabla_y\hat{f}_1(x,y)+xy\nabla_{xy}\hat{f}_1(x,y)\right)\right) \end{align*}
Expanding the above and substituting the equation for \(\hat{f}_1\), we obtain
\[ ax+by+c = - 2\al^2 x(1 + a) -\al y(1+2\al b)  -\al^2 c \]
for all $x,y \in \R$, which yields (by comparing coefficients)
\[ a(1+2\al^2) = -2\al^2 \qquad \text{;} \qquad b(1+2\al^2) = -\al \qquad \text{;} \qquad c(1+\al^2) = 0.\]
It follows that
\[ \hat{f}_1(x,y) = \frac{-\al}{1+2\al^2} \Big( y+2\al x \Big) = f_1(x,y),\]
proving the uniqueness of \(f_1\). Since \(f_2\) is directly determined by \(f_1\) via the second consistency equation, this concludes the proof.

Finally we prove that this linear update leads to decreasing distance from the origin as follows:
\begin{align*}
\norm{(x+f_1, y+f_2)}^2 &= x^2-\frac{2x\al}{1+2\al^2}(y+2\al x)+\frac{\al^2}{(1+2\al^2)^2}\left(y^2+4 \al^2 x^2+4\al xy \right) + \\
&\phantom{{}={}} y^2-\frac{2y\al}{1+2\al^2}\al(-x+2\al y)+\frac{\al^2}{(1+2\al^2)^2}\left(x^2+4\al^2 y^2-4\al xy \right) \\
&= \left(x^2+y^2\right)\left(1-\frac{\al^2(3+4\al^2)}{(1+2\al^2)^2} \right) \coloneqq \norm{(x,y)}^2\la \,.
\end{align*}
Notice that the derivative of $\la$ is strictly negative in $\al$ while its limit as $\al \to \infty$ is $0$, with value $1$ at $\al = 0$, hence $\abs{\la} = \la < 1$ for any $\al > 0$. We conclude by induction that
\[ \norm{(x_n, y_n)}^2 = \la^n \norm{(x_0, y_0)}^2 \to 0 \]
as $n \to \infty$, with $\la$ decreasing (hence the speed of convergence increasing) as $\al$ increases. \hfill \qedsymbol

\section{Training Details COLA}
\label{appendix:cola_training}
All code was implemented using Python. The code relies on the PyTorch library for autodifferentiability \citep{pytorch2019}.
\subsection{Polynomial games}
For the polynomial games, COLA uses a neural network with 1 non-linear layer for both $h_{1}(\theta^{1}, \theta^{2})$ and $h_{2}(\theta^{1}, \theta^{2})$. The non-linearity is a ReLU function. The layer has 8 nodes. For training, we randomly sample pairs of parameters on a [-1, 1] parameter region. In general, the size of the region is a hyperparameter. We use a batch size of 8. We found that training is improved with a learning rate scheduler. For the learning rate scheduling we use a $\gamma$ of 0.9. We train the neural network for 120,000 steps. To compute the consistency loss we use the squared distance measure. The optimizer used is Adam \citep{kingma2017adam}.
\subsection{Non-polynomial games}
For the non-polynomial games, we deploy a neural network with 3 non-linear layers using Tanh activation functions. Each layer has 16 nodes. For this type of game, the parameter region is set to [-7, 7], because the parameters will be squished into probability space, allowing us to explore the full probability space. During training, we used a batch size of 64. The optimizer used is Adam \citep{kingma2017adam}.

\newpage

\section{Further experimental results}
\label{appendix-further-results}
\subsection{Tandem game}
\label{appendix:tandem}
In this section, we will provide more empirical results on the Tandem game. First, in Figure \ref{fig:tandem_full}, we display a look-ahead regime where SOS, LOLA and HOLA8 diverge in training, whereas COLA and CGD do not. Second, in Figure \ref{fig:tandem_grad_fields} we compare the gradient fields of HOLA4 and COLA at a low and high look-ahead rate of 0.1 and 1.0 respectively. The updates are shown on the parameter region of interesting $\Theta$ for the Tandem game, which is [-1,1]. Figure \ref{fig:tandem_grad_field1} and \ref{fig:tandem_grad_field3} show that the solutions are very similar at low look-ahead rates but become dissimilar at high look-ahead rates, shown in Figure \ref{fig:tandem_grad_field2} and \ref{fig:tandem_grad_field4}. This also confirms our quantitative observation in Table \ref{tab:tandem_cos}.

\begin{figure}[hbt!]
 \begin{subfigure}[]{0.49\linewidth}
  \centering
	\includegraphics[width=\linewidth]{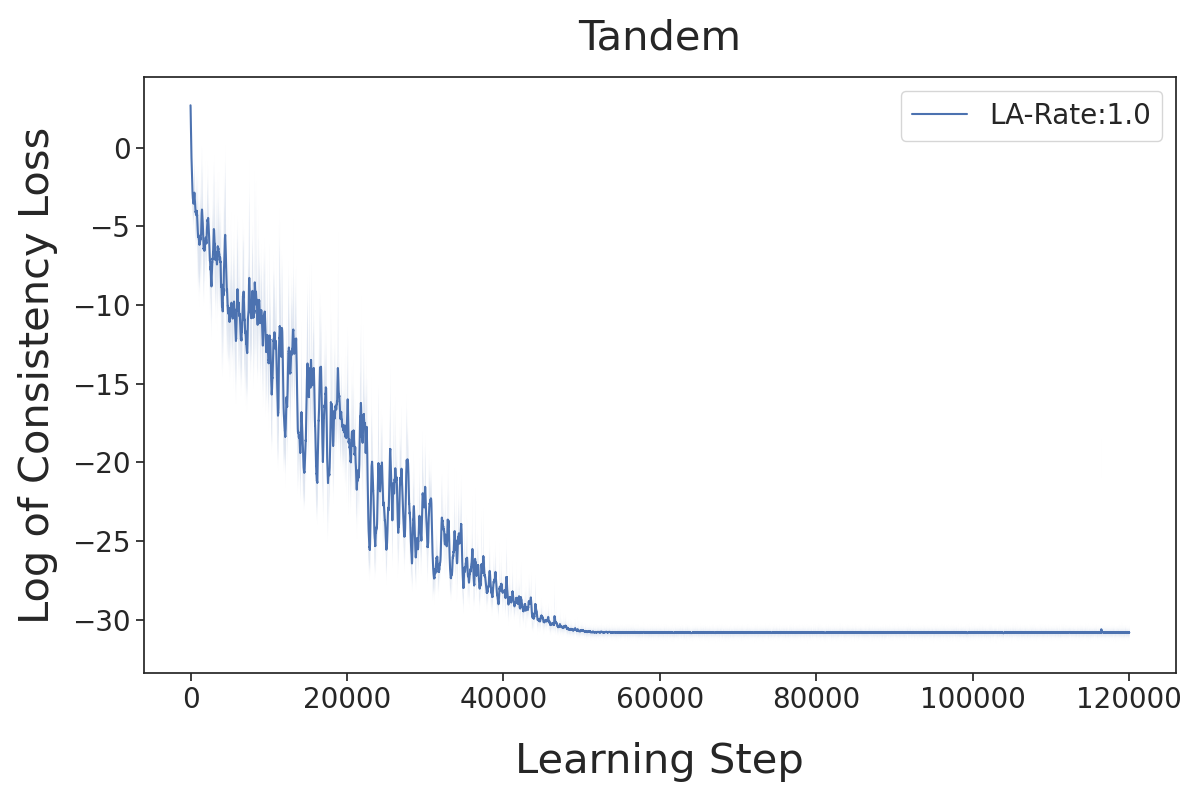}
	\caption{}
	\label{fig:tandem_cons_div}
 \end{subfigure}
\begin{subfigure}[]{0.49\linewidth}
 \centering
  	\includegraphics[width=\linewidth]{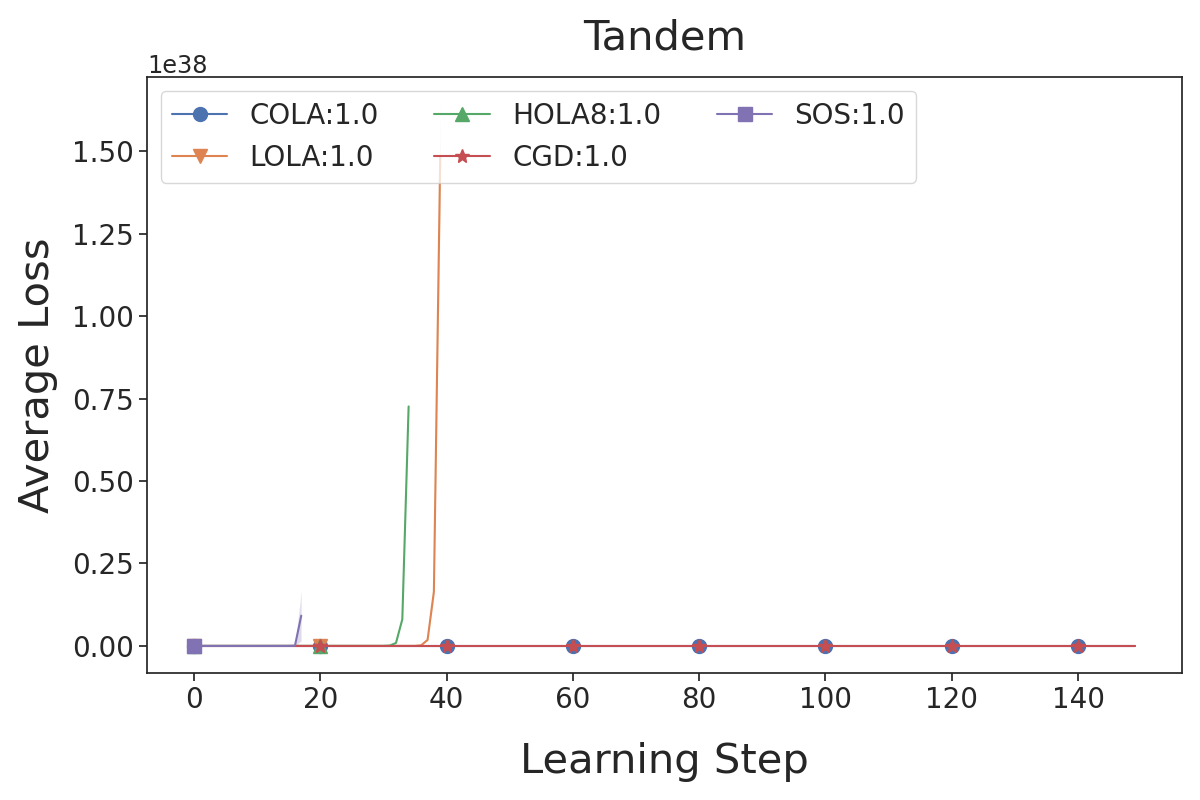}
  	  	\caption{}
  	\label{fig:tandem_play_div}
 \end{subfigure}
 \caption{(a): Consistency loss of COLA at a look-ahead rate of 1.0. (b): Solutions on the Tandem game by COLA, LOLA, HOLA8, CGD, and SOS with a look-ahead rate of 1.0. The standard deviation for the initialization of parameters used here is 0.1, which is standard in the literature \citep{letcher_stable_2019}.}
 \label{fig:tandem_full}
\end{figure}

\begin{figure}[hbt!]
\begin{subfigure}{.49\textwidth}
  \centering
  \includegraphics[width=\linewidth]{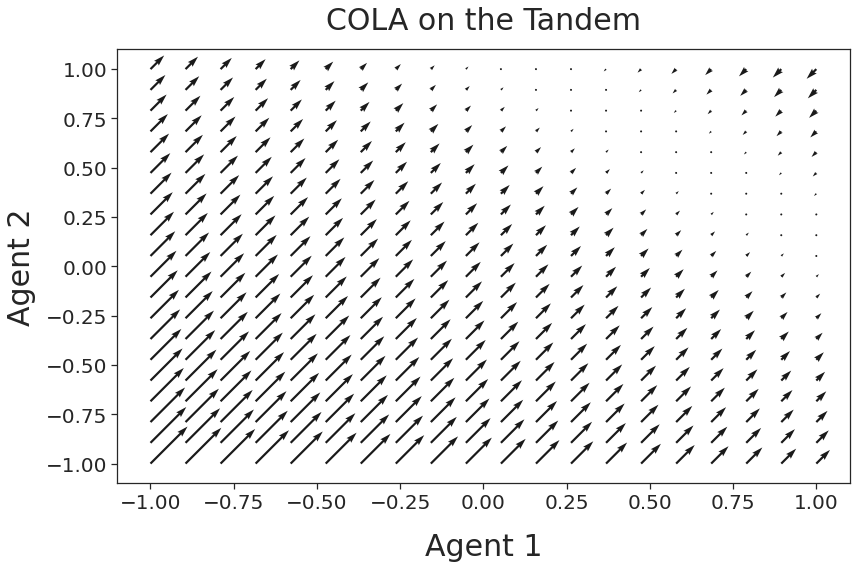}
  \caption{}
  \label{fig:tandem_grad_field1}
\end{subfigure}
\begin{subfigure}{.49\textwidth}
  \centering
  \includegraphics[width=\linewidth]{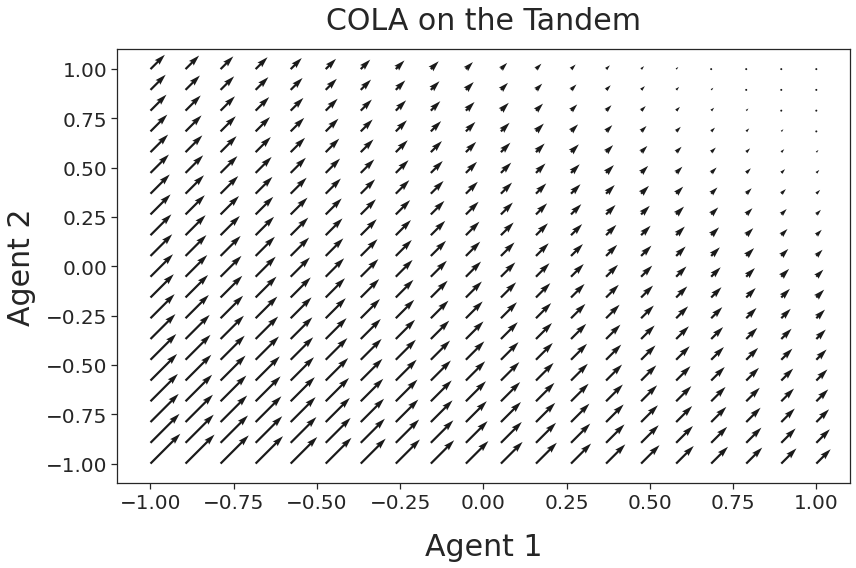}
  \caption{}
  \label{fig:tandem_grad_field2}
\end{subfigure}
\begin{subfigure}{.49\textwidth}
  \centering
  \includegraphics[width=\linewidth]{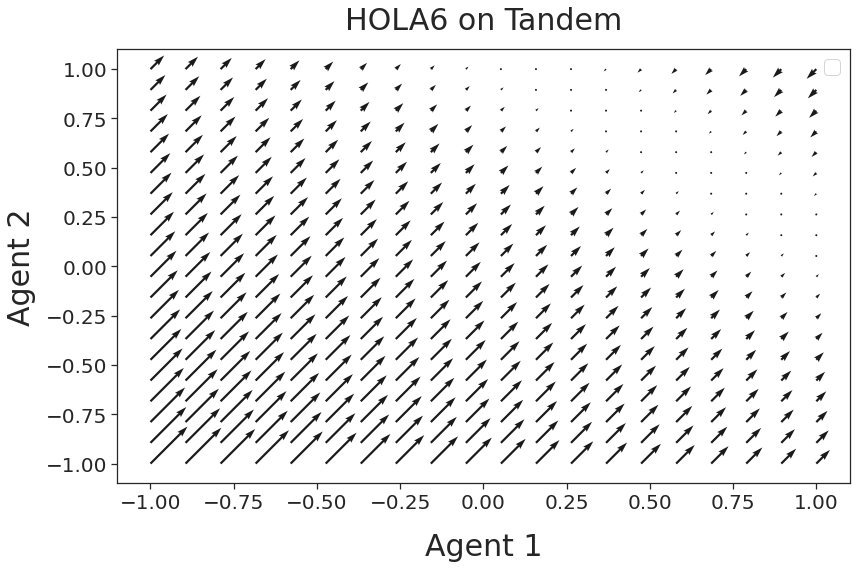}
  \caption{}
  \label{fig:tandem_grad_field3}
\end{subfigure}
\begin{subfigure}{.49\textwidth}
  \centering
  \includegraphics[width=\linewidth]{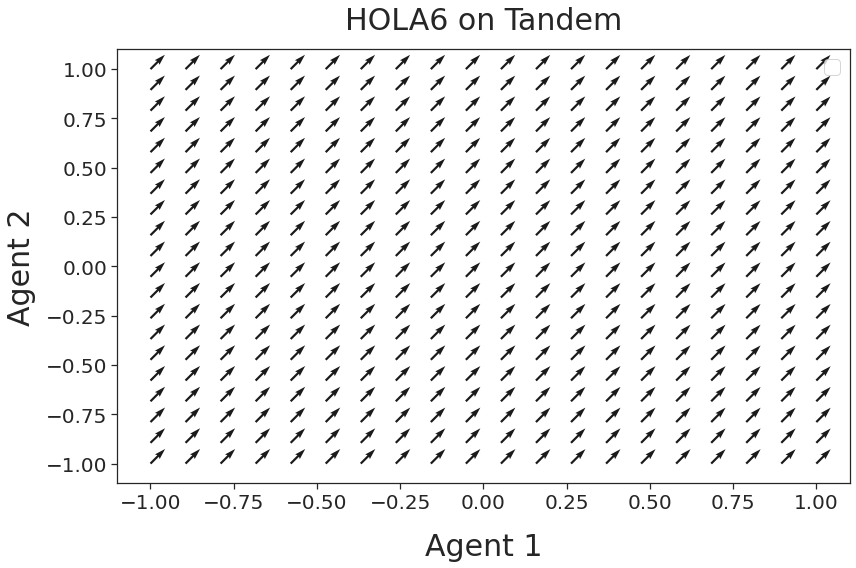}
  \caption{}
  \label{fig:tandem_grad_field4}
\end{subfigure}
\caption{Gradients field of the Tandem game at two different look-ahead rates, 0.1 and 1.0.}
\label{fig:tandem_grad_fields}
\end{figure}

\newpage
\subsection{Balduzzi and Hamiltonian game}
\label{appendix:hamiltonian}
The Hamiltonian game was originally introduced in \citep{balduzzi_mechanics_2018} as a minimal example of Hamiltonian dynamics. Recall that its loss function is
\begin{equation}
    L^{1}(x, y)=xy \quad \text{and} \quad L^{2}(x, y)=-xy
\end{equation}

The Balduzzi game was introduced to investigate the behaviour of differentiable game algorithms when a weak attractor is coupled with strong rotational forces in the Hamiltonian dynamics \citep{balduzzi_mechanics_2018}, captured by the losses
\begin{equation}
    L^{1}(x,y)=\frac{1}{2}x^2+10xy \quad \text{and} \quad L^{2}(x,y)=\frac{1}{2}y^2-10xy
\end{equation}

Results for both games are displayed in Figures \ref{fig:full_bal} and \ref{fig:full_ham}. COLA at high look-ahead rates converges considerably faster than the other methods on both games. This supports Proposition \ref{prop:hamiltonian_theory} empirically. In Table \ref{tab:ham_cons}, \ref{tab:ham_sim} and \ref{tab:bal_cons} we find similar consistency loss behaviour as we do for Tandem. Note however, that the look-ahead thresholds are at significantly different levels. On the Balduzzi game, the threshold is much lower around 0.02, whereas for the Hamiltonian game it is between 0.1 and 0.4.

\begin{figure*}[hbt!]
 \centering
 \begin{subfigure}[]{0.48\linewidth}
	\includegraphics[width=\linewidth]{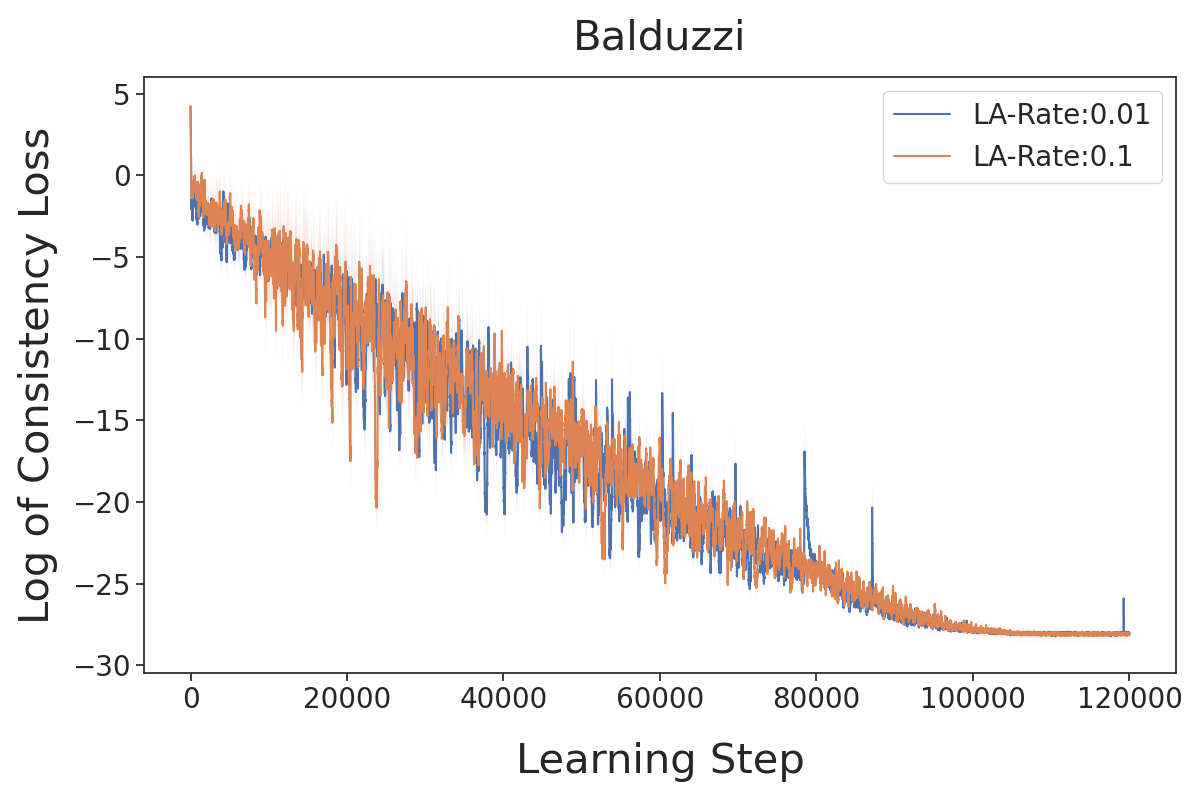}
	\caption{}
	\label{fig:bal_cons}
 \end{subfigure}
\begin{subfigure}[]{0.48\linewidth}
  	\includegraphics[width=\linewidth]{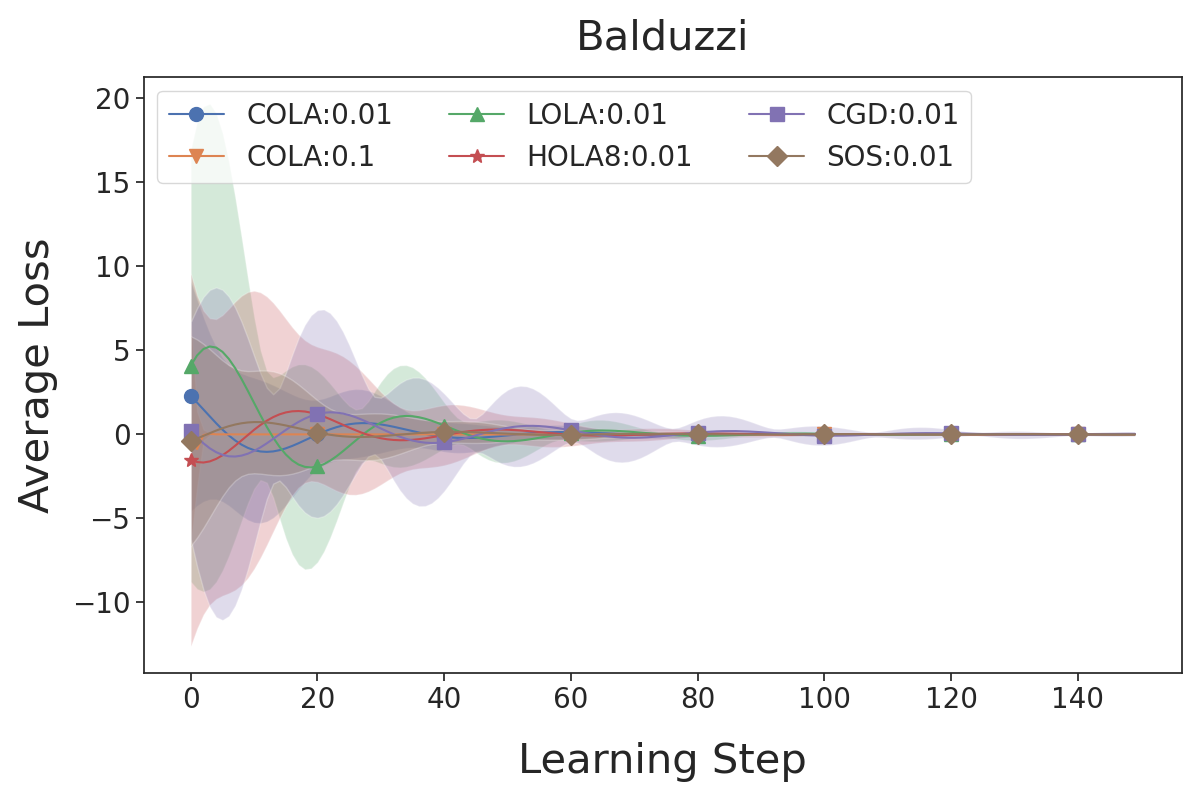}
  	  	\caption{}
  	\label{fig:bal_play}
 \end{subfigure}
 \caption{(a): Consistency losses of COLA at different look-ahead rates. (b): Solutions on the Balduzzi game by COLA, LOLA, HOLA8, CGD, and SOS with a look-ahead rate of 0.01 (and 0.1 additionally for COLA). The standard deviation for the initialization of parameters used here is 1.0}
 \label{fig:full_bal}
\end{figure*}

\begin{figure*}[hbt!]
 \centering
 \begin{subfigure}[]{0.48\linewidth}
	\includegraphics[width=\linewidth]{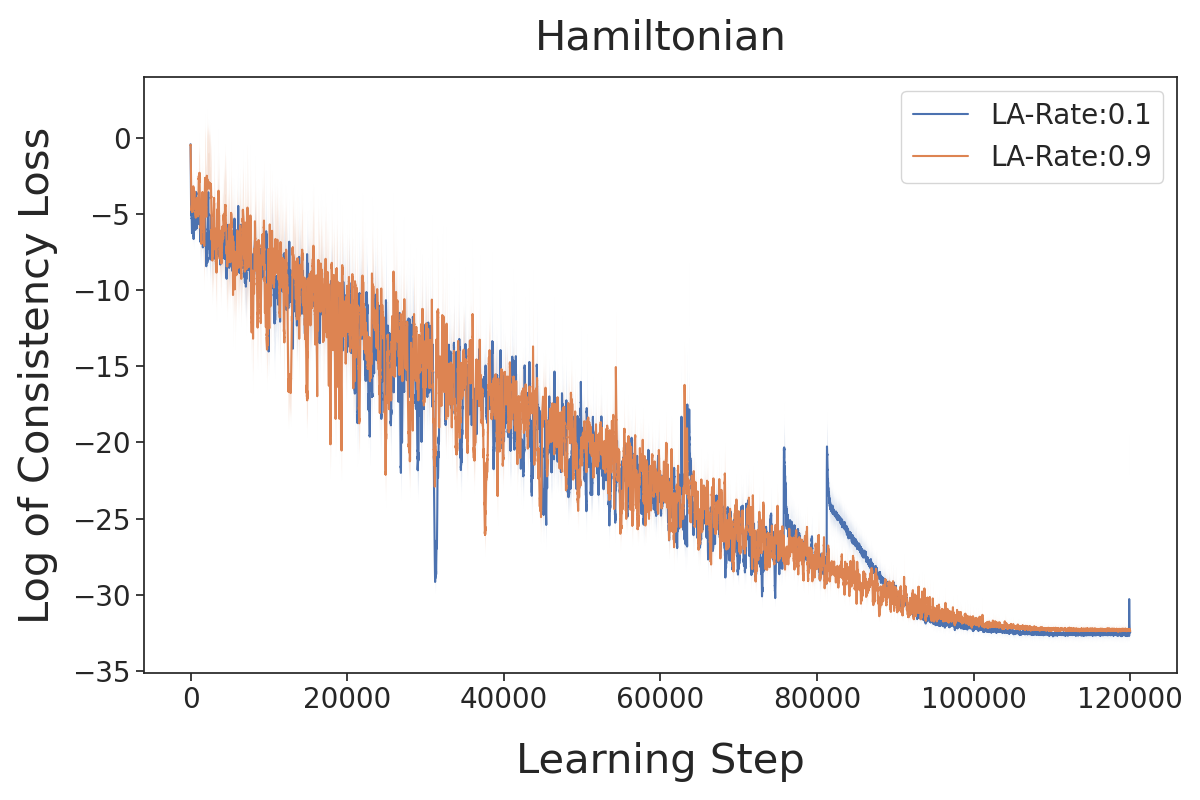}
	\caption{}
	\label{fig:ham_cons}
 \end{subfigure}
\begin{subfigure}[]{0.48\linewidth}
  	\includegraphics[width=\linewidth]{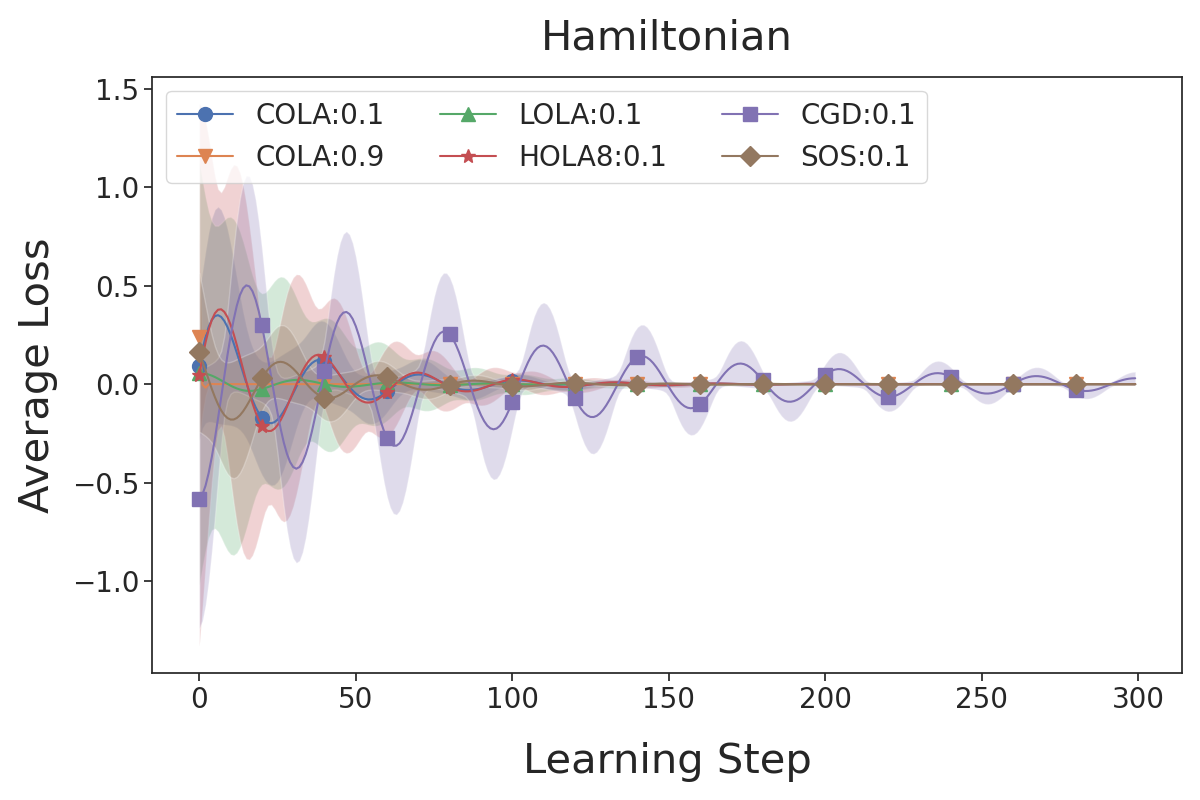}
  	  	\caption{}
  	\label{fig:ham_play}
 \end{subfigure}
 \caption{(a): Consistency losses of COLA at different look-ahead rates. (b): Solutions on the Hamiltonian game by COLA, LOLA, HOLA8, CGD, and SOS with a look-ahead rate of 0.1 (and 0.9 additionally for COLA). The standard deviation for the initialization of parameters used here is 1.0}
 \label{fig:full_ham}
\end{figure*}
\newpage

\begin{table}[hbt!]
\caption{On the Hamiltonian game: (a) Log of the squared consistency loss. (b) Cosine similarity between COLA and LOLA, HOLA3, and HOLA6 over different look-ahead rates. The values represent the mean of a 1,000 samples, uniformly sampled from the parameter space $\Theta$. The error bars represent one standard deviation and capture the variance over 10 different COLA training runs.}
\centering
\subfloat[]{
\begin{tabular}{l|l l l l}
\label{tab:ham_cons}$\alpha$ & LOLA    & HOLA3    & HOLA6    & \multicolumn{1}{c}{COLA} \\ \hline
0.9      & 6.99    & 4.63     & 13.68    & 1e-14$\pm$2e-15              \\ \hline
0.5      & 12.67   & 13.77    & 13.00    & 1e-14$\pm$2e-15              \\ \hline
0.4      & 5.78    & 7.14     & 6.38     & 1e-14$\pm$2e-15              \\ \hline
0.1      & 0.08    & 0.01     & 2e-6  & 7e-15$\pm$1e-15              \\ \hline
0.05     & 2e-5 & 4e-10 & 3e-15 & 1e-14$\pm$2e-14              \\ \hline
\end{tabular}}
\quad

\subfloat[]{\begin{tabular}{l|l l l}
\label{tab:ham_sim}$\alpha$ & LOLA & HOLA3 & HOLA6  \\ \hline
0.9   & 1.00$\pm$0.00  & -1.00$\pm$0.00  & 0.50$\pm$0.00 \\ \hline
0.5   & 1.00$\pm$0.00  & 1.00$\pm$0.00   & 1.00$\pm$0.00  \\ \hline
0.4   & 1.00$\pm$0.00  & 1.00$\pm$0.00   & 1.00$\pm$0.00  \\ \hline
0.1   & 1.00$\pm$0.00  & 1.00$\pm$0.00   & 1.00$\pm$0.00    \\ \hline
0.05  & 1.00$\pm$0.00  & 1.00$\pm$0.00   & 1.00$\pm$0.00    \\ \hline
\end{tabular}}
\end{table}

\begin{table}[hbt!]
\centering
\caption{On the Balduzzi game: (a) Log of the squared consistency loss. (b) Cosine similarity between COLA and LOLA, HOLA3, and HOLA6 over different look-ahead rates. The values represent the mean of a 1,000 samples, uniformly sampled from the parameter space $\Theta$. The error bars represent one standard deviation and capture the variance over 10 different COLA training runs.}

\subfloat[]{
\begin{tabular}{l|l l l l}
\label{tab:bal_cons}$\alpha$ & LOLA    & HOLA3    & HOLA6    & \multicolumn{1}{c}{COLA} \\ \hline
0.9      & 2e+6 & 5e+10 & 4e+17 & 5e-13$\pm$1e-13              \\ \hline
0.1      & 3e+2 & 1.e+3  & 2e+4  & 7e-13$\pm$1e-13              \\ \hline
0.05     & 2e+1 & 4.01     & 1.03     & 7e-13$\pm$9e-14              \\ \hline
0.03     & 2.16    & 0.07     & 0.01     & 8e-13$\pm$1e-13              \\ \hline
0.01     & 0.03    & 1e-5  & 2e-10 & 7e-13$\pm$1e-13              \\ \hline
\end{tabular}}
\quad
\subfloat[]{
\begin{tabular}{l|l l l}
\label{tab:bal_sim}$\alpha$ & LOLA          & HOLA3          & \multicolumn{1}{c}{HOLA6}         \\ \hline
0.9      & 1.00$\pm$0.00 & -1.00$\pm$0.00 & 0.01$\pm$4e-9 \\ \hline
0.1      & 1.00$\pm$0.00 & 1.00$\pm$0.00  & 0.40$\pm$1e-8 \\ \hline
0.05     & 1.00$\pm$0.00 & 1.00$\pm$0.00  & 1.00$\pm$0.00 \\ \hline
0.03     & 1.00$\pm$0.00 & 1.00$\pm$0.00  & 1.00$\pm$0.00 \\ \hline
0.01     & 1.00$\pm$0.00 & 1.00$\pm$0.00  & 1.00$\pm$0.00 \\ \hline
\end{tabular}}
\end{table}

\newpage

\subsection{Matching Pennies}
\label{appendix:mp}
Matching Pennies (MP) is a single-shot, zero-sum game, where two players, A and B, each flip a biased coin \citep{lee_application_1967}. Player A wins if the outcomes of both flips are the same and player B wins if they are different. 

\begin{table}[hbt!]
\centering
\caption{Payoff Matrix for the Matching Pennies game.}
\begin{tabular}{l|l l}\label{tab:imp}
     & Head     & Tail     \\ \hline
Head & (+1, -1) & (-1, +1) \\ \hline
Tail & (-1, +1) & (+1, -1) \\ \hline
\end{tabular}
\end{table}

\begin{table}[hbt!]
\centering
\caption{Cosine similarities over multiple COLA training runs on the MP and Tandem game for different look-ahead rates.}
\label{tab:self_sim}
\begin{tabular}{l|l}
Game@LR    & Cosine Sim      \\ \hline
MP@10     & 0.97 $\pm$ 0.01 \\ \hline
MP@0.5    & 0.99 $\pm$ 0.01   \\ \hline
Tandem@0.1 & 1.00 $\pm$ 0.00           \\ \hline
Tandem@1.0 & 0.98 $\pm$ 0.01    \\ \hline
\end{tabular}
\end{table}

The visual difference between the updates found by COLA and HOLA4 is shown in Figure \ref{fig:imp_grad_fields}. Whereas they are very similar at a relatively low look-ahead rate, at a high look-ahead rate HOLA4's gradient field shows signs of high variance, especially around the origin, whereas COLA's gradient field appears to learn a robust update function. Note that HOLA4's update function results in the learning behaviour shown in Figure \ref{fig:mp_div}.

\begin{figure}[hbt!]
\begin{subfigure}{.49\textwidth}
  \centering
  \includegraphics[width=.99\linewidth]{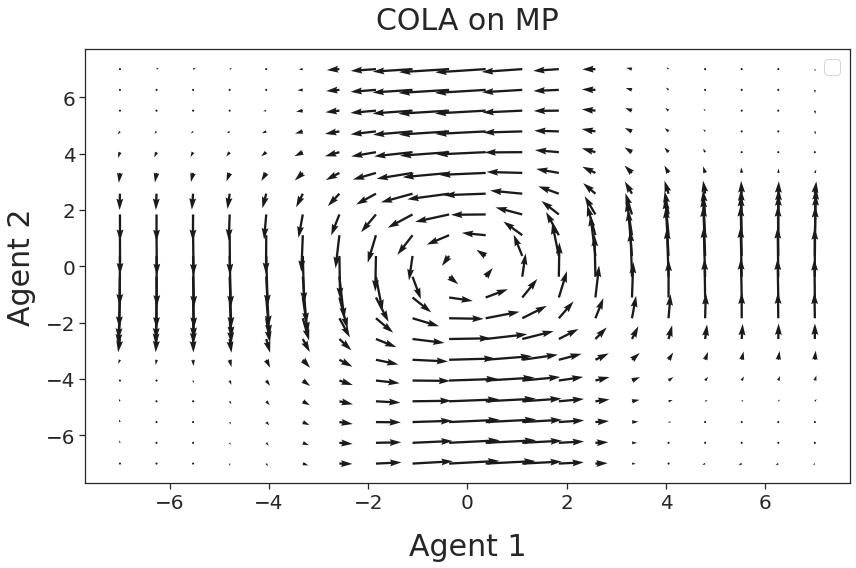}
  \caption{}
  \label{fig:imp_grad_field1}
\end{subfigure}
\begin{subfigure}{.49\textwidth}
  \centering
  \includegraphics[width=.99\linewidth]{imp/grad_field_cola_10.png}
  \caption{}
  \label{fig:imp_grad_field5}
\end{subfigure}
\begin{subfigure}{.49\textwidth}
  \centering
  \includegraphics[width=.99\linewidth]{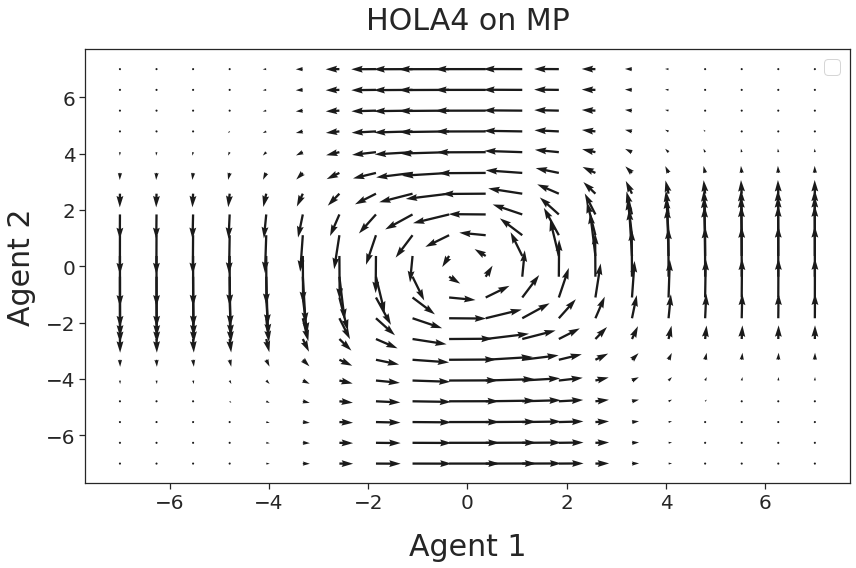}
  \caption{}
  \label{fig:imp_grad_field3}
\end{subfigure}
\begin{subfigure}{.49\textwidth}
  \centering
  \includegraphics[width=.99\linewidth]{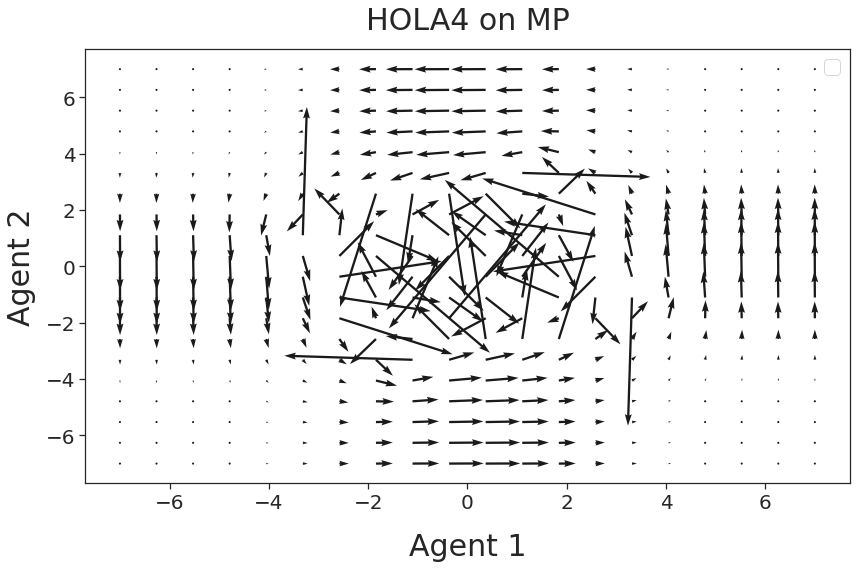}
  \caption{}
  \label{fig:imp_grad_field4}
\end{subfigure}
\caption{Gradients field of the MP game at two different look-ahead rates: 0.5 (RHS) and 10 (LHS). COLA is on the upper row, HOLA4 is on the lower row.}
\label{fig:imp_grad_fields}
\end{figure}
\newpage

\subsection{Ultimatum game}
\label{appendix:ult}
Here we provide more empirical results on the Ultimatum game. First, in Table \ref{tab:ult_cons} and \ref{tab:ult_sim} we display the consistency losses of COLA and HOLA at different look-ahead rates. In comparison to the polynomial games, here we observe that COLA's consistency losses are not as low as HOLA\textit{n}'s losses at low look-ahead rates. Nonetheless, they are low enought to constitute a consistent solution. Moreover, the COLA and HOLA\textit{n} solutions are very similar according to the cosine similarity score. Qualitatively, we compare the updates in Figure \ref{fig:ultimatum_grad_fields}. Similarly to the MP game, we observe that HOLA4's update shows higher variance around the origin that COLA's update. This variance is reflected in HOLA4's learning behaviour shown in Figure \ref{fig:ult_play_high}, where HOLA4 do not converge to the fair solution consistently.

\begin{table}[hbt!]
\caption{On the Ultimatum game: Over multiple look-ahead rates we compare (a) the consistency losses and (b) the cosine similarity between COLA and LOLA, HOLA2, and HOLA4. The values represent the mean of a 1,000 samples, uniformly sampled from the parameter space $\Theta$. The error bars represent one standard deviation and capture the variance over 10 different COLA training runs.}
\centering
\subfloat[]{
\begin{tabular}{l|l l l l}
\label{tab:ult_cons}$\alpha$ & LOLA     & HOLA2    & HOLA4    & \multicolumn{1}{c}{COLA} \\ \hline
1.1      & 2e-3  & 5e-3  & 0.01     & 4e-4$\pm$5e-5                \\ \hline
0.7      & 4e-4  & 2e-4  & 2e-4  & 7e-5$\pm$1e-5                \\ \hline
0.3      & 3e-5  & 2e-7  & 5e-8  & 3e-6$\pm$2e-6                \\ \hline
0.1      & 2e-7  & 1e-11 & 6e-13 & 4e-6$\pm$4e-6                \\ \hline
0.001    & 3e-15 & 4e-17 & 9e-17 & 4e-6$\pm$3e-6                \\ \hline
\end{tabular}}
\quad%
\subfloat[]{\begin{tabular}{l|l l l}
\label{tab:ult_sim}
$\alpha$ & \multicolumn{1}{c}{LOLA} & \multicolumn{1}{c}{HOLA2} & \multicolumn{1}{c}{HOLA4} \\ \hline
1.1      & 0.96$\pm$0.01                 & 0.96$\pm$0.01                  & 0.95$\pm$0.01                 \\ \hline
0.7      & 0.98$\pm$0.01                 & 0.98$\pm$0.01                  & 0.98$\pm$0.01                 \\ \hline
0.3      & 0.99$\pm$0.01                 & 0.99$\pm$0.01                  & 0.99$\pm$0.01                 \\ \hline
0.1      & 0.99$\pm$0.01                 & 0.99$\pm$0.01                  & 0.99$\pm$0.01                 \\ \hline
0.001    & 0.99$\pm$0.01                 & 0.99$\pm$0.01                  & 0.99$\pm$0.01                 \\ \hline
\end{tabular}}
\end{table}

\begin{figure}[hbt!]
\begin{subfigure}{.49\textwidth}
  \centering
  \includegraphics[width=.99\linewidth]{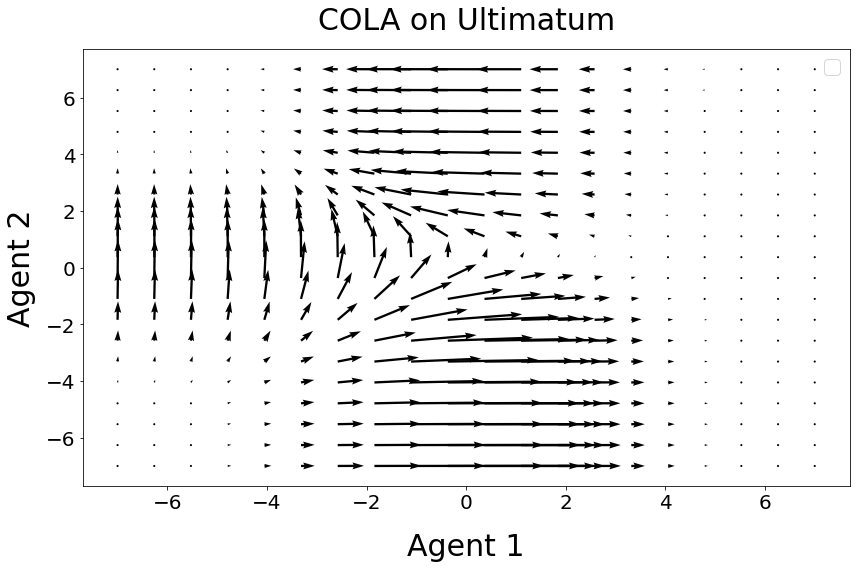}
  \label{fig:ultimatum_grad_field1}
\end{subfigure}
\begin{subfigure}{.49\textwidth}
  \centering
  \includegraphics[width=.99\linewidth]{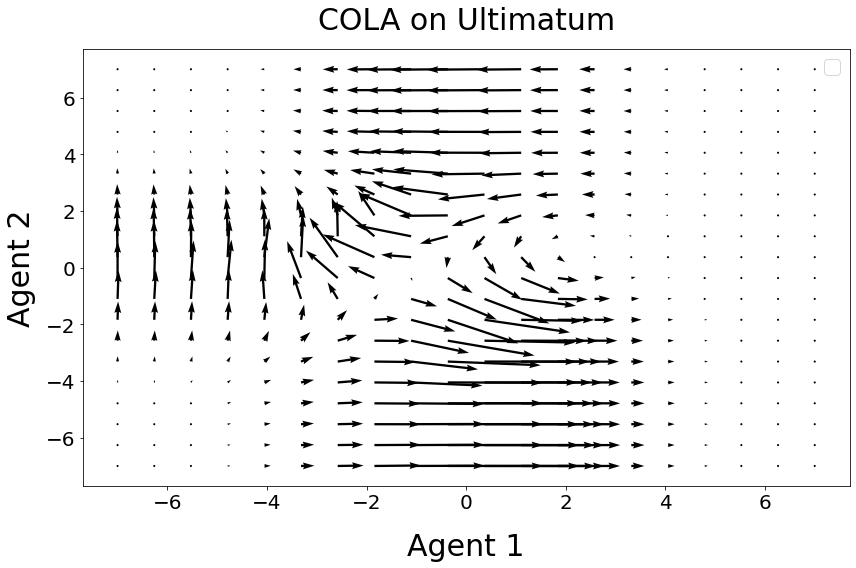}
  \label{fig:ultimatum_grad_field2}
\end{subfigure}
\begin{subfigure}{.49\textwidth}
  \centering
  \includegraphics[width=.99\linewidth]{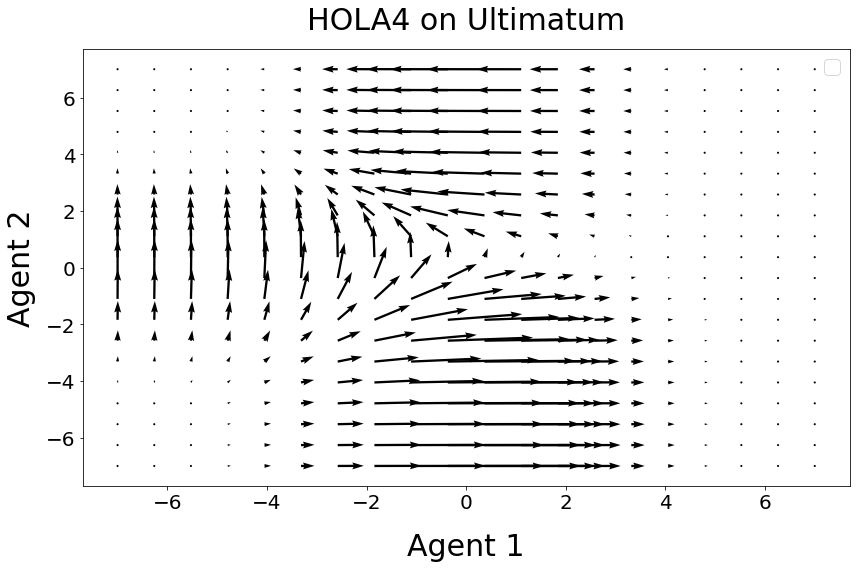}
  \label{fig:ultimatum_grad_field3}
\end{subfigure}
\begin{subfigure}{.49\textwidth}
  \centering
  \includegraphics[width=.99\linewidth]{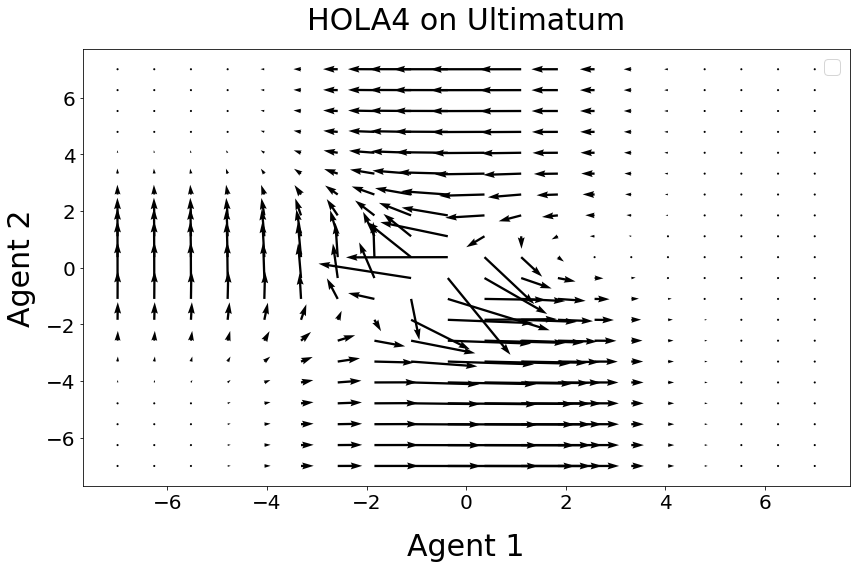}
  \label{fig:ultimatum_grad_field4}
\end{subfigure}
\caption{Gradients field of the ultimatum game at two different look-ahead rates: 0.2 (RHS) and 1.1 (LHS). COLA is on the upper row, HOLA4 is on the lower row.}
\label{fig:ultimatum_grad_fields}
\end{figure}

\begin{figure}[hbt!]
 \centering
\begin{subfigure}[]{0.49\linewidth}
  	\includegraphics[width=\linewidth]{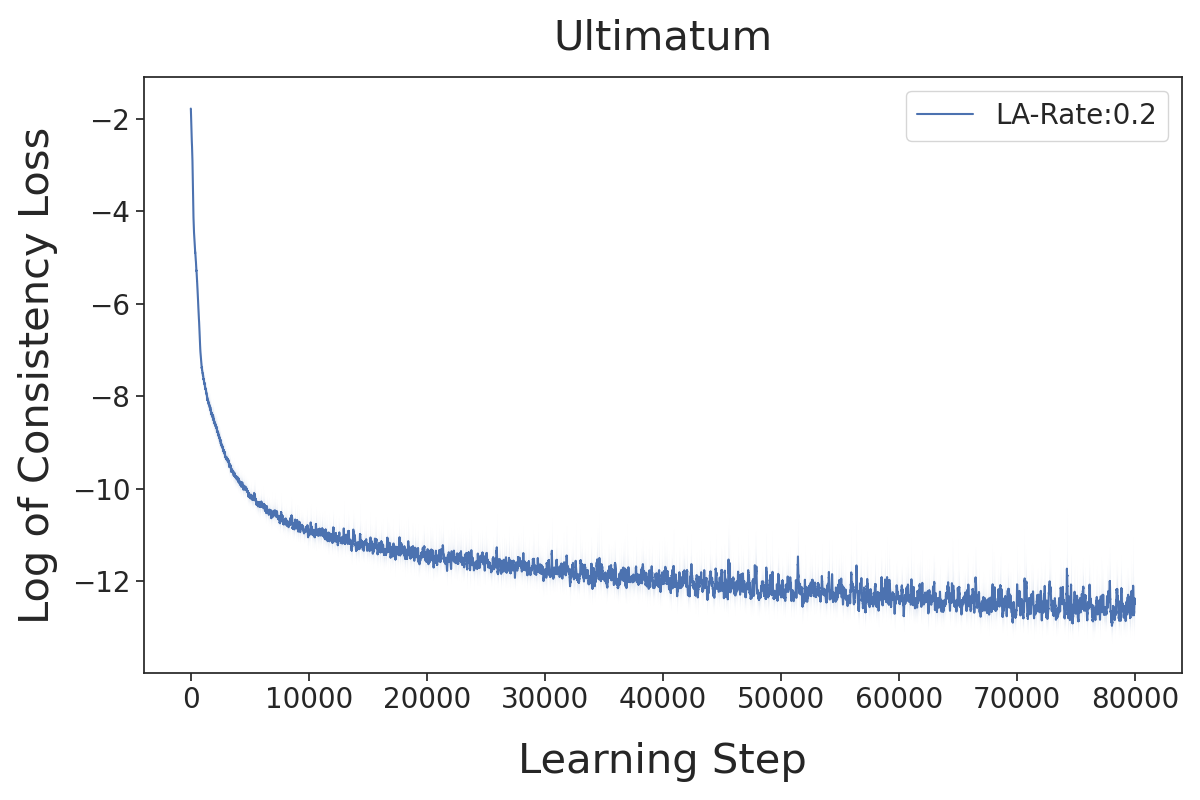}
  	  	\caption{}
  	\label{fig:ult_cons_low}
 \end{subfigure}
 \begin{subfigure}[]{0.49\linewidth}
	\includegraphics[width=\linewidth]{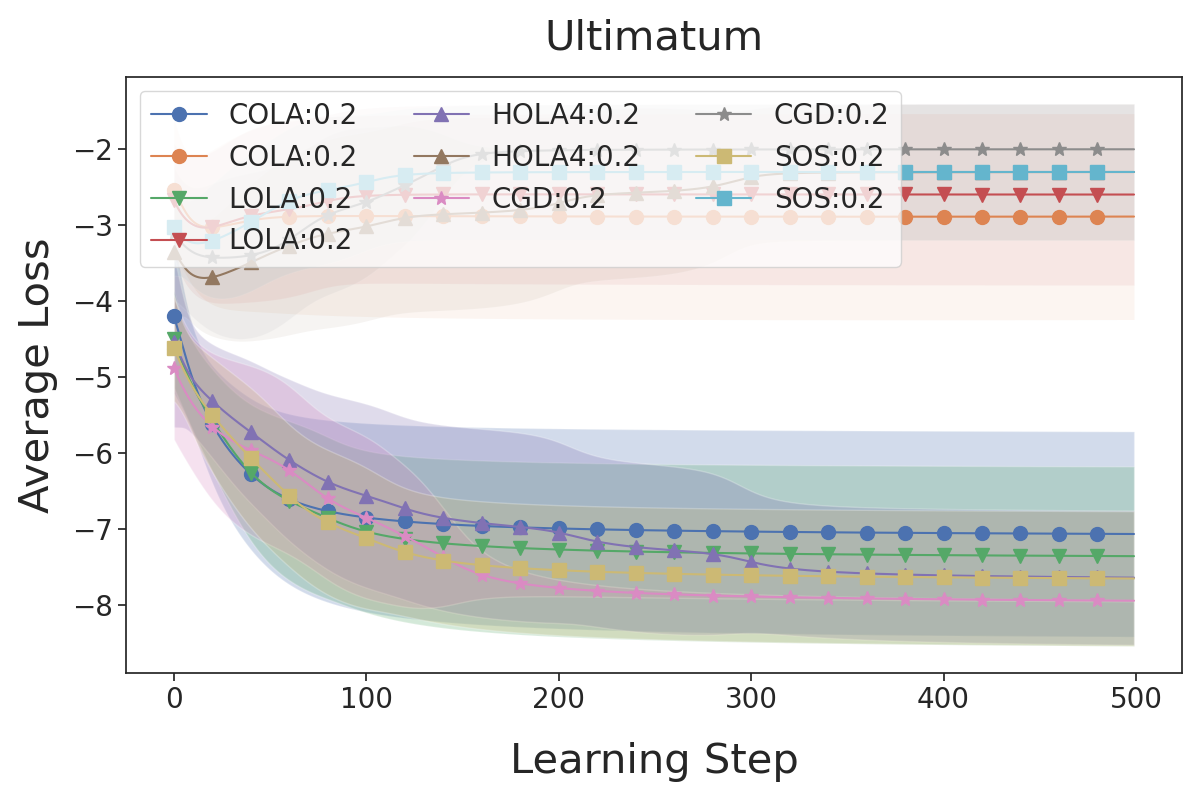}
	\caption{}
	\label{fig:ult_play_low}
 \end{subfigure}
 \begin{subfigure}[]{0.49\linewidth}
  	\includegraphics[width=\linewidth]{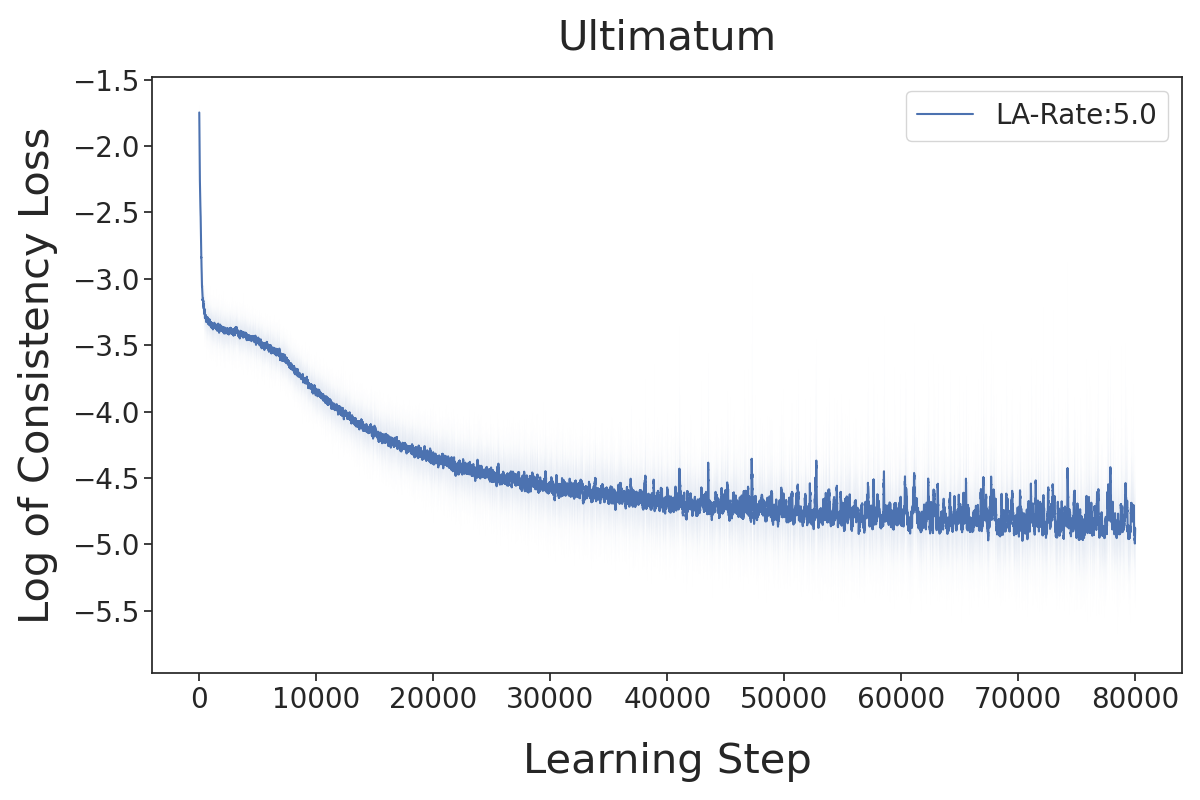}
  	  	\caption{}
  	\label{fig:ult_cons_high}
 \end{subfigure}
 \begin{subfigure}[]{0.49\linewidth}
  	\includegraphics[width=\linewidth]{ultimatum/learning_outcome_Ultimatum_5_0.png}
  	  	\caption{}
  	\label{fig:ult_play_high}
 \end{subfigure}
 \caption{(a) and (c): Consistency losses of COLA at different look-ahead rates. (b) and (d): Solutions on the Ultimatum game by COLA, LOLA, HOLA4, CGD, and SOS with a look-ahead rate of 0.2 and 5.0. The standard deviation for the initialization of parameters used here is 1.0.}
 \label{fig:ultimatum_appendix}
\end{figure}

\newpage

\subsection{Chicken game}
\label{appendix:chicken_game}

In the Chicken game, an agent can either choose to yield to avoid a catastrophic payoff but face a small punishment if they are the only agent to yield. Imagine a game where two agents drive towards each other in their cars. If both never swerve, they frontally crash into each other, an obviously catastrophic outcome. If any of the agents "chicken out", e.g. swerve, they do not crash but receive a small punishment for having chickened out. At the same time, the other agent is being rewarded for staying on track, as quantified in Table \ref{tab:chicken_payoff}.

\begin{table}[hbt!]
\centering
\caption{Payoff Matrix for the Chicken game.}
\begin{tabular}{l|c|c}\label{tab:chicken_payoff}
             & \multicolumn{1}{l|}{C (swerve)} & \multicolumn{1}{l}{D (straight)} \\ \hline
C (swerve)   & 0, 0                            & -1, +1                           \\ \hline
D (straight) & +1 ,-1                          & -100, -100                       \\ \hline
\end{tabular}
\end{table}

Next we report the consistency losses on the Chicken game in Table \ref{tab:chicken_cons}. We identify a look-ahead rate threshold where HOLA\textit{n}'s consistency loss becomes increasingly bigger with increasing order. For the Chicken game, this threshold is fairly low, between 0.01 and 0.05. Interestingly, we note that around a look-ahead rate of 0.05 and 0.1 it becomes harder to find a consistent solution for COLA.
\begin{table}[hbt!]
\centering
\caption{On the Chicken game: Over multiple look-ahead rates we compare (a) the consistency losses and (b) the cosine similarity between COLA and LOLA, HOLA2, and HOLA4. The values represent the mean of a 1,000 samples, uniformly sampled from the parameter space $\Theta$. The error bars represent one standard deviation and capture the variance over 10 different COLA training runs.}

\subfloat[]{
\begin{tabular}{l|l l l l l l}
\label{tab:chicken_cons}$\alpha$ & LOLA    & HOLA2   & HOLA4   & SOS     & CGD  & \multicolumn{1}{c}{COLA} \\ \hline
1.0      & 2429    & 3892    & 46637   & 1494    & 1677 & 0.01$\pm$0.01                \\ \hline
0.5      & 643     & 484     & 4320    & 475     & 2330 & 0.03$\pm$0.01                \\ \hline
0.1      & 11.99   & 7.69    & 73.28   & 2.73    & 8.46 & 0.70$\pm$0.10                \\ \hline
0.05     & 0.84    & 0.17    & 0.47    & 0.37    & 1.31 & 0.06$\pm$0.01                \\ \hline
0.01     & 9e-4 & 3e-6 & 6e-9 & 2e-4 & 0.04 & 5e-4$\pm$3e-4                \\ \hline
\end{tabular}}
\quad
\subfloat[]{
\begin{tabular}{l|l l l}
\label{tab:chicken_sim}$\alpha$ & \multicolumn{1}{c}{LOLA} & \multicolumn{1}{c}{HOLA2} & \multicolumn{1}{c}{HOLA4} \\ \hline
1.0                                           & 0.39$\pm$0.03             & 0.57$\pm$0.02              & 0.57$\pm$0.03             \\ \hline
0.5                                           & 0.54$\pm$0.04             & 0.62$\pm$0.03              & 0.64$\pm$0.04             \\ \hline
0.1                                           & 0.88$\pm$0.03             & 0.88$\pm$0.03              & 0.86$\pm$0.03             \\ \hline
0.05                                          & 0.93$\pm$0.03             & 0.93$\pm$0.03              & 0.93$\pm$0.03             \\ \hline
0.01                                          & 0.96$\pm$0.02             & 0.96$\pm$0.02              & 0.96$\pm$0.01             \\ \hline
\end{tabular}}
\end{table}

We also perform a qualitative comparison on the gradient fields of COLA and HOLA4 on the Chicken game (see Figure \ref{fig:chicken_grad_fields}). The difference at high look-ahead rates is pronounced, as HOLA4 shows high variance around the origin. Nonetheless, the gradient field leads to swerving in the actual game (see Figure \ref{fig:chicken_play_high}, which is a preferable outcome. Moreover, HOLA4 appears to converge to Swerving consistently despite the chaotic gradient field, showcasing that the behaviour in the game is not necessarily correlated with the qualitative analysis of the gradient fields.
\begin{figure}[hbt!]
\begin{subfigure}{.49\textwidth}
  \centering
  \includegraphics[width=.99\linewidth]{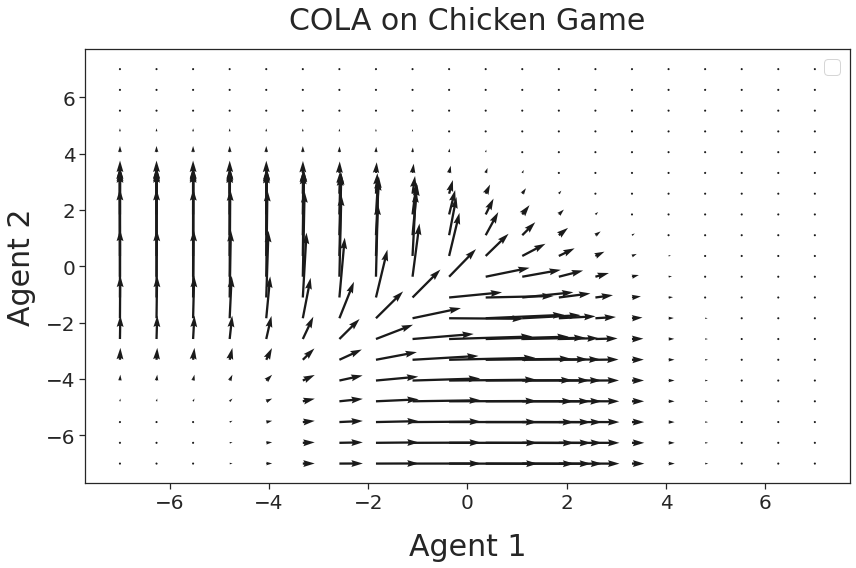}
  \caption{}
  \label{fig:chicken_grad_field1}
\end{subfigure}
\begin{subfigure}{.49\textwidth}
  \centering
  \includegraphics[width=.99\linewidth]{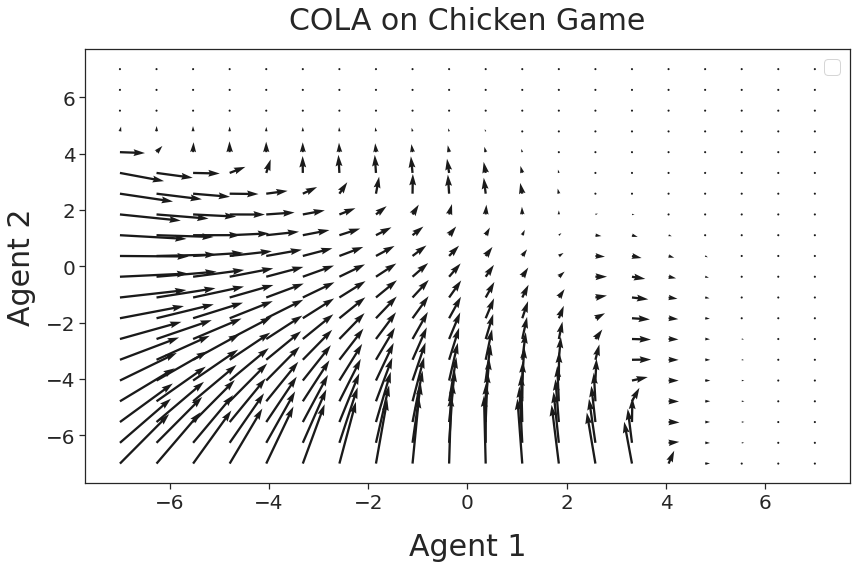}
  \caption{}
  \label{fig:chicken_grad_field2}
\end{subfigure}
\begin{subfigure}{.49\textwidth}
  \centering
  \includegraphics[width=.99\linewidth]{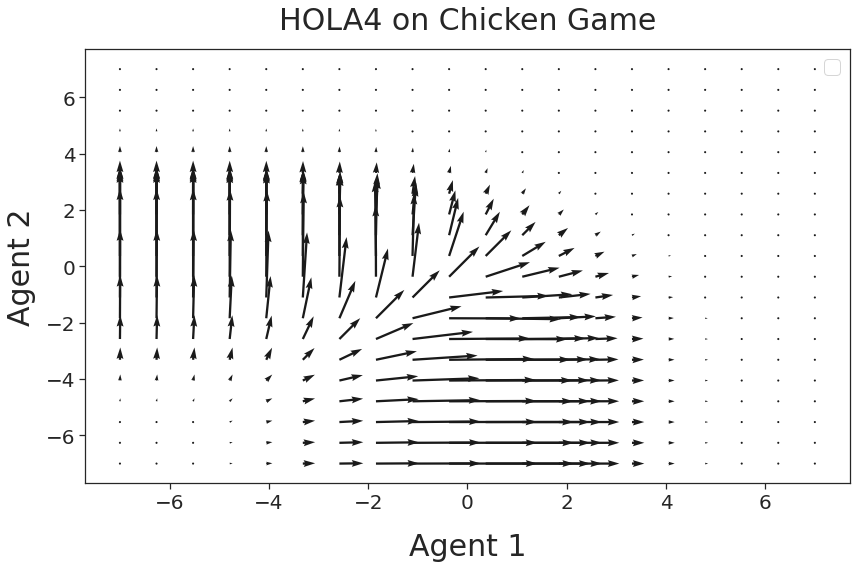}
  \caption{}
  \label{fig:chicken_grad_field3}
\end{subfigure}
\begin{subfigure}{.49\textwidth}
  \centering
  \includegraphics[width=.99\linewidth]{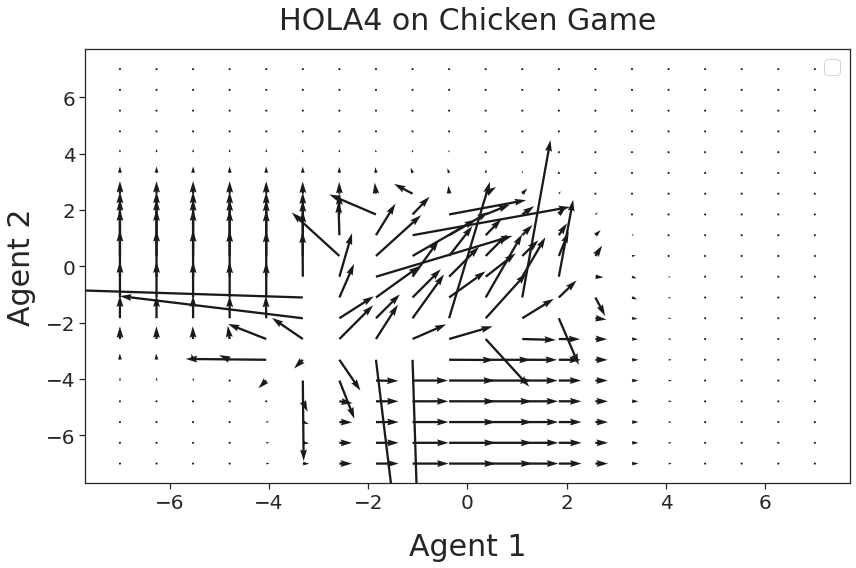}
  \caption{}
  \label{fig:chicken_grad_field4}
\end{subfigure}
\caption{Gradients field of the Chicken game for COLA and HOLA4 at two different look-ahead rates, 0.01 (LHS) and 1.0. (RHS)}
\label{fig:chicken_grad_fields}
\end{figure}

\begin{figure*}[hbt!]
 \centering
 \begin{subfigure}[]{0.49\linewidth}
	\includegraphics[width=\linewidth]{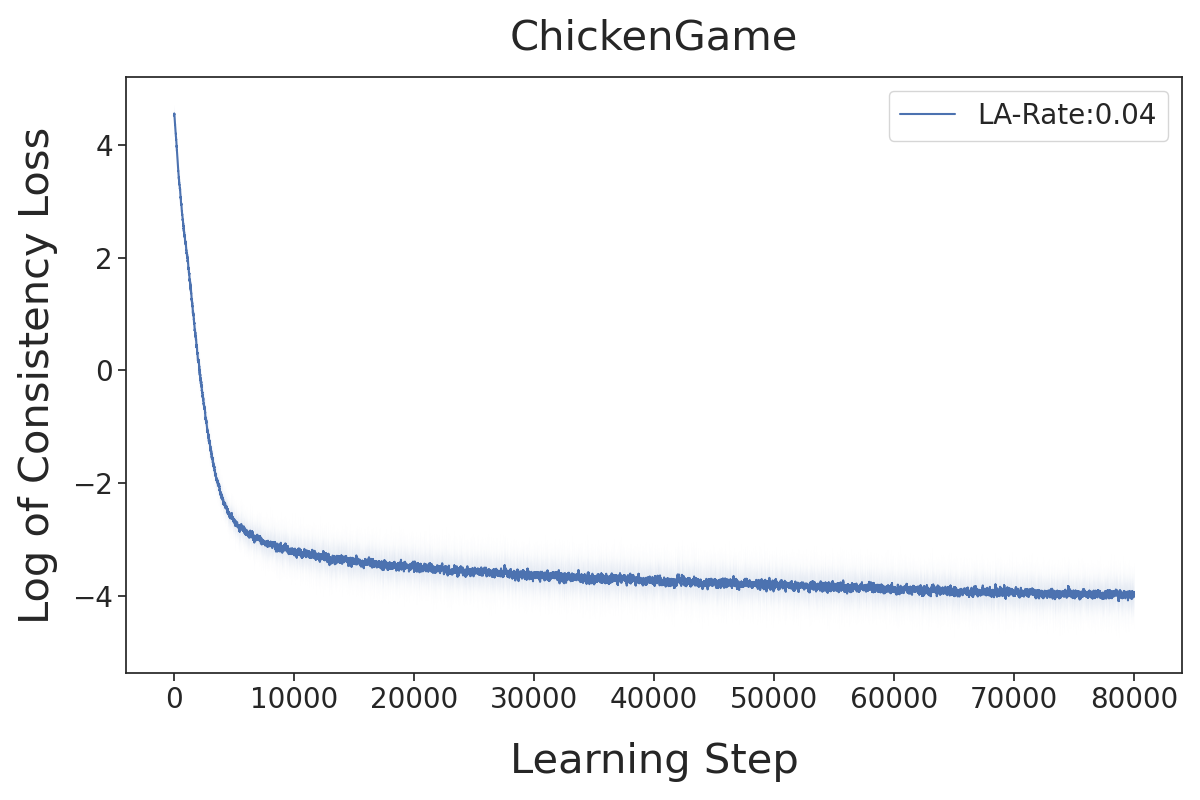}
	\caption{}
	\label{fig:chicken_cons_low}
 \end{subfigure}
 \begin{subfigure}[]{0.49\linewidth}
	\includegraphics[width=\linewidth]{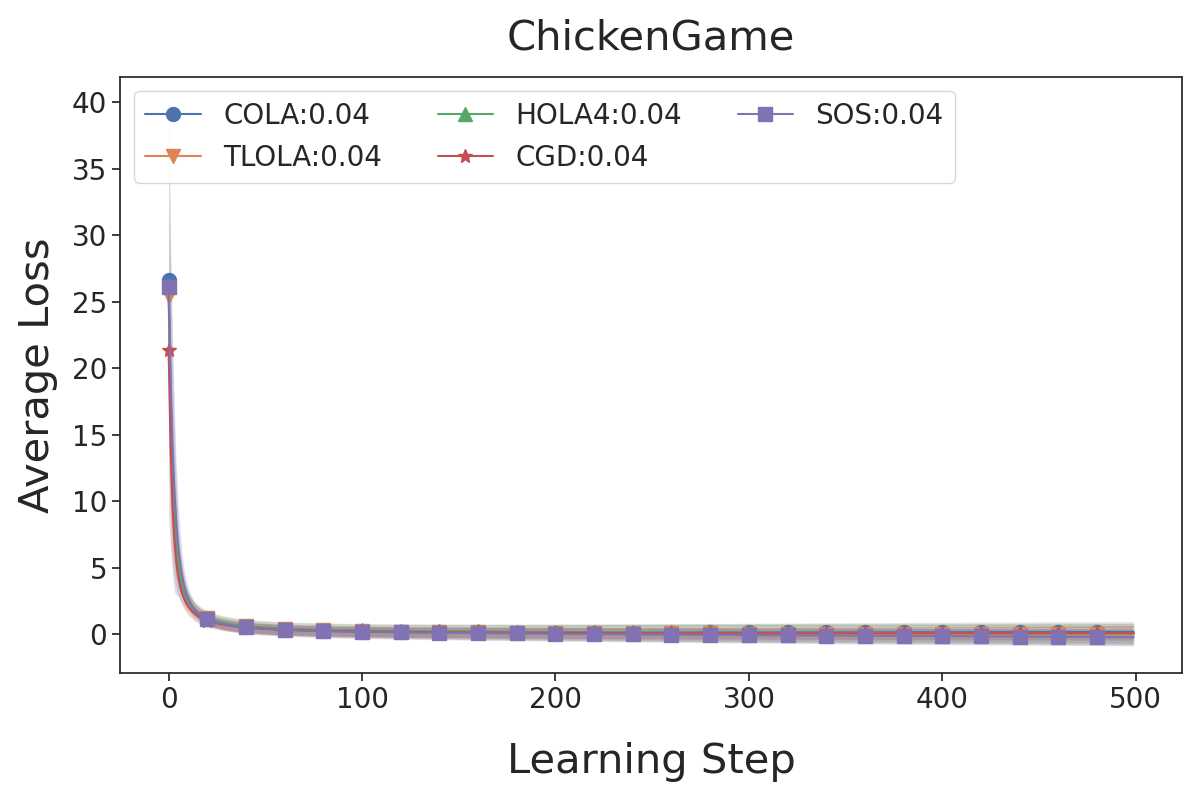}
	\caption{}
	\label{fig:chicken_play_low}
 \end{subfigure}
\begin{subfigure}[]{0.49\linewidth}
  	\includegraphics[width=\linewidth]{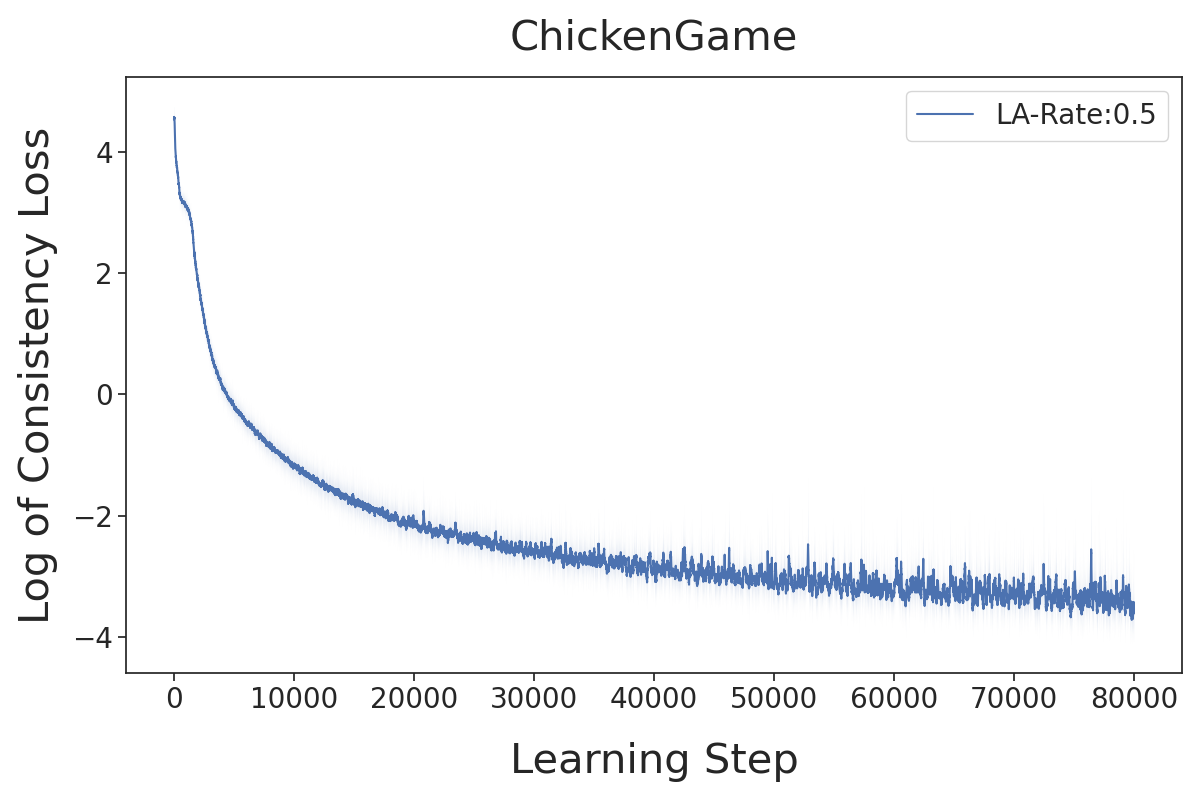}
  	  	\caption{}
  	\label{fig:chicken_cons_high}
 \end{subfigure}
\begin{subfigure}[]{0.49\linewidth}
  	\includegraphics[width=\linewidth]{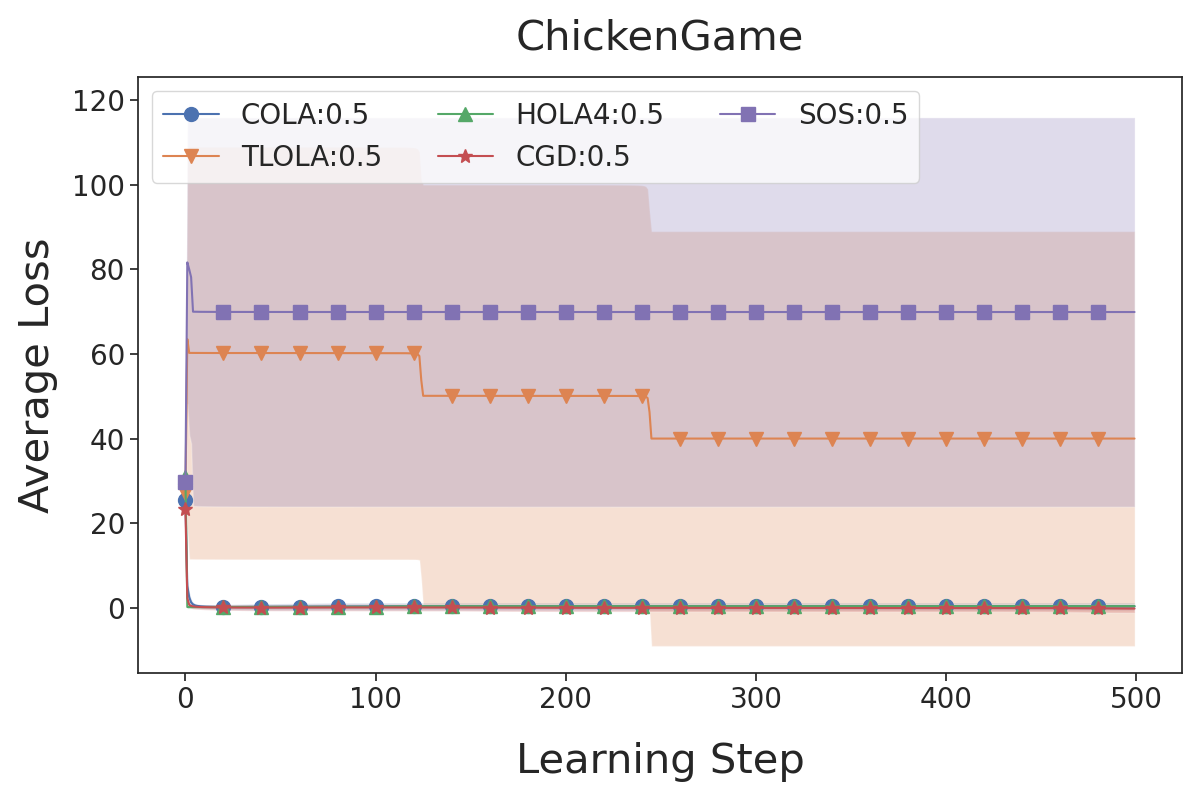}
  	  	\caption{}
  	\label{fig:chicken_play_high}
 \end{subfigure}
 \caption{(a) and (c): Consistency losses of COLA at different look-ahead rates. (b) and (d): Solutions on the Ultimatum game by COLA, TLOLA, HOLA4, CGD, and SOS with a look-ahead rate of 0.04 and 0.5. We used a standard deviation of 1.0 to initialize the parameters.}
 \label{fig:chicken_full}
\end{figure*}

\newpage

\subsection{IPD}
\label{appendix:ipd}
As a baseline comparison, we show the performance of different state-of-the-art algorithms in Figure \ref{fig:cgd_on_ipd}. CGD does not recover the tit-for-tat policy whereas exact LOLA, Taylor LOLA and SOS do.
\begin{table}[hbt!]
\centering
\caption{Payoff Matrix for the IPD game.}
\begin{tabular}{l|l|l}\label{tab:ipd}
     & C     & D     \\ \hline
C & (-1, -1) & (0, -3) \\ \hline
D & (0, -3) & (-2, -2) \\ \hline
\end{tabular}
\end{table}

\begin{figure}[hbt!]
  \centering
  \includegraphics[width=.99\linewidth]{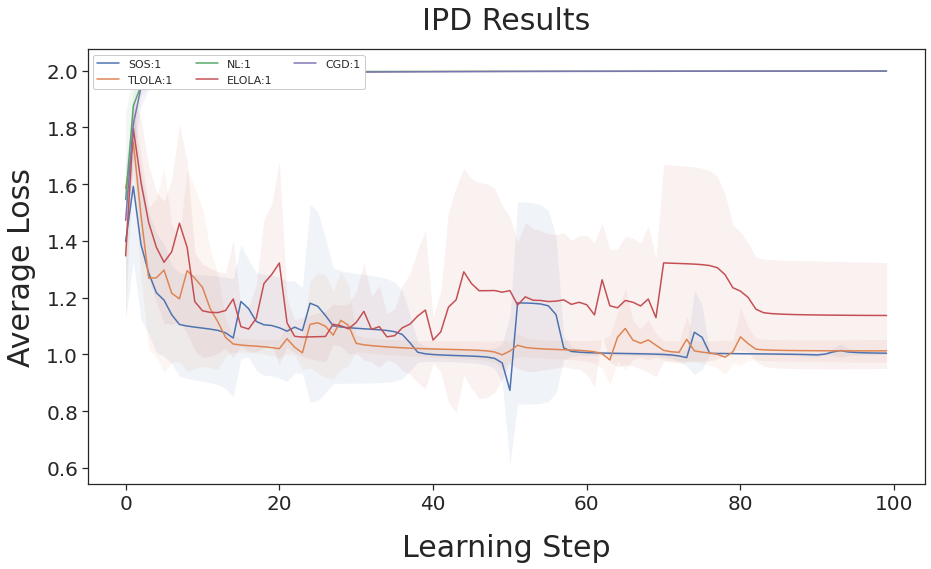}
\caption{CGD, SOS, Taylor LOLA (TLOLA), Exact LOLA (ELOLA) and Naive Learning (NL) on the IPD at a look-ahead rate of 1.0. We used a standard deviation of 1.0 to initialize the parameters.}
\label{fig:cgd_on_ipd}
\end{figure}

\end{document}